\documentclass{article}
\usepackage[preprint, nonatbib]{neurips_2026}
\usepackage{cite}
\usepackage[utf8]{inputenc} 
\usepackage[T1]{fontenc}    
\usepackage{url}            
\usepackage{booktabs}       
\usepackage{amsfonts}       
\usepackage{nicefrac}       
\usepackage{microtype}      
\usepackage{xcolor}         
\usepackage{pifont}
\usepackage[export]{adjustbox}
\usepackage{wrapfig}
\usepackage{graphicx}
\usepackage{colortbl}
\usepackage{amsmath}
\usepackage{amsthm}
\usepackage[table]{xcolor}
\usepackage{multirow}
\usepackage{bm}
\usepackage{array}
\usepackage{makecell}
\usepackage[algo2e,ruled]{algorithm2e}
\usepackage{algorithm}
\definecolor{theyellow}{RGB}{255,240,193}
\colorlet{myyellow}{theyellow!60}
\colorlet{lightyellow}{theyellow!30}
\definecolor{DarkBlue}{RGB}{0,0,128}
\definecolor{LightBlue}{RGB}{64,101,149}
\definecolor{cvprblue}{rgb}{0.21,0.49,0.74}
\definecolor{thegray}{RGB}{230,235,235}
\colorlet{mygray}{thegray!40}
\definecolor{theblue}{RGB}{193,230,255}
\colorlet{lightblue}{theblue!36}
\colorlet{myblue}{theblue!70}
\definecolor{DeepRed}{rgb}{0.8, 0, 0}
\definecolor{norblue}{HTML}{4C78A8}
\definecolor{aboorange}{HTML}{C98A4A}
\usepackage{threeparttable}
\newcolumntype{I}{!{\vrule width 0.8pt}}
\makeatletter
\newcommand{\thickhline}{%
    \noalign {\ifnum 0=`}\fi \hrule height 0.5pt
    \futurelet \reserved@a \@xhline
}
\makeatother
\definecolor{RedOrange}{rgb}{1.0, 0.27, 0.0}
\newcommand{\myred}[1]{$_{\color{RedOrange} \uparrow #1}$}
\newcommand{\mydown}[1]{$_{\color{RedOrange} \downarrow #1}$}

\definecolor{cvprblue}{rgb}{0.21,0.49,0.74}
\usepackage[pagebackref,breaklinks,colorlinks,citecolor=cvprblue]{hyperref}

\newtheorem{theorem}{Theorem}[section]

\newtheorem{corollary}[theorem]{Corollary}

\newtheorem{assumption}{Assumption}[section]
\newcommand{\eg}{\textit{e.g.}}
\newcommand{\ie}{\textit{i.e.}}

\title{Beyond Normal References: Discriminative Few-Shot Anomaly Detection}

\author{%
  Huan~Wang$^{1,2}$\thanks{This work was done while visiting at SMU (\texttt{huanw@smu.edu.sg}).},\;\; 
  Jun~Shen$^{2\dagger}$,\;\; Jun~Yan$^{2}$,\;\; Guansong~Pang$^{1}$\thanks{Corresponding Authors: Jun Shen, Guansong Pang (\texttt{gspang@smu.edu.sg}).}\\
  $^1$Singapore Management University, Singapore\\
  $^2$University of Wollongong, Australia
}

\begin{document}
\maketitle
\begin{abstract}
  This paper considers a practical few-shot anomaly detection (FSAD) setting, termed \textbf{discriminative FSAD}, where a limited number of both normal and anomalous examples are available as references during inference. 
  Existing FSAD methods rely on normal-only references through normality matching, ignoring the discriminative clues in anomalous references, while directly fitting both references can overfit to the seen anomalies. 
  We introduce \textbf{IDEAL}, an \textbf{i}ntrinsic \textbf{de}vi\textbf{a}tion \textbf{l}earning framework that leverages both reference types to learn intrinsic deviation patterns characterizing generalizable abnormality as deviations from normality. 
  IDEAL decomposes the learning process into two novel components: 
  1) a \textbf{Normal Variation Eraser} to suppress nuisance normal variations that may lead to noisy deviations from normality, thereby highlighting anomaly-relevant deviation representations; 
  2) an \textbf{Intrinsic Deviation Encoder} to decompose these denoised deviation representations into intrinsic deviation vectors capturing the most discriminative orthogonal deviation directions. 
  At inference, IDEAL scores query-to-normal deviations preserved after projection onto the learned intrinsic deviation vectors, enabling generalization for both seen and unseen anomalies. 
  Extensive experiments on eight real-world datasets show that IDEAL generalizes effectively to unseen anomalies and consistently outperforms existing state-of-the-art FSAD methods. Code and data will be available at \href{https://github.com/mala-lab/IDEAL}{https://github.com/mala-lab/IDEAL}.
\end{abstract}

\section{Introduction} \label{sec:myintro}
Anomaly detection (AD) \cite{pang2021deep,cao2024survey} aims to identify data points that significantly depart from expected patterns of normal data and has achieved a bright promise in many applications, such as industrial inspection \cite{wang2023multimodal,li2026iad} and medical image diagnosis \cite{zhang2020viral,fernando2021deep,huang2024adapting}. 
Conventional AD methods \cite{cohen2020sub,defard2021padim,roth2022towards,wang2019gods,yao2024hierarchical,yao2023one,you2022unified,he2024diffusion}, which adopt a full-shot setting, rely on access to a large collection of normal training samples. 
To alleviate this requirement, Few-Shot Anomaly Detection (FSAD), which assumes the availability of only a limited number of training samples, has emerged as a popular direction \cite{zhu2024toward,huang2022registration,li2024promptad,lv2025one,yao2024resad,zhai2026foundation,wang2022few,jeong2023winclip,lv2025metacan}. 
However, most existing FSAD methods still follow a specialist paradigm, where a dataset-specific training is required for each target data using few-shot normal samples \cite{huang2022registration,li2024promptad,xie2023pushing,fang2023fastrecon}. 
Although effective, this one-model-per-dataset paradigm requires substantial re-engineering and repeated re-training for each new deployment. 
This motivates the development of generalist FSAD \cite{zhu2024toward,yao2024resad,zhai2026foundation}, which aims to learn a single detector capable of generalizing across diverse datasets without specific retraining/tuning. 
It achieves this by leveraging few-shot normal samples from the target data as in-context normal references at test time.

While these normal references provide a profile of normality, they do not offer explicit discriminative evidence for abnormality \cite{pang2019deep,ding2022catching,yao2023explicit}. 
In practice, however, a limited number of anomalous examples can often be made available from historical records (\eg, images of confirmed tumor cases or defective products flagged by human inspectors). 
These anomalous examples encode valuable prior knowledge about anomalies in the target data, yet existing FSAD methods cannot leverage this information. 

Motivated by this gap, we consider a practical FSAD setting, termed \textbf{discriminative FSAD}, where a limited number of both normal and anomalous examples are available as references at inference time. 
Despite their potential utility, the use of these few-shot anomalous samples as additional references introduces a fundamental challenge: the few-shot abnormal references are inherently sparse and often fail to illustrate the full spectrum of anomaly types due to the strong heterogeneity in their distributions \cite{pang2021deep}. 
Consequently, naively conditioning on such references can bias the detector toward seen anomaly patterns, leading to overfitting and degraded generalization to unseen anomalies. 
As illustrated in Fig.~\ref{fig:fig_pro} (\textbf{b}) and (\textbf{c}), we implement KNN similarity matching to examine the feasibility of directly using the abnormal references in FSAD. 
Direct matching of the input image to either anomalous-only references or both normal-anomalous references can produce noisy and spurious anomaly activations, due to some major differences of abnormality in the input image and anomalous reference image. 
This indicates that abnormal references should not be treated as matching templates to identify anomalies, because they cover only some specific anomalous types.

\begin{wrapfigure}{r}{0.48\linewidth}
    \centering
    \setlength{\abovecaptionskip}{1.2mm}
    \vspace{-4mm}
    \includegraphics[width=\linewidth]{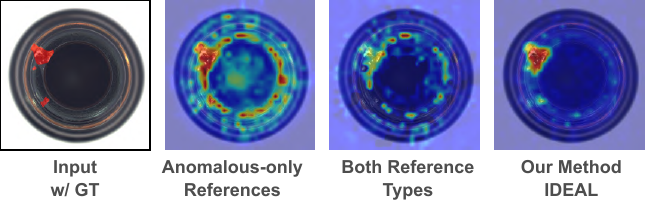}
    \caption{Anomaly score maps of \textbf{(a)} an input image using direct similarity matching of \textbf{(b)} anomalous-only references and \textbf{(c)} both anomalous and normal references, compared to \textbf{(d)} our method IDEAL under a 1-shot normal and anomalous reference setting. Two simple reference matching methods produce noisy and spurious anomaly responses, whereas IDEAL yields substantially cleaner activations. Note that, to illustrate the anomaly heterogeneity issue, the anomalous reference was intentionally chosen to be different from the abnormality in this input.}
\label{fig:fig_pro}
\vspace{-1mm}
\end{wrapfigure}

To address this issue, we propose \textbf{IDEAL}, an \textbf{i}ntrinsic \textbf{de}vi\textbf{a}tion \textbf{l}earning framework for discriminative FSAD, which leverages both normal and abnormal references to learn intrinsic deviation patterns that characterize generalizable abnormality as deviations from normality, thereby supporting the detection of both seen and unseen anomalies. 
The key insight of IDEAL is to learn a set of \textit{intrinsic deviation patterns} that capture discriminative, relative deviations from normality shared by both seen and unseen anomalies, even when unseen anomalies do not resemble the seen anomalies. 
Specifically, IDEAL decomposes the learning process into two novel components: 
1) a \textbf{Normal Variation Eraser (NVE)}, which suppresses nuisance normal variations (\textit{e.g.}, illumination/texture shifts) that may lead to noisy deviations from normality, thereby highlighting anomaly-relevant deviation representations; and 2) an \textbf{Intrinsic Deviation Encoder (IDE)}, which learns to decompose these denoised deviations into intrinsic deviation vectors capturing the most discriminative orthogonal deviation directions. 
This allows IDEAL to identify anomalies by measuring how well each query-to-normal deviation is preserved after projection onto these learned intrinsic deviation vectors, rather than by matching the query to the limited references, enabling effective generalization to unseen anomalies (Fig.~\ref{fig:fig_pro} (\textbf{d})). Main contributions are:
\begin{itemize}
    \item We formulate discriminative FSAD, a practical extension of conventional FSAD, where limited normal and anomalous samples are both available as in-context references, aiming to exploit the discriminative clues of abnormal references relative to normal references.
    \item We propose an intrinsic deviation learning (IDEAL) framework for discriminative FSAD, which leverages both reference types to learn intrinsic deviation patterns characterizing generalizable abnormality as deviations from normality. 
    By measuring query-to-normal deviations preserved after projection onto these learned intrinsic deviation vectors, IDEAL effectively detects both seen and unseen anomalies.
    \item Comprehensive experiments on eight real-world datasets show the superiority of IDEAL in terms of performance, efficiency, and generalizability against state-of-the-art FSAD methods. Extensive ablation studies also validate the contribution of the two proposed components.
\end{itemize}

\section{Related Work}
\paragraph{Conventional Anomaly Detection.}
Anomaly detection approaches have been developed for diverse data and application scenarios \cite{pang2021deep,cao2024survey,chalapathy2019deep,samariya2023comprehensive}. 
In particular, one-class classification methods aim to learn a compact normality boundary from the given category using support vectors \cite{bergman2020classification,chen2022deep,Ruff2020Deep,yi2020patch}. 
Reconstruction-based methods \cite{akcay2018ganomaly,guo2025dinomaly,park2020learning,schlegl2019f,hou2021divide,liu2023diversity,luo2025exploring,damm2025anomalydino} capture normal patterns by reconstructing normal samples in feature space, with reconstruction discrepancies serving as indicators of abnormality. 
Distance-based methods detect anomalies by measuring the feature-level discrepancy between the query and the normal training distribution \cite{cohen2020sub,pang2018learning,roth2022towards,defard2021padim,tayeh2020distance}. 
Knowledge distillation methods \cite{bergmann2020uninformed,cao2023anomaly,salehi2021multiresolution,zhang2023destseg,deng2022anomaly} train a student network on normal samples to mimic the feature responses of a pre-trained teacher network, and detect anomalies using the resulting teacher-student discrepancies. 
Synthetic-based methods \cite{li2021cutpaste,zavrtanik2021draem,liu2023simplenet,zhang2024realnet} introduce anomaly-like supervision by corrupting normal samples or perturbing normal features, and then train the model to distinguish normal from synthetic abnormal patterns. 
Further, there have been supervised methods \cite{pang2019deep,ding2022catching,zhu2024anomaly,Ruff2020Deep,wang2025distribution} that explored the use of few-shot labeled anomalies to improve anomaly detection. 
However, these methods require large normal training data and dataset-specific training for each deployment, limiting their applicability in real-world scenarios where target data are difficult/costly to access \cite{alabdulatif2017privacy,mayer2020privacy}. 
\vspace{-1mm}
\paragraph{Few-Shot Anomaly Detection.}
Few-shot anomaly detection (FSAD) aims to reduce the dependence on full-shot normal training data by using a limited number of training samples. 
Existing FSAD methods are mainly divided into two aspects: specialist FSAD and generalist FSAD. 
Specialist FSAD \cite{huang2022registration,li2024promptad,wang2022few,xie2023pushing,fang2023fastrecon,liao2024coft,tian2024foct,tao2025kernel} focuses on adapting a detector for each category/dataset using few available normal samples. For example, RegAD \cite{huang2022registration} learns normal representations via a registration proxy task and detects anomalies by comparing registered query-support features. 
However, these methods still rely on category- or dataset-specific fitting, leading to a specialist one-model-per-dataset paradigm with limited scalability. 
To overcome this limitation, generalist FSAD \cite{zhu2024toward,yao2024resad,zhai2026foundation,lv2025one,jeong2023winclip,lv2025metacan,dong2026dual} aims to learn a single detector that can generalize across diverse datasets without target-specific retraining. InCTRL \cite{zhu2024toward} performs in-context residual learning with few-shot normal sample prompts and ResAD \cite{yao2024resad} applies residual features to eliminate domain dependencies, to achieve generalizable detection. 
From the perspective of using references, NAGL \cite{wang2025normal} extends FSAD by introducing anomalous references and performing residual mining and learning to detect anomalies. 
In addition, many zero-shot AD methods \cite{jeong2023winclip,zhou2023anomalyclip,zhu2024fine,li2023zeroshot,xu2025towards,ma2025aaclip,gu2024filo} have been introduced recently, sharing the spirit of adapting large pre-trained models to detect anomalies, but they deal with a fundamentally different scenario where no reference samples are available.

\section{Problem Formulation}
\paragraph{Generalist FSAD.}
According to \cite{zhu2024toward}, given a set of training data $\mathcal{D}_{tr}=\{\mathcal{X}, \mathcal{Y}\}$, where $\mathcal{X}$ consists of several normal and abnormal images and $\mathcal{Y}$ denotes corresponding labels. At inference, $x$ is a test query from a target class, accompanied by few-shot normal references $\mathcal{S}^{n}$. The goal is to train a detection model on $\mathcal{D}_{tr}$ that can generalize to detect anomalies of $x$ with $\mathcal{S}^{n}$.
\vspace{-1mm}
\paragraph{Discriminative FSAD.}
Given a set of auxiliary training data $\mathcal{D}_{tr}=\{\mathcal{X}, \mathcal{Y}\}$, where $\mathcal{X}=\{\mathbf{x}_z\}_{z=1}^{Z}$ consists of $Z$ normal and abnormal samples, $\mathcal{Y}=\{\mathbf{y}_z\}_{z=1}^{Z}$ as the corresponding labels ($\mathbf{y}_z=0$ indicates normal, $\mathbf{y}_z=1$ signifies abnormal),
during \textbf{training}, we first randomly sample a few-shot reference set $\mathcal{S}=\{\mathcal{S}^{n}, \mathcal{S}^{a}\}$ from $\mathcal{D}_{tr}$ for query input $\mathbf{x}^{q} \in \mathcal{X}$, where $\mathcal{S}^{n}=\{\mathbf{s}_{l}^{n}\}_{l=1}^{L_{1}}$ and $\mathcal{S}^{a}=\{\mathbf{s}_{l}^{a}\}_{l=1}^{L_{2}}$ represent $L_{1}$-shot normal and $L_{2}$-shot abnormal references, respectively; 
we then train a detection model $\Phi:\mathcal{D}_{tr} \rightarrow \mathbb{R}$ based on different input \textbf{episodes} $\mathbf{E}=\{\mathbf{x}^{q}, \mathcal{S}^{n}, \mathcal{S}^{a}\}$. 
At \textbf{inference}, the model $\Phi$ is directly evaluated on the target dataset without any further tuning/retraining. 
Note that to replicate real-world applications where reference examples are the same to all test samples, for any new dataset $c$, the few-shot reference set $\mathcal{S}_c=\{\mathcal{S}^{n}_c, \mathcal{S}^{a}_c\}$ is fixed, where $\mathcal{S}^{a}_c$ is drawn from one randomly selected anomaly type and $\mathcal{S}^{n}_c$ consists of the few-few normal references sampled from the normal class. 
This setup differs from \cite{wang2025normal}, which dynamically resamples the abnormal references for each test query according to its anomaly type, which may leak information about the anomaly type. We set $L_{1} > L_{2}$, motivated by the fact that anomalies are rare in real-world applications.

Further, for each episode $\mathbf{E}=\{\mathbf{x}^{q}, \mathcal{S}^{n}, \mathcal{S}^{a}\}$, including a query image $\mathbf{x}^{q} \in \mathbb{R}^{H \times W \times 3}$ with a label $\mathbf{y}^{q}$ and a ground-truth mask $\mathcal{M}^{q}$ (each normal image is equipped with a zero mask), $L_{1}$-shot normal references $\mathcal{S}^{n}$ and $L_{2}$-shot abnormal references $\mathcal{S}^{a}$. We use a pre-trained encoder $\mathcal{E}(\cdot)$ to extract query features $\mathcal{F}^{q}=\mathcal{E}(\mathbf{x}^{q})=\{f_{i}^{q}\}_{i=1}^{N}$, normal features $\mathcal{F}^{n}=\mathcal{E}(\mathcal{S}^{n})=\{f_{i}^{n}\}_{i=1}^{L_{1}\cdot N}$, and abnormal features $\mathcal{F}^{a}=\mathcal{E}(\mathcal{S}^{a})=\{f_{i}^{a}\}_{i=1}^{L_{2}\cdot N}$, where $f_{i}^{q}, f_{i}^{n}, f_{i}^{a} \in \mathbb{R}^{\mathbb{C}}$, $N$ is the number of patches, $\mathbb{C}$ is the dimension. For $\mathcal{S}^{a}$, we down-sample its anomaly masks to match $\mathcal{F}^{a}$, yielding $\mathcal{M}^{a} \in \{0,1\}^{L_{2} \cdot N}$.

\begin{figure*}[t]
    \centering
    \includegraphics[width=0.88\textwidth]{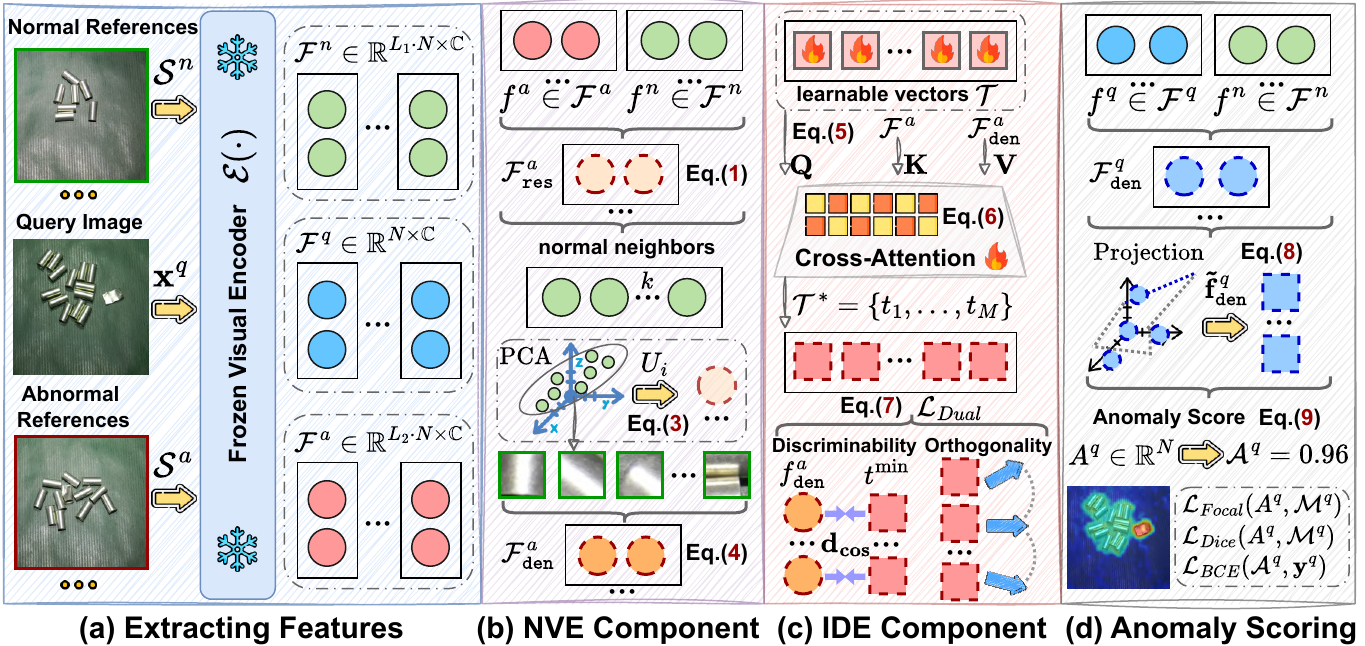}
    \caption{Overview of IDEAL (Algorithm~\ref{alg:ours}). 
    \textbf{(a)} Given each input $\mathbf{E}=\{\mathbf{x}^{q}, \mathcal{S}^{n}, \mathcal{S}^{a}\}$, we employ a pre-trained encoder $\mathcal{E}(\cdot)$ to extract features. We then introduce \textbf{(b)} a Normal Variation Eraser (Sec.~\ref{sec:nve}) and \textbf{(c)} an Intrinsic Deviation Encoder (Sec.~\ref{sec:ace}) to learn intrinsic deviation vectors from both references. 
    Finally, \textbf{(d)} anomalies are detected by measuring query-to-normal deviations preserved after projection onto these learned intrinsic deviation vectors (Sec.~\ref{sec:score}).}
    \label{fig:fig3}
\end{figure*}

\section{IDEAL: Intrinsic Deviation Learning for Discriminative FSAD}
\noindent\textbf{Overview of Our Proposed IDEAL Framework.} 
To effectively leverage the discriminative clues of abnormal references relative to normal references, IDEAL learns intrinsic deviation patterns characterizing generalizable abnormality as deviations from normality, to detect both seen and unseen anomalies. 
As shown in Fig.~\ref{fig:fig3}, given a query sample accompanied by few-shot normal and abnormal references, IDEAL first extracts patch-level features via a pre-trained visual encoder. 
Then, IDEAL proceeds with two key components: 
1) NVE (Sec.~\ref{sec:nve}), which suppresses nuisance normal variations that may lead to noisy deviations from normality to highlight anomaly-relevant deviation representations; 
2) IDE (Sec.~\ref{sec:ace}), which learns to decompose these denoised deviation representations into intrinsic deviation vectors capturing the most discriminative orthogonal deviation directions. 
Finally, the anomaly score is obtained by measuring query-to-normal deviations preserved after projection onto the learned intrinsic deviation vectors (Sec.~\ref{sec:score}).

\subsection{Normal Variation Eraser (NVE)} \label{sec:nve}
The key intuition of NVE is to isolate anomaly-relevant deviation representations from irrelevant/noisy deviations by eliminating nuisance normal variations that may lead to these noisy deviations. 
Hence, NVE operates in two steps: 1) it extracts feature residual-based deviations by contrasting abnormal and normal references, canceling out class-specific content; 2) it suppresses nuisance normal variations embedded in the residual deviations, producing denoised deviations that more faithfully reflect anomalous departures from normality.
\paragraph{Residual Deviations.}
For each patch-level abnormal feature $f_{i}^{a}$ of $\mathcal{F}^{a}$, we can search the nearest normal reference feature $f_{i,\min}^{n} := \mathtt{argmin}_{f^{n} \in \mathcal{F}^{n}} \mathbf{d_{cos}}(f^{n}, f_{i}^{a})$, where $\mathbf{d_{cos}}(\cdot,\cdot)$ denotes the cosine function and $i \in \{1, 2, \ldots, L_{2} \cdot N\}$. 
We define residual deviations between $\mathcal{F}^{a}$ and $\mathcal{F}^{n}$ as:
\begin{equation}
    \label{eq:eq1}
    \mathcal{F}^{a}_{\mathtt{res}} = \{f_{i}^{a} - \mathtt{argmin}_{f^{n} \in \mathcal{F}^{n}} \mathbf{d_{cos}}(f^{n}, f_{i}^{a})\}_{i=1}^{L_{2} \cdot N} \in \mathbb{R}^{L_{2} \cdot N \times \mathbb{C}}.
\end{equation}
\paragraph{Denoised Deviations.}
Although the residual deviations $\mathcal{F}^{a}_{\mathtt{res}}$ reveal departures from normality, they may still contain nuisance normal variations that may lead to noisy deviations from normality. 
To estimate those normal variation around each abnormal feature $f_{i}^{a}$ of $\mathcal{F}^{a}$, we first retrieve its top-$k$ nearest normal neighbors $\mathcal{F}^{n}_{i,k}$ from the normal features $\mathcal{F}^{n}$ as follows:
\begin{equation}
    \label{eq:eq2}
    \mathcal{F}^{n}_{i,k} = \mathtt{argsort}_{f^{n} \in \mathcal{F}^{n}}(\mathbf{d_{cos}}(f^{n}, f_{i}^{a}), k) = \{f_{i,j}^{n}\}_{j=1}^{k} \in \mathbb{R}^{k \times \mathbb{C}}.
\end{equation}
Based on $\mathcal{F}^{n}_{i,k}$, we can calculate the empirical mean $\mu_{i} \in \mathbb{R}^{\mathbb{C}}$ and covariance matrix $\Sigma_{i} \in \mathbb{R}^{\mathbb{C} \times \mathbb{C}}$. 
We then estimate a local normal variation subspace $\mathcal{R}_{i}$ from the centered neighbors $\{f_{i,j}^{n}-\mu_{i}\}_{j=1}^{k}$ using Principal Component Analysis (PCA) \cite{abdi2010principal,tipping1999prob}, which yields an orthonormal basis $U_{i}$ for $\mathcal{R}_{i}$. Accordingly, each local normal neighbor $f_{i,j}^{n}$ is approximated as:
\begin{equation}
    \label{eq:eq3}
    f_{i,j}^{n}=\mu_{i} + U_{i} v_{i,j} + \epsilon_{i,j}, \;\; \textnormal{s.t.} \;\; v_{i,j} \sim \mathcal{N}(0, \mathbf{I}_{r}), \; \epsilon_{i,j} \sim \mathcal{N}(0, \sigma^{2} \mathbf{I}_{\mathbb{C}}),
\end{equation}
where $U_{i} \in \mathbb{R}^{\mathbb{C} \times r}$ contains top-$r$ eigenvectors of $\Sigma_{i}$, $v_{i,j} \in \mathbb{R}^{r}$ is a latent variable, $\epsilon_{i,j}$ is an isotropic noise term, $\mathbf{I}_{r}$ is a $r \times r$ identity matrix, and $\mathbf{I}_{\mathbb{C}}$ is a $\mathbb{C} \times \mathbb{C}$ identity matrix. 
The columns of $U_{i} \in \mathbb{R}^{\mathbb{C} \times r}$ form an orthonormal basis for the local normal variation subspace $\mathcal{R}_{i}$, where $r$ denotes the number of retained principal components in $\mathcal{R}_{i}$ and typically satisfies $r < \min(k-1, \mathbb{C})$. 
Further, for each residual deviation $f_{i,\mathtt{res}}^{a} = (f_{i}^{a} - f_{i,\min}^{n}) \in \mathcal{F}^{a}_{\mathtt{res}}$ in Eq.~\eqref{eq:eq1}, we suppress its noisy component aligned with the local normal variation subspace $\mathcal{R}_{i}$, obtaining the denoised deviations:
\begin{equation}
    \label{eq:eq4}
    \mathcal{F}^{a}_{\mathtt{den}}=\{ f_{i,\mathtt{res}}^{a} - \alpha U_{i} U_{i}^{\top} f_{i,\mathtt{res}}^{a} \}_{i=1}^{L_{2} \cdot N}=\{f_{i,\mathtt{den}}^{a}\}_{i=1}^{L_{2} \cdot N} \in \mathbb{R}^{L_{2} \cdot N \times \mathbb{C}},
\end{equation}
where $\alpha \in (0,1]$ is used to controls the degree of elimination. 
In this way, the denoised deviations $\mathcal{F}^{a}_{\mathtt{den}} \in \mathbb{R}^{L_{2} \cdot N \times \mathbb{C}}$ suppress noisy components aligned with the local normal variation subspace while preserving anomaly-relevant components that are less explainable by normal variations.

\subsection{Intrinsic Deviation Encoder (IDE)} \label{sec:ace}
While denoised deviations highlight the anomalous departures from normality, directly employing similarity matching to compare query-to-normal deviations with these denoised deviations remains fragile, as it still depends on the limited abnormal references. 
To address this limitation, we introduce an Intrinsic Deviation Encoder (IDE) to learn intrinsic deviation vectors decomposed from these denoised deviation representations across different input episodes, thereby capturing the most discriminative orthogonal deviation directions shared by different anomalies. 
Specifically, we employ a cross-attention \cite{vaswani2017attention} mechanism to capture such intrinsic deviation vectors, where the three matrices $\mathbf{Q}$, $\mathbf{K}$, and $\mathbf{V}$ in the attention are defined as:
\begin{equation}
    \label{eq:eq5}
    \mathbf{Q} =\omega^{Q}(\mathcal{T}) \in \mathbb{R}^{M \times \mathbb{C}}, \; 
    \mathbf{K} =\omega^{K}(\mathcal{F}^{a}) \in \mathbb{R}^{L_{2} \cdot N \times \mathbb{C}}, \; 
    \mathbf{V} =\omega^{V}(\mathcal{F}^{a}_{\mathtt{den}}) \in \mathbb{R}^{L_{2} \cdot N \times \mathbb{C}},
\end{equation}
where $\mathcal{T} \in \mathbb{R}^{M \times \mathbb{C}}$ consists of $M$ learnable vectors, $\{\omega^{Q}, \omega^{K}, \omega^{V}\}$ transform the input features into query, key, and value vectors. 
Then, IDE learns intrinsic deviation vectors $\mathcal{T}^{*}$ by:
\begin{equation}
    \label{eq:eq6}
    \mathcal{T}^{*} = \{t_{m}\}_{1 \le m \le M} = 
    \mathtt{FFN} (\mathtt{Softmax}(\mathbf{Q} \mathbf{K}^{\top} / \sqrt{\mathbb{C}} + 
    \gamma (1 - \mathcal{M}^{a})) \mathbf{V}),
\end{equation}
where $\mathcal{M}^{a} \in \{0,1\}^{L_{2} \cdot N}$ is the patch-level binary anomaly masks aligned with $\mathcal{F}^{a}$, $\gamma \to -\infty$ is used to ensure that the attention weights of normal patches in $\mathcal{F}^{a}$ are close to zero, and $\mathtt{FFN}(\cdot)$ represents a feed forward network. 
$\mathcal{T}^{*} \in \mathbb{R}^{M \times \mathbb{C}}$ represents $M$ intrinsic deviation vectors of abnormality relative to normality. For brevity, we omit the residual-path operations in Eq.~\eqref{eq:eq6}.

On the one hand, we expect to narrow the distance between each individual element in $\mathcal{F}^{a}_{\mathtt{den}}$ and the corresponding nearest $t^{\min}$ in $\mathcal{T}^{*}$ to enhance anomaly-aware discriminative ability. On the other hand, we emphasize mutual orthogonality among these intrinsic deviation vectors $\mathcal{T}^{*}$ to reduce redundancy. 
Therefore, we encourage the learning of $\mathcal{T}^{*}$ by:
\begin{equation}
\label{eq:eq7}
{\small
    \mathcal{L}_{Dual} = \lambda_{1} \underbrace{\frac{1}{\sum_{i=1}^{L_{2} \cdot N} \mathcal{M}_{i}^{a}} \sum_{i=1}^{L_{2} \cdot N} \mathcal{M}_{i}^{a} \mathbf{d_{cos}}(f_{i,\mathtt{den}}^{a}, t_{i}^{\min})}_{\textnormal{Discriminability}} + 
    \lambda_{2} \underbrace{\frac{1}{M(M-1)} \sum_{m_{1} \neq m_{2}} (\frac{t_{m_{1}} \cdot t_{m_{2}}}{\|t_{m_{1}}\| \times \|t_{m_{2}}\|})^{2}}_{\textnormal{Orthogonality}},
}
\end{equation}
where $\mathcal{M}_{i}^{a} \in \{0,1\}$ is the corresponding mask value in $\mathcal{M}^{a}$, $t_{i}^{\min} := \mathtt{argmin}_{t_{m} \in \mathcal{T}^{*}} \mathbf{d_{cos}}(t_{m}, f_{i,\mathtt{den}}^{a})$ represents the nearest deviation pattern to the $f_{i,\mathtt{den}}^{a}$, $\lambda_{1}$ and $\lambda_{2}$ are hyper-parameters used to balance these loss terms. 
$\mathcal{L}_{Dual}$ promotes the learning of the intrinsic deviation vectors $\mathcal{T}^{*}$ that capture most discriminative orthogonal deviation directions from normality while maintaining mutual diversity, thereby improving the generalizable ability for detecting diverse abnormal patterns.

\subsection{Anomaly Scoring with Projection} \label{sec:score}
For anomaly scoring, we first extract the denoised deviations for each query $\mathbf{x}^{q}$: $\mathcal{F}^{q}_{\mathtt{den}}=\{f^{q}_{i,\mathtt{den}}\}_{i=1}^{N} \in \mathbb{R}^{N \times \mathbb{C}}$. Next, we project these denoised deviations $f^{q}_{i,\mathtt{den}}$ onto the learned intrinsic deviation vectors:
\begin{equation}
    \label{eq:eq8}
    \mathbf{\tilde{f}}^{q}_{i,\mathtt{den}} = \mathtt{Proj}(f^{q}_{i,\mathtt{den}}) := \sum_{m=1}^{M} 
    (\langle f^{q}_{i,\mathtt{den}}, t_m \rangle / \langle t_m, t_m \rangle) t_m,
\end{equation}
where $\langle \cdot, \cdot \rangle$ denotes the inner product operation. 
This projection retains only the relevant attributes of anomalous deviations, as irrelevant attributes are nearly orthogonal to the learned intrinsic deviation vectors. 
Finally, the anomaly score for each patch in the query $\mathbf{x}^{q}$ is treated as a complementary score of normal matching and deviation alignment, which can be calculated by:
\begin{equation}
    \label{eq:eq9}
    A_{i}^{q} = \frac{1}{2} (1 - \mathbf{d_{cos}} (f^{q}_{i,\mathtt{den}}, \mathbf{\tilde{f}}^{q}_{i,\mathtt{den}}) + \mathbf{d_{cos}} (f_{i}^{q}, f_{i,\min}^{n})) 
    \; \Rightarrow \; A^{q} = \{A_{i}^{q}\}_{i=1}^{N} \in \mathbb{R}^{N},
\end{equation}
where $A_{i}^{q}=0$ indicates the most normal region and $A_{i}^{q}=1$ indicates the most anomalous region, and $f_{i,\min}^{n}$ is the nearest normal reference feature to $f_{i}^{q}$. 
Following \cite{hwang2024anomaly}, we obtain the final image-level anomaly score $\mathcal{A}^{q}$ by averaging the top $1\%$ highest anomaly patch scores in the $A^{q}$.

\subsection{Training and Inference}
\paragraph{Training on Auxiliary Datasets.}
IDEAL is trained on auxiliary data $\mathcal{D}_{tr}$ in an episodic manner. For each training episode $\mathbf{E}=\{\mathbf{x}^{q}, \mathcal{S}^{n}, \mathcal{S}^{a}\}$ from $\mathcal{D}_{tr}$, we optimize IDEAL with two objectives: 1) enforcing consistency between the predicted score map $A^{q}$ in Eq.~\eqref{eq:eq9} and the query's ground-truth mask $\mathcal{M}^{q}$; and 2) supervising the image-level anomaly score $\mathcal{A}^{q}$ with the query's label $\mathbf{y}^{q}$. 
Therefore, we employ Focal loss \cite{lin2017focal} and Dice loss \cite{milletari2016v} to optimize the segmentation branch, while using BCE (Binary Cross-Entropy) loss \cite{mannor2005cross} to supervise the classification branch, formulated as:
\begin{equation}
    \label{eq:eq10}
    \mathcal{L}_{train} = \mathcal{L}_{Focal}(A^{q},\mathcal{M}^{q})+\mathcal{L}_{Dice}(A^{q},\mathcal{M}^{q})+
    \mathcal{L}_{BCE}(\mathcal{A}^{q},\mathbf{y}^{q})+\mathcal{L}_{Dual}.
\end{equation}
\paragraph{Inference.}
During inference, for any new dataset $c$, we assume a fixed reference set $\mathcal{S}_c=\{\mathcal{S}^{n}_c, \mathcal{S}^{a}_c\}$, where $\mathcal{S}^{a}_c$ is drawn from one randomly selected anomaly type and $\mathcal{S}^{n}_c$ consists of a few normal images randomly sampled from the normal class. The resulting reference set $\mathcal{S}_c$ is then fixed and used in the anomaly scoring of all test samples. 
Given a test query $x$, IDEAL yields a deviation-guided anomaly score map $A^{x}$ based on the input $\{x, \mathcal{S}^{n}_c,\mathcal{S}^{a}_c\}$. Finally, $A^{x}$ is upsampled to the original image resolution $\mathbb{R}^{H \times W}$ for anomaly localization.

\subsection{Generalization Bound}
We provide insights into the source-target generalization analysis of IDEAL. A detailed description and derivations of this analysis can be found in Appendix~\ref{sec:bound}.
\begin{theorem}[Generalization Bound of IDEAL] \label{main_bound}
    Consider there are $K$ labeled source domains. Let $\mathbb{P}_{S_1},\ldots,\mathbb{P}_{S_K}$ be the corresponding source episode distributions, and $\mathbb{P}_{T}$ be the target episode distribution. Denote the model $\Phi$ as the hypothesis from $\mathcal{H}$ and $d$ be the VC-dimension of $\mathcal{H}$. The total number of episodes over all $K$ source domains is $n$. 
    Then, for any source weighting vector $\beta=(\beta_1,\ldots,\beta_K) \in \Delta_K$, with probability at least $1-\delta$, the following holds for any $\Phi \in \mathcal{H}$,
    \begin{equation}
        \label{eq:eq11}
        \epsilon_{T}(\Phi) \le \hat{\epsilon}_{\beta}(\Phi) + 
        \sqrt{\frac{1}{2} \left( \sum_{k=1}^{K} \frac{\beta_k^2}{n_k} \right) \log \frac{M}{\delta}} + 
        2\sum_{k=1}^{K} \beta_k \sqrt{\frac{d}{n_k} \log \frac{e n_k}{d}} + 
        \mathrm{D}_{\mathcal{H}, \ell}(\mathbb{P}_{T}, \mathbb{P}_{\beta}) + \Omega_{\beta}.
    \end{equation}
\end{theorem}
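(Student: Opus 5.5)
The plan is the standard multi-source domain adaptation argument in the style of Ben-David et al.: a change-of-measure step from the target to the $\beta$-mixture of sources, followed by a uniform concentration step on the weighted empirical risk. The undefined quantities will be fixed in the appendix as follows. Let $\ell$ be a bounded loss in $[0,1]$ on episodes (e.g.\ the $0$-$1$ or clipped scoring loss of $\Phi$). Set $\epsilon_{S_k}(\Phi)=\mathbb{E}_{\mathbb{P}_{S_k}}\ell$ and $\epsilon_\beta=\sum_k\beta_k\epsilon_{S_k}$. The empirical weighted risk is $\hat\epsilon_\beta=\sum_k\beta_k\hat\epsilon_{S_k}$, where $\hat\epsilon_{S_k}$ is computed from the $n_k$ episodes of domain $k$ (so $\sum_k n_k=n$), and $\mathbb{P}_\beta=\sum_k\beta_k\mathbb{P}_{S_k}$. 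Define $\mathrm{D}_{\mathcal{H},\ell}(\mathbb{P},\mathbb{Q})=\sup_{\Phi\in\mathcal{H}}|\mathbb{E}_{\mathbb{P}}\ell(\Phi)-\mathbb{E}_{\mathbb{Q}}\ell(\Phi)|$. Finally, $\Omega_\beta$ is the residual adaptability term: the joint-optimal risk $\min_{\Phi}(\epsilon_T+\epsilon_\beta)$, or a labeling-discrepancy term. It absorbs the case where source and target labeling functions differ, which matters here because $\mathbb{P}_T$ involves a fixed reference set whose anomaly type may be unseen.

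\textbf{Step 1 (change of measure).} Write
\[
\epsilon_T(\Phi)=\epsilon_\beta(\Phi)+\bigl(\epsilon_T(\Phi)-\epsilon_\beta(\Phi)\bigr).
\]
If $\ell$ is taken over the joint (episode, label) distribution, the bracket is at most $\mathrm{D}_{\mathcal{H},\ell}(\mathbb{P}_T,\mathbb{P}_\beta)$ by definition, and $\Omega_\beta$ can even be zero. If labels must be separated, use the triangle inequality through the ideal joint hypothesis $\Phi^*$. That yields the discrepancy plus $\epsilon_T(\Phi^*)+\epsilon_\beta(\Phi^*)$, which I will name $\Omega_\beta$. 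I will use the latter version so the statement holds with whichever definition the appendix adopts.

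\textbf{Step 2 (estimation error).} Decompose
\[
\epsilon_\beta-\hat\epsilon_\beta=\bigl(\epsilon_\beta-\hat\epsilon_\beta-\mathbb{E}[\epsilon_\beta-\hat\epsilon_\beta]_{\sup}\bigr)+\text{expected sup deviation}.
\]
The fluctuation part $\sup_\Phi(\epsilon_\beta-\hat\epsilon_\beta)$ is a function of all $n$ independent episodes. Changing one episode of domain $k$ alters it by at most $\beta_k/n_k$, so McDiarmid gives a deviation of $\sqrt{\tfrac12\sum_k n_k(\beta_k/n_k)^2\log(1/\delta)}=\sqrt{\tfrac12\sum_k\tfrac{\beta_k^2}{n_k}\log\tfrac1\delta}$. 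The $\log(M/\delta)$ in the statement, rather than $\log(1/\delta)$, I interpret as a union bound over the $M$ learned deviation vectors / candidate components, or as the constant of a union over a finite family of weightings. Replacing $\delta$ by $\delta/M$ produces exactly that factor.

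\textbf{Step 3 (Rademacher/VC term).} Bound the expected supremum per domain by linearity:
\[
\mathbb{E}\sup_\Phi\sum_k\beta_k(\epsilon_{S_k}-\hat\epsilon_{S_k})\le\sum_k\beta_k\,\mathbb{E}\sup_\Phi(\epsilon_{S_k}-\hat\epsilon_{S_k})\le\sum_k\beta_k\,2\mathfrak{R}_{n_k}(\ell\circ\mathcal{H}).
\]
Sauer--Shelah with Massart's lemma then gives $\mathfrak{R}_{n_k}\le\sqrt{\tfrac{2d}{n_k}\log\tfrac{en_k}{d}}$. Matching the stated constant $2\sqrt{\tfrac{d}{n_k}\log\tfrac{en_k}{d}}$ may require taking the growth-function bound in its slightly tighter form or adjusting constants; I expect this bookkeeping to be the main nuisance rather than a conceptual obstacle. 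Combining Steps 1--3 yields \eqref{eq:eq11}.

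The genuinely delicate point is Step 2's independence assumption. Episodes share reference sets drawn from the same pool, so strictly the bounded-difference argument needs episodes to be i.i.d.\ within each domain. I would state this as a modelling assumption, that episodes are sampled independently from $\mathbb{P}_{S_k}$, rather than try to handle the dependence.
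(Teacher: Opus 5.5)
Your argument follows the paper's proof. You use a deterministic change of measure through the multi-source inequality $\epsilon_T(\Phi)\le\epsilon_\beta(\Phi)+\mathrm{D}_{\mathcal{H},\ell}(\mathbb{P}_T,\mathbb{P}_\beta)+\Omega_\beta$; the paper just cites Mansour et al.\ for it, and your triangle-inequality route through a near-minimizer of $\epsilon_T+\epsilon_\beta$ is its proof. You then apply McDiarmid with bounded differences $\beta_k/n_k$ and per-domain symmetrization, and you impose i.i.d.\ episodes as an assumption, exactly as the paper does.

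Two alignments with the paper:
\begin{itemize}
\item The paper's $\mathrm{D}_{\mathcal{H},\ell}$ is the pairwise disagreement discrepancy $\sup_{\Phi,\Phi'\in\mathcal{H}}|\epsilon^{\ell}_{\mathbb{P}_T}(\Phi,\Phi')-\epsilon^{\ell}_{\mathbb{P}_\beta}(\Phi,\Phi')|$. That is exactly what your triangle route produces. Drop your single-hypothesis definition: with it, Step 1 is a tautology and the quantity is not the paper's.
\item The paper obtains $\log(M/\delta)$ by writing the loss class as $\bigcup_{m=1}^{M}\mathcal{G}_m$, applying McDiarmid to each sub-supremum, and taking a union bound. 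Your direct McDiarmid on the full supremum gives $\log(1/\delta)\le\log(M/\delta)$, so the union bound is pure slack.
\end{itemize}

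The step that does not deliver the stated inequality is Step 3. From $\mathfrak{R}_{n_k}\le\sqrt{\tfrac{2d}{n_k}\log\tfrac{en_k}{d}}$ you end with $2\sum_k\beta_k\sqrt{\tfrac{2d}{n_k}\log\tfrac{en_k}{d}}$, which is a factor $\sqrt{2}$ too large. The remedy you suggest cannot work, because Sauer--Shelah cannot be improved in general (some classes attain it). The slack is elsewhere: you applied Massart to uncentered $\{0,1\}$-valued loss vectors, whose norm can be as large as $\sqrt{n_k}$.

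The fix is to center. Since $\mathbb{E}_\sigma\sum_i\sigma_i/2=0$, you can replace each $g(\mathbf{E}_{k,i})$ by $g(\mathbf{E}_{k,i})-\tfrac12$ inside the supremum without changing the Rademacher average. The centered vectors have norm exactly $\sqrt{n_k}/2$. Massart with at most $(en_k/d)^d$ patterns (for $n_k\ge d$) then gives $\sqrt{\tfrac{d}{2n_k}\log\tfrac{en_k}{d}}$, which is below the required $\sqrt{\tfrac{d}{n_k}\log\tfrac{en_k}{d}}$.

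This fix needs a genuinely binary loss, such as the $0$-$1$ loss; that choice also supplies the triangle inequality Step 1 relies on. For the clipped real-valued scoring loss you offer as an alternative, the projected loss class is infinite, so Sauer--Shelah and Massart do not apply, and a pseudo-dimension covering argument gives different constants. The paper's proof does not settle this either. It works with the real-valued training objective rescaled to $[0,1]$ and simply asserts $\sqrt{\tfrac{d}{n_k}\log\tfrac{en_k}{d}}$ for the Rademacher term as a ``standard complexity bound''.
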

Theorem~\ref{main_bound} indicates that the generalization bound of IDEAL is associated with the empirical source risk $\hat{\epsilon}_{\beta}(\Phi)$, the number $M$ of intrinsic deviation vectors, the source-target data discrepancy $\mathrm{D}_{\mathcal{H},\ell}$, and the joint approximation term $\Omega_{\beta}$. 
Therefore, a tighter bound can be expected when $\Phi$ attains low empirical source risk and the number $M$ of intrinsic deviation vectors is properly chosen.

\begin{table*}[t!]
\caption{\textbf{Comparison results (\textit{General}) with AUROC metric} under various few-shot AD settings, where ($\cdot / \cdot$) means image-level and pixel-level AUROCs, \texttt{N}$i$ and \texttt{A}$i$ denote the $i$-shot of normal and abnormal references. 
Best results are in \textcolor{DeepRed}{\textbf{red bold}}, with second best results in \textcolor{blue}{\textbf{blue bold}}.}
\label{tab:tab1}
\vspace{1.5mm}
\centering
\begingroup
\setlength{\tabcolsep}{2.5pt} 
\renewcommand{\arraystretch}{1.1} 
\setlength{\arrayrulewidth}{0.1mm} 
\resizebox{\linewidth}{!}{%
    \begin{tabular}{cl I I c I c I c I c I c I c}
    \hline\thickhline
    \rowcolor{mygray} 
    \textsc{\textbf{Setups}} & \textsc{\textbf{Methods}} 
    & \textsc{\textbf{MVTecAD}} & \textsc{\textbf{VisA}} & \textsc{\textbf{AITEX}} & \textsc{\textbf{MPDD}} 
    & \textsc{\textbf{BTAD}} & \textsc{\textbf{BraTS}} \\
    \hline\hline
    & RegAD~\cite{huang2022registration} 
        & (91.6 / 93.1) & (83.3 / 91.6) & (72.0 / 80.2) & (69.7 / 92.5) & (86.6 / 94.5) & (64.2 / 92.2)\\
    & PromptAD~\cite{li2024promptad} 
        & (92.8 / 93.4) & (83.8 / 89.5) & (71.9 / \textcolor{blue}{\textbf{81.4}}) & (75.8 / 95.1) & (89.6 / 92.4) & (68.7 / 91.9)\\
    & WinCLIP~\cite{jeong2023winclip} 
        & (93.1 / 93.4) & (78.7 / 94.7) & (73.1 / 78.9) & (70.2 / 94.3) & (84.5 / 94.3) & (68.8 / 91.0)\\
    & InCTRL~\cite{zhu2024toward} 
        & (92.5 / 93.4) & (82.2 / 93.1) & (\textcolor{blue}{\textbf{74.9}} / 78.3) & (74.8 / 93.1) & (90.5 / 92.9) & (69.5 / 93.1)\\
    & ResAD~\cite{yao2024resad} 
        & (80.7 / 84.3) & (77.2 / 88.3) & (70.5 / 72.7) & (67.7 / 92.7) & (78.4 / 92.3) & (59.7 / 89.5)\\
    \multirow{-6}{*}{\makecell[c]{\texttt{N1}}} & FoundAD~\cite{zhai2026foundation} 
        & (89.9 / 90.7) & (85.3 / 96.2) & (71.4 / 80.4) & (70.6 / 95.4) & (91.4 / 91.9) & (62.7 / 87.8)\\
    \hline
    & DRA~\cite{ding2022catching} 
        & (92.2 / 92.9) & (82.3 / 90.3) & (72.3 / 75.3) & (73.4 / 93.9) & (90.5 / 93.5) & (67.2 / 87.5)\\
    \multirow{-2}{*}{\makecell[c]{\texttt{A1}}} & AHL~\cite{zhu2024anomaly} 
        & (91.7 / 93.6) & (82.0 / 91.9) & (70.7 / 76.3) & (73.3 / 94.3) & (90.9 / 93.8) 
        & (\textcolor{blue}{\textbf{70.4}} / 90.4)\\
    \hline
    & NAGL~\cite{wang2025normal} 
        & (\textcolor{blue}{\textbf{95.1}} / \textcolor{blue}{\textbf{96.1}}) 
        & (\textcolor{blue}{\textbf{88.5}} / \textcolor{blue}{\textbf{97.5}}) 
        & (71.6 / 75.5) & (\textcolor{blue}{\textbf{77.1}} / \textcolor{blue}{\textbf{96.5}}) 
        & (\textcolor{blue}{\textbf{92.0}} / \textcolor{blue}{\textbf{95.8}}) & (67.7 / \textcolor{blue}{\textbf{93.7}})\\
    \multirow{-2}{*}{\makecell[c]{\texttt{N1A1}}} & \cellcolor{lightyellow}\textbf{IDEAL} (ours) 
        & (\textcolor{DeepRed}{\textbf{96.3}} / \textcolor{DeepRed}{\textbf{96.8}}) 
        & (\textcolor{DeepRed}{\textbf{90.8}} / \textcolor{DeepRed}{\textbf{97.8}}) 
        & (\textcolor{DeepRed}{\textbf{75.8}} / \textcolor{DeepRed}{\textbf{82.7}}) 
        & (\textcolor{DeepRed}{\textbf{78.5}} / \textcolor{DeepRed}{\textbf{97.9}}) 
        & (\textcolor{DeepRed}{\textbf{93.4}} / \textcolor{DeepRed}{\textbf{97.0}}) 
        & (\textcolor{DeepRed}{\textbf{74.9}} / \textcolor{DeepRed}{\textbf{94.8}})\\
    \hline\hline
    & RegAD~\cite{huang2022registration} 
        & (92.6 / 93.5) & (87.4 / 92.7) & (73.5 / \textcolor{blue}{\textbf{82.0}}) & (71.9 / 94.5) & (90.8 / 95.6) & (70.7 / 94.1)\\
    & PromptAD~\cite{li2024promptad} 
        & (93.5 / 93.8) & (86.6 / 89.8) & (73.8 / 81.8) & (76.8 / 96.1) & (90.3 / 92.6) & (74.4 / 93.7)\\
    & WinCLIP~\cite{jeong2023winclip} 
        & (93.4 / 93.6) & (84.3 / 94.8) & (73.5 / 79.3) & (70.8 / 94.7) & (85.0 / 95.5) & (74.8 / 92.7)\\
    & InCTRL~\cite{zhu2024toward} 
        & (92.7 / 93.7) & (85.6 / 93.7) & (\textcolor{blue}{\textbf{75.2}} / 78.7) & (75.9 / 93.5) & (93.4 / 93.0) & (\textcolor{blue}{\textbf{75.0}} / 94.1)\\
    & ResAD~\cite{yao2024resad} 
        & (85.7 / 86.3) & (82.4 / 89.8) & (75.0 / 75.1) & (68.4 / 94.4) & (80.1 / 92.9) & (65.4 / 92.2)\\
    \multirow{-6}{*}{\makecell[c]{\texttt{N2}}} & FoundAD~\cite{zhai2026foundation} 
        & (90.3 / 92.8) & (88.5 / 96.5) & (72.3 / 81.4) & (73.9 / 95.7) 
        & (\textcolor{blue}{\textbf{94.0}} / 93.2) & (72.5 / 91.9)\\
    \hline
    & NAGL~\cite{wang2025normal} 
        & (\textcolor{blue}{\textbf{96.0}} / \textcolor{blue}{\textbf{96.6}}) 
        & (\textcolor{blue}{\textbf{89.1}} / \textcolor{blue}{\textbf{97.5}}) 
        & (73.5 / 79.8) & (\textcolor{blue}{\textbf{80.2}} / \textcolor{blue}{\textbf{97.3}}) 
        & (93.5 / \textcolor{blue}{\textbf{96.3}}) & (74.6 / \textcolor{blue}{\textbf{94.3}})\\
    \multirow{-2}{*}{\makecell[c]{\texttt{N2A1}}} & \cellcolor{lightyellow}\textbf{IDEAL} (ours) 
        & (\textcolor{DeepRed}{\textbf{97.4}} / \textcolor{DeepRed}{\textbf{97.5}}) 
        & (\textcolor{DeepRed}{\textbf{91.3}} / \textcolor{DeepRed}{\textbf{97.9}}) 
        & (\textcolor{DeepRed}{\textbf{77.6}} / \textcolor{DeepRed}{\textbf{85.3}}) 
        & (\textcolor{DeepRed}{\textbf{81.6}} / \textcolor{DeepRed}{\textbf{98.0}}) 
        & (\textcolor{DeepRed}{\textbf{95.6}} / \textcolor{DeepRed}{\textbf{97.2}}) 
        & (\textcolor{DeepRed}{\textbf{78.5}} / \textcolor{DeepRed}{\textbf{95.6}})\\
    \hline\hline
    & RegAD~\cite{huang2022registration} 
        & (94.0 / 95.1) & (89.5 / 93.3) & (74.6 / \textcolor{blue}{\textbf{83.7}}) & (76.3 / 95.2) & (91.7 / 96.3) & (75.8 / \textcolor{blue}{\textbf{94.5}})\\
    & PromptAD~\cite{li2024promptad} 
        & (94.9 / 94.6) & (87.1 / 90.6) & (74.1 / 83.4) & (78.3 / 96.5) & (90.8 / 93.2) & (74.5 / 94.0)\\
    & WinCLIP~\cite{jeong2023winclip} 
        & (93.9 / 93.8) & (84.5 / 95.1) & (73.8 / 79.5) & (71.0 / 95.4) & (87.4 / 95.9) & (75.7 / 93.2)\\
    & InCTRL~\cite{zhu2024toward} 
        & (94.3 / 94.0) & (89.5 / 94.2) & (\textcolor{blue}{\textbf{75.6}} / 80.4) & (79.4 / 93.7) & (93.9 / 93.6) & (\textcolor{blue}{\textbf{77.8}} / 93.8)\\
    & ResAD~\cite{yao2024resad} 
        & (87.1 / 86.9) & (83.1 / 90.1) & (75.2 / 76.7) & (70.3 / 95.7) & (81.5 / 95.1) & (69.6 / 93.0)\\
    \multirow{-6}{*}{\makecell[c]{\texttt{N4}}} & FoundAD~\cite{zhai2026foundation} 
        & (92.9 / 93.2) & (91.0 / 96.7) & (73.2 / 82.8) & (77.6 / 95.8) & (\textcolor{blue}{\textbf{94.3}} / 94.0) & (73.5 / 92.8)\\
    \hline
    & NAGL~\cite{wang2025normal} 
        & (\textcolor{blue}{\textbf{96.9}} / \textcolor{blue}{\textbf{97.0}}) 
        & (\textcolor{blue}{\textbf{91.2}} / \textcolor{blue}{\textbf{97.7}}) 
        & (\textcolor{blue}{\textbf{75.6}} / 80.9) 
        & (\textcolor{blue}{\textbf{81.4}} / \textcolor{blue}{\textbf{97.7}}) 
        & (93.7 / \textcolor{blue}{\textbf{96.5}}) 
        & (74.9 / 94.4)\\
    \multirow{-2}{*}{\makecell[c]{\texttt{N4A1}}} & \cellcolor{lightyellow}\textbf{IDEAL} (ours) 
        & (\textcolor{DeepRed}{\textbf{98.2}} / \textcolor{DeepRed}{\textbf{97.5}}) 
        & (\textcolor{DeepRed}{\textbf{92.7}} / \textcolor{DeepRed}{\textbf{98.0}}) 
        & (\textcolor{DeepRed}{\textbf{79.5}} / \textcolor{DeepRed}{\textbf{86.1}}) 
        & (\textcolor{DeepRed}{\textbf{87.6}} / \textcolor{DeepRed}{\textbf{98.5}}) 
        & (\textcolor{DeepRed}{\textbf{95.9}} / \textcolor{DeepRed}{\textbf{97.5}}) 
        & (\textcolor{DeepRed}{\textbf{80.5}} / \textcolor{DeepRed}{\textbf{95.8}})\\
    \hline\hline
    & RegAD~\cite{huang2022registration} 
        & (95.0 / 95.8) & (\textcolor{blue}{\textbf{91.9}} / 94.7) & (75.8 / 83.9) & (80.8 / 95.7) & (92.2 / \textcolor{blue}{\textbf{96.5}}) & (76.3 / 95.0)\\
    & PromptAD~\cite{li2024promptad} 
        & (96.0 / 94.8) & (87.9 / 92.2) & (74.4 / \textcolor{blue}{\textbf{84.7}}) & (80.6 / 96.5) & (91.2 / 94.1) & (75.6 / 94.6)\\
    & WinCLIP~\cite{jeong2023winclip} 
        & (95.5 / 94.2) & (86.1 / 95.1) & (74.8 / 80.5) & (74.7 / 95.6) & (89.6 / 96.0) & (76.9 / 93.8)\\
    & InCTRL~\cite{zhu2024toward} 
        & (95.8 / 94.4) & (89.8 / 95.3) & (75.9 / 80.6) & (\textcolor{blue}{\textbf{83.2}} / 94.1) & (94.3 / 94.9) & (\textcolor{blue}{\textbf{79.5}} / 94.9)\\
    & ResAD~\cite{yao2024resad} 
        & (89.4 / 87.8) & (83.7 / 90.9) & (75.4 / 79.3) & (75.1 / 96.1) & (83.6 / 95.5) & (71.7 / 94.3)\\
    \multirow{-6}{*}{\makecell[c]{\texttt{N8}}} & FoundAD~\cite{zhai2026foundation} 
        & (93.2 / 93.5) & (91.3 / 96.8) & (74.3 / 83.0) & (80.3 / 96.2) & (\textcolor{blue}{\textbf{94.6}} / 95.1) & (76.7 / 93.5)\\
    \hline
    & DRA~\cite{ding2022catching} 
        & (93.7 / 94.0) & (86.8 / 90.7) & (75.7 / 77.8) & (81.8 / 94.9) & (93.0 / 93.9) & (73.5 / 89.9)\\
    \multirow{-2}{*}{\makecell[c]{\texttt{A4}}} & AHL~\cite{zhu2024anomaly} 
        & (92.9 / 94.5) & (85.9 / 92.5) & (75.1 / 78.8) & (81.0 / 95.2) & (93.3 / 94.5) & (72.9 / 90.3)\\
    \hline
    & NAGL~\cite{wang2025normal} 
        & (\textcolor{blue}{\textbf{97.2}} / \textcolor{blue}{\textbf{97.1}}) 
        & (91.5 / \textcolor{blue}{\textbf{97.9}}) 
        & (\textcolor{blue}{\textbf{80.5}} / 82.3) 
        & (83.1 / \textcolor{blue}{\textbf{97.9}}) 
        & (94.0 / \textcolor{blue}{\textbf{96.5}}) & (75.1 / \textcolor{blue}{\textbf{95.2}})\\
    \multirow{-2}{*}{\makecell[c]{\texttt{N8A4}}} & \cellcolor{lightyellow}\textbf{IDEAL} (ours) 
        & (\textcolor{DeepRed}{\textbf{98.6}} / \textcolor{DeepRed}{\textbf{97.8}}) 
        & (\textcolor{DeepRed}{\textbf{93.1}} / \textcolor{DeepRed}{\textbf{98.3}}) 
        & (\textcolor{DeepRed}{\textbf{83.9}} / \textcolor{DeepRed}{\textbf{86.8}}) 
        & (\textcolor{DeepRed}{\textbf{88.0}} / \textcolor{DeepRed}{\textbf{98.7}}) 
        & (\textcolor{DeepRed}{\textbf{96.2}} / \textcolor{DeepRed}{\textbf{97.7}}) 
        & (\textcolor{DeepRed}{\textbf{82.8}} / \textcolor{DeepRed}{\textbf{96.9}})\\
    \hline\thickhline
    \end{tabular}
}
\endgroup
\vspace{-1mm}
\end{table*}

\section{Experiments} \label{sec:myexp}
\subsection{Experimental Setup} \label{sec:exp_setups}
\paragraph{Datasets.}
To verify the effectiveness of IDEAL, we conduct experiments across eight AD datasets, including five industrial datasets: MVTecAD \cite{bergmann2019mvtec}, VisA \cite{zou2022spot}, AITEX \cite{silvestre2019public}, MPDD \cite{jezek2021deep}, BTAD \cite{mishra2021vt}; and three medical image datasets: BraTS \cite{menze2014multimodal} (for brain tumor segmentation), Liver \cite{bao2024bmad} (for liver tumor segmentation), RESC \cite{hu2019automated} (for retinal edema segmentation). To assess the AD performance, we adopt a cross-dataset evaluation: the model is trained on MVTecAD as the source dataset and then directly evaluated on the test sets of the other seven datasets without any further training, following prior work \cite{zhu2024toward,yao2024resad}. When evaluating on MVTecAD, the model is trained on VisA. 
During inference, unlike \cite{wang2025normal}, which samples different abnormal references based on the query's anomaly type, we sample them once from each test dataset and keep them fixed for all test samples in the same dataset.
\vspace{-2mm}
\paragraph{Competing Methods and Metrics.}
We compare IDEAL with two groups of closely related methods: (1) \textit{Generalist Methods}, \ie, detectors are training-free or trained once on auxiliary data: based on normal-only references (InCTRL \cite{zhu2024toward}, ResAD \cite{yao2024resad}, WinCLIP \cite{jeong2023winclip}, FoundAD \cite{zhai2026foundation}), and normal and anomalous references (NAGL \cite{wang2025normal}); (2) \textit{Specalist Methods}, 
 \ie, detectors require dataset-specific training: based on normal-only references (RegAD \cite{huang2022registration}, PromptAD \cite{li2024promptad}), and based on full-shot normal data and abnormal-only references: DRA \cite{ding2022catching} and AHL \cite{zhu2024anomaly}. 
For metrics, following \cite{zhu2024toward,zhou2023anomalyclip}, we use Area Under the Receiver Operating Characteristic (AUROC), maximum F1-score at the optimal threshold (F1-Max), and Per-Region Overlap (PRO) to evaluate pixel-level AD; similarly, we use AUROC, F1-Max, and Average Precision (AP) to evaluate image-level AD.
\vspace{-2mm}
\paragraph{Implementation Details.}
By default, we adopt a pre-trained ViT-S/14 \cite{dosovitskiy2021an} backbone as the visual encoder $\mathcal{E}(\cdot)$ in our experiments. We freeze the parameters of the visual encoder and only update the parameters of the IDE component. All the training and test images are resized to a $448 \times 448$ resolution. AdamW \cite{loshchilov2018decoupled} is used as the optimizer, and the initial learning rate is set to $0.001$ with a cosine scheduler. The number of training epochs of IDEAL is set to $20$ with a batch size of $16$ on a single NVIDIA GeForce RTX 3090 GPU. We set the number $k$ of normal neighbors in Eq.~\eqref{eq:eq2} as $12$, the number $r$ of retained principal components as $4$, $\alpha$ in Eq.~\eqref{eq:eq4} as $0.8$, $M$ in Eq.~\eqref{eq:eq6} as $45$, $\lambda_1$ and $\lambda_2$ in Eq.~\eqref{eq:eq7} as $1.0$ and $0.8$ by default. 
Following previous works \cite{zhu2024toward,zhu2024anomaly}, we set the number of normal references to $L_1 \in [1,2,4,8]$ and the number of abnormal references to $L_2 \in [1,4]$, with $L_2=1$ as the default to simulate realistic scenarios. 
For AD methods requiring abnormal references, we construct fixed abnormal references by sampling from a randomly selected anomaly type, and consider two setups: \textit{General Setting}, which utilizes the full test set that enables evaluation of detecting both seen and unseen anomalies, and \textit{Hard Setting}, which excludes test samples belonging to the selected anomaly type, enabling exclusive evaluation of detecting unseen anomalies. All results reported are the average of three independent runs. More details in Appendix \ref{sec:edetails}.

\subsection{Main Results}
\paragraph{Benefits of Abnormal References.}
Tab.~\ref{tab:tab1} presents comparison results of image-level and pixel-level AUROCs under various few-shot AD settings. Note that all the reported results are dataset-level average results across their respective subsets. Overall, IDEAL achieves superior performance by incorporating both normal and abnormal references. With just one additional abnormal reference \texttt{N1A1}, IDEAL yields $96.3\%$ and $96.8\%$ for image and pixel AUROCs on MVTecAD, outperforming state-of-the-art FSAD methods that use normal-only references. 
The similar improvements are also observed on other AD datasets. 
These results show the advantage of introducing abnormal examples into the reference set, as they provide discriminative clues about anomaly characteristics that are difficult to obtain from normal-only references. Please see complete results in Appendix~\ref{sec:allres}.

\paragraph{General vs. Hard Settings.}
\begin{wraptable}{r}{0.6\linewidth}
\vspace{-6.6mm}
\caption{\textbf{Comparison results (\textit{Hard}) with AUROC} on MVTecAD and MPDD, together with task difficulty values (\textcolor{purple}{purple}) that measures a seen-to-unseen anomaly similarity.}
\label{tab:tab2}
\vspace{1.5mm}
\centering
\begingroup
\setlength{\tabcolsep}{1pt} 
\renewcommand{\arraystretch}{1.06} 
\setlength{\arrayrulewidth}{0.1mm} 
\resizebox{\linewidth}{!}{%
    \begin{tabular}{cl I I cc I cc}
    \hline\thickhline
    \rowcolor{mygray} 
    \textsc{\textbf{Setups}} & \textsc{\textbf{Methods}} 
    & \multicolumn{2}{cI}{\textsc{\textbf{MVTecAD}}} & \multicolumn{2}{c}{\textsc{\textbf{MPDD}}} \\
    \hline\hline
    & NAGL~\cite{wang2025normal} 
        & (93.2 \mydown{1.9} / 94.0 \mydown{2.1}) & & (63.7 \mydown{13.4} / 91.8 \mydown{4.7}) & \\
    \multirow{-2}{*}{\makecell[c]{\texttt{N1A1}}} & \cellcolor{lightyellow}\textbf{IDEAL} 
        & (95.8 \mydown{0.5} / 96.4 \mydown{0.4}) & \multirow{-2}{*}{\makecell[c]{\textcolor{purple}{0.824}}} & (74.5 \mydown{4.0} / 97.2 \mydown{0.7}) & \multirow{-2}{*}{\makecell[c]{\textcolor{purple}{0.543}}} \\
    \hline
    & NAGL~\cite{wang2025normal} 
        & (93.5 \mydown{2.5} / 94.8 \mydown{1.8}) & & (71.7 \mydown{8.5} / 93.0 \mydown{4.3}) & \\
    \multirow{-2}{*}{\makecell[c]{\texttt{N2A1}}} & \cellcolor{lightyellow}\textbf{IDEAL} 
        & (97.0 \mydown{0.4} / 96.6 \mydown{0.9}) & \multirow{-2}{*}{\makecell[c]{\textcolor{purple}{0.833}}} & (77.0 \mydown{4.6} / 97.4 \mydown{0.6}) & \multirow{-2}{*}{\makecell[c]{\textcolor{purple}{0.515}}}\\
    \hline
    & NAGL~\cite{wang2025normal} 
        & (94.6 \mydown{2.3} / 95.0 \mydown{2.0}) & & (75.8 \mydown{5.6} / 93.8 \mydown{3.9}) & \\
    \multirow{-2}{*}{\makecell[c]{\texttt{N4A1}}} & \cellcolor{lightyellow}\textbf{IDEAL} 
        & (97.3 \mydown{0.9} / 97.0 \mydown{0.5}) & \multirow{-2}{*}{\makecell[c]{\textcolor{purple}{0.839}}} & (84.3 \mydown{3.3} / 97.5 \mydown{1.0}) & \multirow{-2}{*}{\makecell[c]{\textcolor{purple}{0.501}}}\\
    \hline
    & NAGL~\cite{wang2025normal} 
        & (95.1 \mydown{2.1} / 95.4 \mydown{1.7}) & & (76.6 \mydown{6.5} / 94.2 \mydown{3.7}) & \\
    \multirow{-2}{*}{\makecell[c]{\texttt{N8A4}}} & \cellcolor{lightyellow}\textbf{IDEAL} 
        & (97.6 \mydown{1.0} / 97.2 \mydown{0.6}) & \multirow{-2}{*}{\makecell[c]{\textcolor{purple}{0.835}}} & (84.9 \mydown{3.1} / 97.8 \mydown{0.9}) & \multirow{-2}{*}{\makecell[c]{\textcolor{purple}{0.513}}} \\
    \hline\thickhline
    \end{tabular}
}
\endgroup
\end{wraptable}
Tab.~\ref{tab:tab2} reports comparison results of IDEAL and NAGL \cite{wang2025normal} under the hard setting, where test samples belonging to the anomaly type of abnormal references are excluded from the test set. In contrast to the general setting (Tab.~\ref{tab:tab1}), this hard setting evaluates the capability of detecting unseen anomalies. 
To quantify the hardness/task difficulty of unseen AD, we compute the mask-aware cosine similarity between abnormal references and anomalous test samples, where lower similarity indicates a larger seen-to-unseen anomaly gap (\textcolor{purple}{task difficulty values} in Tab.~\ref{tab:tab2}). 
In such a hard setting, NAGL \cite{wang2025normal} exhibits a pronounced performance degradation, as its anomaly scoring relies on the alignment between test query and abnormal references. 
IDEAL maintains robust performance by leveraging normal-abnormal references to learn intrinsic deviation patterns, enabling generalization to unseen anomalies.

\subsection{Validation Studies} \label{sec:valida}

\begin{wrapfigure}{r}{0.4\linewidth}
\vspace{-3.6mm}
    \centering
    \setlength{\abovecaptionskip}{1mm} 
    \includegraphics[width=1.0\linewidth]{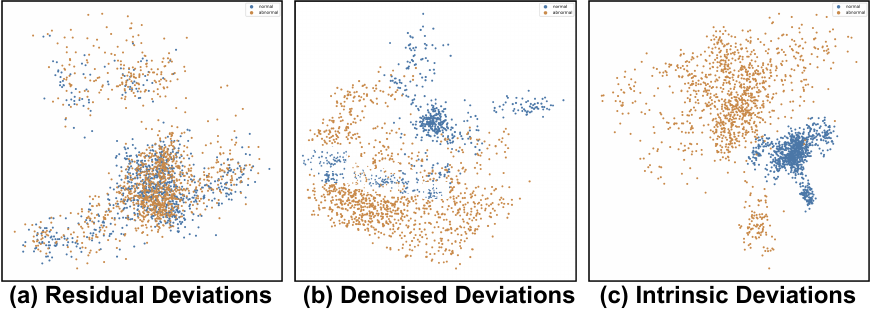}
    \caption{T-SNE visualization for different deviation representations produced by IDEAL on VisA (\textcolor{norblue}{normal} \textcolor{aboorange}{abnormal}).}
\label{fig:fig4}
\vspace{-2mm}
\end{wrapfigure}
\paragraph{Deviation Analysis.}
Fig.~\ref{fig:fig4} presents the t-SNE \cite{van2008visualizing} visualization of different deviation representations produced by IDEAL on VisA (trained on MVTecAD). From these results, we can observe: 
1) residual deviation representations are highly entangled between normal and abnormal samples, 
while denoised deviation representations suppress noisy normal variations contained in these residual deviations; 2) after projection onto the learned intrinsic deviation vectors, normal and abnormal samples become clearly separated. 
These results indicate that IDEAL can effectively extract intrinsic deviations from normality, thereby improving normal-abnormal separability, which in turn enhances detection.

\paragraph{Ablation Studies.}
We provide an ablation analysis of IDEAL's components in Tab.~\ref{tab:tab3} (\textbf{Left}). 
Comparing first two rows, we can observe that naively matching the normal/abnormal reference to perform anomaly scoring is inherently fragile, as such a strategy fails to cover diverse anomaly types. 
The third row indicates that our NVE component effectively reduces noisy activations caused by

\begin{wrapfigure}{r}{0.36\linewidth}
    \centering
    \includegraphics[width=\linewidth]{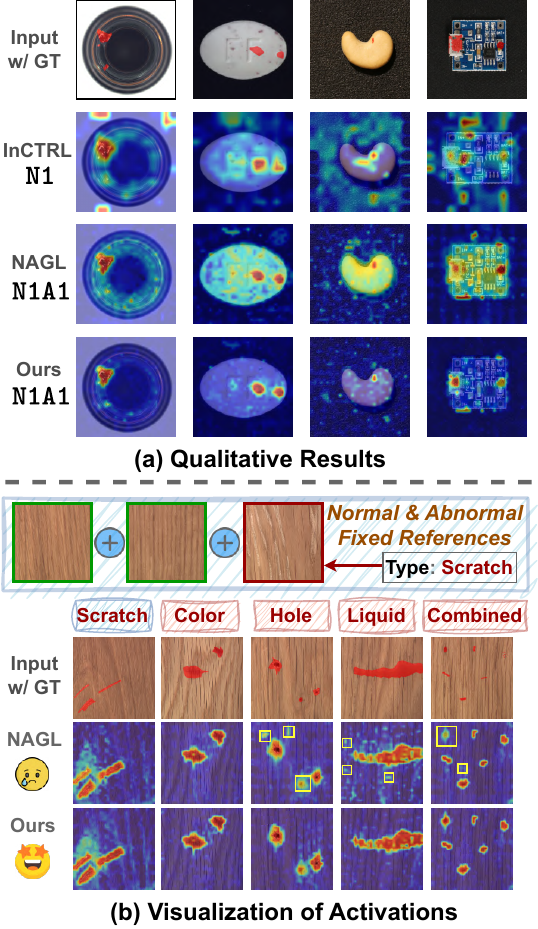}
    \caption{\textbf{(a)} Qualitative results of three generalist FSAD methods. \textbf{(b)} Visualization of anomalous activations on a new dataset (VisA$\to$MVTecAD).}
\label{fig:fig5}
\vspace{-6mm}
\end{wrapfigure}
nuisance normal variations. 
The fourth row verifies that IDE is effective even when applied to noisy residual deviations, showing that learning intrinsic deviation vectors itself provides strong anomaly discriminative ability. 
By further combining NVE and IDE (the fifth row), IDEAL learns intrinsic deviation vectors from denoised deviations, enabling generalization to both seen and unseen anomalies. 
Appendix~\ref{sec:nveplug} further shows that NVE can be easily incorporated with existing residual-based FSAD methods to enhance detection. In Appendix~\ref{sec:hyper} and Appendix~\ref{sec:baloss}, we provide complete ablation studies of IDEAL's key arguments and analyze the effect of loss terms in Eq.~\eqref{eq:eq10}.

\begin{table*}[t!]
\caption{\textbf{Left}: Component ablation of IDEAL. The first two rows use nearest neighbor matching to perform anomaly scoring, the third row adds the similarity score between query-to-normal deviations and denoised deviations, the fourth row uses the residual deviations to learn deviation patterns, and the fifth row is our full IDEAL model. \textbf{Right}: Efficiency comparisons on VisA (trained on MVTecAD). Note that the time of WinCLIP represents the wall-clock time of the complete pipeline.}
\label{tab:tab3}
\vspace{1.5mm}
\centering
\begin{minipage}{0.493\linewidth}
\centering
\begingroup
\setlength{\tabcolsep}{1.25pt} 
\renewcommand{\arraystretch}{1.2} 
\setlength{\arrayrulewidth}{0.1mm} 
\resizebox{\linewidth}{!}{%
    \begin{tabular}{cccc I ccc}
    \hline\thickhline
    \rowcolor{mygray} \texttt{N1} & \texttt{A1} & NVE & IDE & \textsc{\textbf{MVTecAD}} & \textsc{\textbf{VisA}} & \textsc{\textbf{BraTS}} \\
    \hline\hline
    & \ding{51} & & & (72.4 / 88.0) & (61.3 / 85.6) & (39.7 / 83.3) \\
    \ding{51} & \ding{51} & & & (90.1 / 92.2) & (77.8 / 92.7) & (57.6 / 87.5) \\\hline
    \ding{51} & \ding{51} & \ding{51} & & (93.9 / 94.3) & (82.7 / 95.3) & (63.6 / 92.0) \\
    \ding{51} & \ding{51} & & \ding{51} & (\textcolor{blue}{\textbf{94.5}} / \textcolor{blue}{\textbf{96.0}}) 
    & (\textcolor{blue}{\textbf{87.3}} / \textcolor{blue}{\textbf{96.5}}) 
    & (\textcolor{blue}{\textbf{70.1}} / \textcolor{blue}{\textbf{92.9}}) \\
    \ding{51} & \ding{51} & \ding{51} & \ding{51} 
    & (\textcolor{DeepRed}{\textbf{96.3}} / \textcolor{DeepRed}{\textbf{96.8}}) 
    & (\textcolor{DeepRed}{\textbf{90.8}} / \textcolor{DeepRed}{\textbf{97.8}}) 
    & (\textcolor{DeepRed}{\textbf{74.9}} / \textcolor{DeepRed}{\textbf{94.8}}) \\
    \hline\thickhline
    \end{tabular}
}
\endgroup
\end{minipage}
\hfill
\begin{minipage}{0.493\linewidth}
\centering
\begingroup
\setlength{\tabcolsep}{2.8pt} 
\renewcommand{\arraystretch}{1.22} 
\setlength{\arrayrulewidth}{0.1mm} 
\resizebox{\linewidth}{!}{%
    \begin{tabular}{l I cccc}
    \hline\thickhline
    \rowcolor{mygray} \textsc{\textbf{Methods}} & Param. (M) & Time & FPS & $\textnormal{FLOPs}_{\times 10^{10}}$ \\
    \hline\hline
    WinCLIP~\cite{jeong2023winclip} & 117.3 & \textcolor{DeepRed}{\textbf{0.2 H}} & 1.8 & 41.52 \\
    InCTRL~\cite{zhu2024toward} & 117.5 & 0.9 H & 1.5 & 20.47 \\
    ResAD~\cite{yao2024resad} & 59.31 & 22.8 H & 7.6 & 15.29 \\
    NAGL~\cite{wang2025normal} & \textcolor{blue}{\textbf{24.45}} & 0.6 H & \textcolor{blue}{\textbf{16.8}} & \textcolor{blue}{\textbf{13.98}} \\
    \cellcolor{lightyellow}\textbf{IDEAL} (ours) & \textcolor{DeepRed}{\textbf{23.83}} & \textcolor{blue}{\textbf{0.5 H}} & \textcolor{DeepRed}{\textbf{21.9}} & \textcolor{DeepRed}{\textbf{12.70}} \\
    \hline\thickhline
    \end{tabular}
}
\endgroup
\end{minipage}
\vspace{-2mm}
\end{table*}

\paragraph{Efficiency Comparison.}
We evaluate the efficiency from four aspects: model parameters, wall-clock time, inference speed (FPS), computation cost (FLOPs). 
As shown in Tab.~\ref{tab:tab3} (\textbf{Right}), IDEAL has $23.83$ million parameters, which is around $5\times$ smaller than InCTRL and WinCLIP. Compared with ResAD, IDEAL requires only $0.5$ hours for training, achieving a $45\times$ speedup. 
In terms of inference speed, IDEAL reaches $21.9$ FPS, which is $14\times$ faster than InCTRL and $1.3\times$ faster than NAGL. These results demonstrate the practical efficiency of IDEAL for real-world deployment.

\paragraph{Qualitative Results.}
Fig.~\ref{fig:fig5} (a) shows qualitative results of three generalist FSAD methods. 
Comparing the anomaly score maps, we observe that IDEAL effectively avoids false positives in normal regions and locates anomalies more accurately. 
Then, Fig.~\ref{fig:fig5} (b) further illustrates the generalizability of IDEAL under limited reference coverage. With a fixed reference set, NAGL is sensitive to the coverage of the given anomalous exemplars and tends to fail when the query exhibits anomaly appearances that are unseen in, or poorly covered by, the references. In contrast, IDEAL learns intrinsic deviation patterns from normal and abnormal references, enabling more robust detection beyond limited references.

\section{Conclusion}
In this paper, we consider a practical few-shot anomaly detection (FSAD) setting, discriminative FSAD, where a few normal and anomalous examples are available as references at inference. 
To effectively leverage discriminative clues of abnormal references relative to normal references, we propose an Intrinsic Deviation Learning (IDEAL) framework to learn intrinsic deviation patterns characterizing generalizable abnormality as deviations from normality, for detecting both seen and unseen anomalies. 
At inference, anomalies are detected by measuring query-to-normal deviations preserved after projection onto learned intrinsic deviation vectors. Extensive experiments validate the effectiveness and efficiency of the proposed IDEAL. 

\medskip
{
    \small
    \bibliographystyle{plain}
    \bibliography{main_ref}

@String(CVPR= {IEEE Conf. Comput. Vis. Pattern Recog.})

@String(ICCV= {Int. Conf. Comput. Vis.})

@String(ECCV= {Eur. Conf. Comput. Vis.})

@String(NIPS= {Adv. Neural Inform. Process. Syst.})

@String(ICPR = {Int. Conf. Pattern Recog.})

@String(ACMMM= {ACM Int. Conf. Multimedia})

@String(ACCV  = {ACCV})

@String(ICLR = {Int. Conf. Learn. Represent.})

@String(AAAI = {AAAI})

@String(CVPR  = {CVPR})

@String(ICCV  = {ICCV})

@String(ECCV  = {ECCV})

@String(NIPS  = {NeurIPS})

@String(ICPR  = {ICPR})

@String(ACMMM = {ACM MM})

@String(WACV  = {WACV})

@String(ICLR  = {ICLR})

@String(ICML  = {ICML})

@String(CIKM = {CIKM})

@String(KDD = {ACM SIGKDD})

@article{zhang2020viral,
  title={Viral pneumonia screening on chest X-rays using confidence-aware anomaly detection},
  author={Zhang, Jianpeng and Xie, Yutong and Pang, Guansong and Liao, Zhibin and Verjans, Johan and Li, Wenxing and Sun, Zongji and He, Jian and Li, Yi and Shen, Chunhua and others},
  journal={IEEE transactions on medical imaging},
  volume={40},
  number={3},
  pages={879--890},
  year={2020},
  publisher={IEEE}
}

@article{cao2024survey,
  title={A survey on visual anomaly detection: Challenge, approach, and prospect},
  author={Cao, Yunkang and Xu, Xiaohao and Zhang, Jiangning and Cheng, Yuqi and Huang, Xiaonan and Pang, Guansong and Shen, Weiming},
  journal={arXiv preprint arXiv:2401.16402},
  year={2024}
}

@article{pang2021deep,
  title={Deep learning for anomaly detection: A review},
  author={Pang, Guansong and Shen, Chunhua and Cao, Longbing and Hengel, Anton Van Den},
  journal={ACM Computing Surveys},
  volume={54},
  number={2},
  pages={1--38},
  year={2021}
}

@inproceedings{xie2023pushing,
    title={Pushing the Limits of Fewshot Anomaly Detection in Industry Vision: Graphcore},
    author={Guoyang Xie and Jinbao Wang and Jiaqi Liu and Yaochu Jin and Feng Zheng},
    booktitle=ICLR,
    year={2023}
}

@article{chalapathy2019deep,
  title={Deep learning for anomaly detection: A survey},
  author={Chalapathy, Raghavendra and Chawla, Sanjay},
  journal={arXiv preprint arXiv:1901.03407},
  year={2019}
}

@article{samariya2023comprehensive,
  title={A comprehensive survey of anomaly detection algorithms},
  author={Samariya, Durgesh and Thakkar, Amit},
  journal={Annals of Data Science},
  volume={10},
  number={3},
  pages={829--850},
  year={2023}
}

@inproceedings{chen2022deep,
  title={Deep one-class classification via interpolated gaussian descriptor},
  author={Chen, Yuanhong and Tian, Yu and Pang, Guansong and Carneiro, Gustavo},
  booktitle=AAAI,
  pages={383--392},
  year={2022}
}

@article{bergman2020classification,
  title={Classification-based anomaly detection for general data},
  author={Bergman, Liron and Hoshen, Yedid},
  journal={arXiv preprint arXiv:2005.02359},
  year={2020}
}

@inproceedings{fang2023fastrecon,
  title={Fastrecon: Few-shot industrial anomaly detection via fast feature reconstruction},
  author={Fang, Zheng and Wang, Xiaoyang and Li, Haocheng and Liu, Jiejie and Hu, Qiugui and Xiao, Jimin},
  booktitle=ICCV,
  pages={17481--17490},
  year={2023}
}

@inproceedings{pang2019deep,
  title={Deep anomaly detection with deviation networks},
  author={Pang, Guansong and Shen, Chunhua and Van Den Hengel, Anton},
  booktitle=KDD,
  pages={353--362},
  year={2019}
}

@inproceedings{wang2023multimodal,
  title={Multimodal industrial anomaly detection via hybrid fusion},
  author={Wang, Yue and Peng, Jinlong and Zhang, Jiangning and Yi, Ran and Wang, Yabiao and Wang, Chengjie},
  booktitle=CVPR,
  pages={8032--8041},
  year={2023}
}

@inproceedings{bao2024bmad,
  title={Bmad: Benchmarks for medical anomaly detection},
  author={Bao, Jinan and Sun, Hanshi and Deng, Hanqiu and He, Yinsheng and Zhang, Zhaoxiang and Li, Xingyu},
  booktitle=CVPR,
  pages={4042--4053},
  year={2024}
}

@inproceedings{huang2024adapting,
  title={Adapting visual-language models for generalizable anomaly detection in medical images},
  author={Huang, Chaoqin and Jiang, Aofan and Feng, Jinghao and Zhang, Ya and Wang, Xinchao and Wang, Yanfeng},
  booktitle=CVPR,
  pages={11375--11385},
  year={2024}
}

@inproceedings{zhu2024toward,
  title={Toward generalist anomaly detection via in-context residual learning with few-shot sample prompts},
  author={Zhu, Jiawen and Pang, Guansong},
  booktitle=CVPR,
  pages={17826--17836},
  year={2024}
}

@inproceedings{guo2025dinomaly,
  title={Dinomaly: The less is more philosophy in multi-class unsupervised anomaly detection},
  author={Guo, Jia and Lu, Shuai and Zhang, Weihang and Chen, Fang and Li, Huiqi and Liao, Hongen},
  booktitle=CVPR,
  pages={20405--20415},
  year={2025}
}

@inproceedings{yao2024resad,
  title={Resad: A simple framework for class generalizable anomaly detection},
  author={Yao, Xincheng and Chen, Zixin and Gao, Chao and Zhai, Guangtao and Zhang, Chongyang},
  booktitle=NIPS,
  pages={125287--125311},
  year={2024}
}

@inproceedings{luo2025exploring,
  title={Exploring intrinsic normal prototypes within a single image for universal anomaly detection},
  author={Luo, Wei and Cao, Yunkang and Yao, Haiming and Zhang, Xiaotian and Lou, Jianan and Cheng, Yuqi and Shen, Weiming and Yu, Wenyong},
  booktitle=CVPR,
  pages={9974--9983},
  year={2025}
}

@inproceedings{hou2021divide,
  title={Divide-and-assemble: Learning block-wise memory for unsupervised anomaly detection},
  author={Hou, Jinlei and Zhang, Yingying and Zhong, Qiaoyong and Xie, Di and Pu, Shiliang and Zhou, Hong},
  booktitle=ICCV,
  pages={8791--8800},
  year={2021}
}

@inproceedings{liu2023diversity,
  title={Diversity-measurable anomaly detection},
  author={Liu, Wenrui and Chang, Hong and Ma, Bingpeng and Shan, Shiguang and Chen, Xilin},
  booktitle=CVPR,
  pages={12147--12156},
  year={2023}
}

@inproceedings{park2020learning,
  title={Learning memory-guided normality for anomaly detection},
  author={Park, Hyunjong and Noh, Jongyoun and Ham, Bumsub},
  booktitle=CVPR,
  pages={14372--14381},
  year={2020}
}

@article{schlegl2019f,
  title={f-AnoGAN: Fast unsupervised anomaly detection with generative adversarial networks},
  author={Schlegl, Thomas and Seeb{\"o}ck, Philipp and Waldstein, Sebastian M and Langs, Georg and Schmidt-Erfurth, Ursula},
  journal={Medical Image Analysis},
  volume={54},
  pages={30--44},
  year={2019},
  publisher={Elsevier}
}

@inproceedings{tayeh2020distance,
  title={Distance-based anomaly detection for industrial surfaces using triplet networks},
  author={Tayeh, Tareq and Aburakhia, Sulaiman and Myers, Ryan and Shami, Abdallah},
  booktitle={IEMCON},
  pages={0372--0377},
  year={2020},
  organization={IEEE}
}

@inproceedings{zhou2023anomalyclip,
  title={AnomalyCLIP: Object-agnostic Prompt Learning for Zero-shot Anomaly Detection},
  author={Zhou, Qihang and Pang, Guansong and Tian, Yu and He, Shibo and Chen, Jiming},
  booktitle=ICLR,
  year={2024}
}

@inproceedings{zhu2024fine,
  title={Fine-grained Abnormality Prompt Learning for Zero-shot Anomaly Detection},
  author={Zhu, Jiawen and Ong, Yew-Soon and Shen, Chunhua and Pang, Guansong},
  booktitle=ICCV,
  pages={22241--22251},
  year={2025}
}

@inproceedings{li2023zeroshot,
    title={Zero-Shot Anomaly Detection via Batch Normalization},
    author={Aodong Li and Chen Qiu and Marius Kloft and Padhraic Smyth and Maja Rudolph and Stephan Mandt},
    booktitle=NIPS,
    pages={40963--40993},
    year={2023}
}

@inproceedings{xu2025towards,
  title={Towards Zero-Shot Anomaly Detection and Reasoning with Multimodal Large Language Models},
  author={Xu, Jiacong and Lo, Shao-Yuan and Safaei, Bardia and Patel, Vishal M and Dwivedi, Isht},
  booktitle=CVPR,
  pages={20370--20382},
  year={2025}
}

@inproceedings{ma2025aaclip,
  title={AA-CLIP: Enhancing Zero-shot Anomaly Detection via Anomaly-Aware CLIP}, 
  author={Ma, Wenxin and Zhang, Xu and Yao, Qingsong and Tang, Fenghe and Wu, Chenxu and Li, Yingtai and Yan, Rui and Jiang, Zihang and Zhou, S. Kevin},
  booktitle=CVPR,
  pages={4744--4754},
  year={2025}
}

@inproceedings{gu2024filo,
  title={FiLo: Zero-Shot Anomaly Detection by Fine-Grained Description and High-Quality Localization},
  author={Gu, Zhaopeng and Zhu, Bingke and Zhu, Guibo and Chen, Yingying and Li, Hao and Tang, Ming and Wang, Jinqiao},
  booktitle=ACMMM,
  pages={2041--2049},
  year={2024}
}

@inproceedings{salehi2021multiresolution,
  title={Multiresolution knowledge distillation for anomaly detection},
  author={Salehi, Mohammadreza and Sadjadi, Niousha and Baselizadeh, Soroosh and Rohban, Mohammad H and Rabiee, Hamid R},
  booktitle=CVPR,
  pages={14902--14912},
  year={2021}
}

@inproceedings{zhang2023destseg,
  title={Destseg: Segmentation guided denoising student-teacher for anomaly detection},
  author={Zhang, Xuan and Li, Shiyu and Li, Xi and Huang, Ping and Shan, Jiulong and Chen, Ting},
  booktitle=CVPR,
  pages={3914--3923},
  year={2023}
}

@inproceedings{zhang2024realnet,
  title={Realnet: A feature selection network with realistic synthetic anomaly for anomaly detection},
  author={Zhang, Ximiao and Xu, Min and Zhou, Xiuzhuang},
  booktitle=CVPR,
  pages={16699--16708},
  year={2024}
}

@inproceedings{liu2023simplenet,
  title={Simplenet: A simple network for image anomaly detection and localization},
  author={Liu, Zhikang and Zhou, Yiming and Xu, Yuansheng and Wang, Zilei},
  booktitle=CVPR,
  pages={20402--20411},
  year={2023}
}

@article{alabdulatif2017privacy,
  title={Privacy-preserving anomaly detection in cloud with lightweight homomorphic encryption},
  author={Alabdulatif, Abdulatif and Kumarage, Heshan and Khalil, Ibrahim and Yi, Xun},
  journal={Journal of Computer and System Sciences},
  volume={90},
  pages={28--45},
  year={2017}
}

@inproceedings{mayer2020privacy,
  title={Privacy-preserving anomaly detection using synthetic data},
  author={Mayer, Rudolf and Hittmeir, Markus and Ekelhart, Andreas},
  booktitle={IFIP Annual Conference on Data and Applications Security and Privacy},
  pages={195--207},
  year={2020},
  organization={Springer}
}

@inproceedings{li2021cutpaste,
  title={Cutpaste: Self-supervised learning for anomaly detection and localization},
  author={Li, Chun-Liang and Sohn, Kihyuk and Yoon, Jinsung and Pfister, Tomas},
  booktitle=CVPR,
  pages={9664--9674},
  year={2021}
}

@inproceedings{zavrtanik2021draem,
  title={Draem-a discriminatively trained reconstruction embedding for surface anomaly detection},
  author={Zavrtanik, Vitjan and Kristan, Matej and Sko{\v{c}}aj, Danijel},
  booktitle=ICCV,
  pages={8330--8339},
  year={2021}
}

@inproceedings{cao2023anomaly,
  title={Anomaly detection under distribution shift},
  author={Cao, Tri and Zhu, Jiawen and Pang, Guansong},
  booktitle=ICCV,
  pages={6511--6523},
  year={2023}
}

@inproceedings{bergmann2020uninformed,
  title={Uninformed students: Student-teacher anomaly detection with discriminative latent embeddings},
  author={Bergmann, Paul and Fauser, Michael and Sattlegger, David and Steger, Carsten},
  booktitle=CVPR,
  pages={4183--4192},
  year={2020}
}

@inproceedings{ding2022catching,
  title={Catching both gray and black swans: Open-set supervised anomaly detection},
  author={Ding, Choubo and Pang, Guansong and Shen, Chunhua},
  booktitle=CVPR,
  pages={7388--7398},
  year={2022}
}

@inproceedings{akcay2018ganomaly,
  title={Ganomaly: Semi-supervised anomaly detection via adversarial training},
  author={Akcay, Samet and Atapour-Abarghouei, Amir and Breckon, Toby P},
  booktitle=ACCV,
  pages={622--637},
  year={2018}
}

@inproceedings{li2026iad,
  title={Iad-r1: Reinforcing consistent reasoning in industrial anomaly detection},
  author={Li, Yanhui and Cao, Yunkang and Liu, Chengliang and Xiong, Yuan and Dong, Xinghui and Huang, Chao},
  booktitle=AAAI,
  pages={6583--6591},
  year={2026}
}

@article{fernando2021deep,
  title={Deep learning for medical anomaly detection--a survey},
  author={Fernando, Tharindu and Gammulle, Harshala and Denman, Simon and Sridharan, Sridha and Fookes, Clinton},
  journal={ACM Computing Surveys},
  volume={54},
  number={7},
  pages={1--37},
  year={2021}
}

@inproceedings{zhu2024anomaly,
  title={Anomaly heterogeneity learning for open-set supervised anomaly detection},
  author={Zhu, Jiawen and Ding, Choubo and Tian, Yu and Pang, Guansong},
  booktitle=CVPR,
  pages={17616--17626},
  year={2024}
}

@inproceedings{wang2025distribution,
  title={Distribution prototype diffusion learning for open-set supervised anomaly detection},
  author={Wang, Fuyun and Zhang, Tong and Wang, Yuanzhi and Qiu, Yide and Liu, Xin and Guo, Xu and Cui, Zhen},
  booktitle=CVPR,
  pages={20416--20426},
  year={2025}
}

@inproceedings{wang2025normal,
  title={Normal-Abnormal Guided Generalist Anomaly Detection},
  author={Wang, Yuexin and Wang, Xiaolei and Gong, Yizheng and Xiao, Jimin},
  booktitle=NIPS,
  year={2025}
}

@inproceedings{wang2019gods,
  title={Gods: Generalized one-class discriminative subspaces for anomaly detection},
  author={Wang, Jue and Cherian, Anoop},
  booktitle=ICCV,
  pages={8201--8211},
  year={2019}
}

@inproceedings{deng2022anomaly,
  title={Anomaly detection via reverse distillation from one-class embedding},
  author={Deng, Hanqiu and Li, Xingyu},
  booktitle=CVPR,
  pages={9737--9746},
  year={2022}
}

@inproceedings{huang2022registration,
  title={Registration based few-shot anomaly detection},
  author={Huang, Chaoqin and Guan, Haoyan and Jiang, Aofan and Zhang, Ya and Spratling, Michael and Wang, Yan-Feng},
  booktitle=ECCV,
  pages={303--319},
  year={2022}
}

@inproceedings{li2024promptad,
  title={Promptad: Learning prompts with only normal samples for few-shot anomaly detection},
  author={Li, Xiaofan and Zhang, Zhizhong and Tan, Xin and Chen, Chengwei and Qu, Yanyun and Xie, Yuan and Ma, Lizhuang},
  booktitle=CVPR,
  pages={16838--16848},
  year={2024}
}

@inproceedings{wang2022few,
  title={Few-shot fast-adaptive anomaly detection},
  author={Wang, Ze and Zhou, Yipin and Wang, Rui and Lin, Tsung-Yu and Shah, Ashish and Lim, Ser Nam},
  booktitle=NIPS,
  pages={4957--4970},
  year={2022}
}

@article{cohen2020sub,
  title={Sub-image anomaly detection with deep pyramid correspondences},
  author={Cohen, Niv and Hoshen, Yedid},
  journal={arXiv preprint arXiv:2005.02357},
  year={2020}
}

@inproceedings{defard2021padim,
  title={Padim: a patch distribution modeling framework for anomaly detection and localization},
  author={Defard, Thomas and Setkov, Aleksandr and Loesch, Angelique and Audigier, Romaric},
  booktitle=ICPR,
  pages={475--489},
  year={2021}
}

@inproceedings{pang2018learning,
  title={Learning representations of ultrahigh-dimensional data for random distance-based outlier detection},
  author={Pang, Guansong and Cao, Longbing and Chen, Ling and Liu, Huan},
  booktitle=KDD,
  pages={2041--2050},
  year={2018}
}

@inproceedings{roth2022towards,
  title={Towards total recall in industrial anomaly detection},
  author={Roth, Karsten and Pemula, Latha and Zepeda, Joaquin and Sch{\"o}lkopf, Bernhard and Brox, Thomas and Gehler, Peter},
  booktitle=CVPR,
  pages={14318--14328},
  year={2022}
}

@inproceedings{he2024diffusion,
  title={A diffusion-based framework for multi-class anomaly detection},
  author={He, Haoyang and Zhang, Jiangning and Chen, Hongxu and Chen, Xuhai and Li, Zhishan and Chen, Xu and Wang, Yabiao and Wang, Chengjie and Xie, Lei},
  booktitle=AAAI,
  pages={8472--8480},
  year={2024}
}

@inproceedings{you2022unified,
  title={A unified model for multi-class anomaly detection},
  author={You, Zhiyuan and Cui, Lei and Shen, Yujun and Yang, Kai and Lu, Xin and Zheng, Yu and Le, Xinyi},
  booktitle=NIPS,
  pages={4571--4584},
  year={2022}
}

@article{liao2024coft,
  title={COFT-AD: Contrastive fine-tuning for few-shot anomaly detection},
  author={Liao, Jingyi and Xu, Xun and Nguyen, Manh Cuong and Goodge, Adam and Foo, Chuan Sheng},
  journal={IEEE Transactions on Image Processing},
  volume={33},
  pages={2090--2103},
  year={2024}
}

@inproceedings{tian2024foct,
  title={Foct: Few-shot industrial anomaly detection with foreground-aware online conditional transport},
  author={Tian, Long and Zhao, Hongyi and Lu, Ruiying and Wang, Rongrong and Wu, Yujie and Wang, Liming and He, Xiongpeng and Liu, Xiyang},
  booktitle=ACMMM,
  pages={6241--6249},
  year={2024}
}

@inproceedings{tao2025kernel,
  title={Kernel-aware graph prompt learning for few-shot anomaly detection},
  author={Tao, Fenfang and Xie, Guo-Sen and Zhao, Fang and Shu, Xiangbo},
  booktitle=AAAI,
  pages={7347--7355},
  year={2025}
}

@inproceedings{yao2023explicit,
  title={Explicit boundary guided semi-push-pull contrastive learning for supervised anomaly detection},
  author={Yao, Xincheng and Li, Ruoqi and Zhang, Jing and Sun, Jun and Zhang, Chongyang},
  booktitle=CVPR,
  pages={24490--24499},
  year={2023}
}

@inproceedings{Ruff2020Deep,
    title={Deep Semi-Supervised Anomaly Detection},
    author={Lukas Ruff and Robert A. Vandermeulen and Nico Görnitz and Alexander Binder and Emmanuel Müller and Klaus-Robert Müller and Marius Kloft},
    booktitle=ICLR,
    year={2020}
}

@inproceedings{yi2020patch,
  title={Patch svdd: Patch-level svdd for anomaly detection and segmentation},
  author={Yi, Jihun and Yoon, Sungroh},
  booktitle=ACCV,
  year={2020}
}

@article{abdi2010principal,
  title={Principal component analysis},
  author={Abdi, Herv{\'e} and Williams, Lynne J},
  journal={Wiley Interdisciplinary Reviews: Computational Statistics},
  volume={2},
  number={4},
  pages={433--459},
  year={2010}
}

@article{tipping1999prob,
  title={Probabilistic principal component analysis},
  author={Tipping, Michael E and Bishop, Christopher M},
  journal={Journal of the Royal Statistical Society Series B: Statistical Methodology},
  volume={61},
  number={3},
  pages={611--622},
  year={1999}
}

@inproceedings{yao2024hierarchical,
  title={Hierarchical gaussian mixture normalizing flow modeling for unified anomaly detection},
  author={Yao, Xincheng and Li, Ruoqi and Qian, Zefeng and Wang, Lu and Zhang, Chongyang},
  booktitle=ECCV,
  pages={92--108},
  year={2024}
}

@inproceedings{yao2023one,
  title={One-for-all: Proposal masked cross-class anomaly detection},
  author={Yao, Xincheng and Zhang, Chongyang and Li, Ruoqi and Sun, Jun and Liu, Zhenyu},
  booktitle=AAAI,
  pages={4792--4800},
  year={2023}
}

@inproceedings{lv2025one,
  title={One-for-all few-shot anomaly detection via instance-induced prompt learning},
  author={Lv, Wenxi and Su, Qinliang and Xu, Wenchao},
  booktitle=ICLR,
  year={2025}
}

@inproceedings{lv2025metacan,
  title={MetaCAN: Improving Generalizability of Few-shot Anomaly Detection with Meta-learning},
  author={Lv, Zhisheng and Zhang, Jianfeng and Jian, Songlei and Huang, Chenlin and Zhang, Hongguang and Pang, Guansong and Liu, Zhong},
  booktitle=CIKM,
  pages={2032--2041},
  year={2025}
}

@inproceedings{jeong2023winclip,
  title={Winclip: Zero-/few-shot anomaly classification and segmentation},
  author={Jeong, Jongheon and Zou, Yang and Kim, Taewan and Zhang, Dongqing and Ravichandran, Avinash and Dabeer, Onkar},
  booktitle=CVPR,
  pages={19606--19616},
  year={2023}
}

@inproceedings{dong2026dual,
    title={Dual Distillation for Few-Shot Anomaly Detection},
    author={Le Dong and Qinzhong Tan and Chunlei Li and Jingliang Hu and Yilei Shi and Weisheng Dong and Xiao Xiang Zhu and Lichao Mou},
    booktitle=ICLR,
    year={2026}
}

@inproceedings{vaswani2017attention,
  title={Attention is all you need},
  author={Vaswani, Ashish and Shazeer, Noam and Parmar, Niki and Uszkoreit, Jakob and Jones, Llion and Gomez, Aidan N and Kaiser, {\L}ukasz and Polosukhin, Illia},
  booktitle=NIPS,
  pages={5998--6008},
  year={2017}
}

@inproceedings{damm2025anomalydino,
  title={Anomalydino: Boosting patch-based few-shot anomaly detection with dinov2},
  author={Damm, Simon and Laszkiewicz, Mike and Lederer, Johannes and Fischer, Asja},
  booktitle=WACV,
  pages={1319--1329},
  year={2025}
}

@inproceedings{hwang2024anomaly,
  title={Anomaly score: Evaluating generative models and individual generated images based on complexity and vulnerability},
  author={Hwang, Jaehui and Lee, Junghyuk and Lee, Jong-Seok},
  booktitle=CVPR,
  pages={8754--8763},
  year={2024}
}

@inproceedings{lin2017focal,
  title={Focal loss for dense object detection},
  author={Lin, Tsung-Yi and Goyal, Priya and Girshick, Ross and He, Kaiming and Doll{\'a}r, Piotr},
  booktitle=ICCV,
  pages={2980--2988},
  year={2017}
}

@inproceedings{milletari2016v,
  title={V-net: Fully convolutional neural networks for volumetric medical image segmentation},
  author={Milletari, Fausto and Navab, Nassir and Ahmadi, Seyed-Ahmad},
  booktitle={International Conference on 3D Vision},
  pages={565--571},
  year={2016}
}

@inproceedings{mannor2005cross,
  title={The cross entropy method for classification},
  author={Mannor, Shie and Peleg, Dori and Rubinstein, Reuven},
  booktitle=ICML,
  pages={561--568},
  year={2005}
}

@inproceedings{mansour2008domain,
  title={Domain adaptation with multiple sources},
  author={Mansour, Yishay and Mohri, Mehryar and Rostamizadeh, Afshin},
  booktitle=NIPS,
  pages={1041--1048},
  year={2008}
}

@inproceedings{ben2006analysis,
  title={Analysis of representations for domain adaptation},
  author={Ben-David, Shai and Blitzer, John and Crammer, Koby and Pereira, Fernando},
  booktitle=NIPS,
  pages={137--144},
  year={2006}
}

@article{fang2014domain,
  title={Domain adaptation for sentiment classification in light of multiple sources},
  author={Fang, Fang and Dutta, Kaushik and Datta, Anindya},
  journal={INFORMS Journal on Computing},
  volume={26},
  number={3},
  pages={586--598},
  year={2014}
}

@article{ben2010theory,
  title={A theory of learning from different domains},
  author={Ben-David, Shai and Blitzer, John and Crammer, Koby and Kulesza, Alex and Pereira, Fernando and Vaughan, Jennifer Wortman},
  journal={Machine Learning},
  volume={79},
  number={1},
  pages={151--175},
  year={2010},
  publisher={Springer}
}

@inproceedings{sun2011two,
  title={A two-stage weighting framework for multi-source domain adaptation},
  author={Sun, Qian and Chattopadhyay, Rita and Panchanathan, Sethuraman and Ye, Jieping},
  booktitle=NIPS,
  pages={505--513},
  year={2011}
}

@inproceedings{zhang2019bridging,
  title={Bridging theory and algorithm for domain adaptation},
  author={Zhang, Yuchen and Liu, Tianle and Long, Mingsheng and Jordan, Michael},
  booktitle=ICML,
  pages={7404--7413},
  year={2019}
}

@article{zhang2020unsupervised,
  title={Unsupervised multi-class domain adaptation: Theory, algorithms, and practice},
  author={Zhang, Yabin and Deng, Bin and Tang, Hui and Zhang, Lei and Jia, Kui},
  journal={IEEE Transactions on Pattern Analysis and Machine Intelligence},
  volume={44},
  number={5},
  pages={2775--2792},
  year={2020}
}

@inproceedings{bergmann2019mvtec,
  title={MVTec AD--A comprehensive real-world dataset for unsupervised anomaly detection},
  author={Bergmann, Paul and Fauser, Michael and Sattlegger, David and Steger, Carsten},
  booktitle=CVPR,
  pages={9592--9600},
  year={2019}
}

@inproceedings{zou2022spot,
  title={Spot-the-difference self-supervised pre-training for anomaly detection and segmentation},
  author={Zou, Yang and Jeong, Jongheon and Pemula, Latha and Zhang, Dongqing and Dabeer, Onkar},
  booktitle=ECCV,
  pages={392--408},
  year={2022}
}

@article{silvestre2019public,
  title={A public fabric database for defect detection methods and results},
  author={Javier Silvestre-Blanes and Teresa Albero-Albero and Ignacio Miralles and Rubén Pérez-Llorens and Jorge Moreno},
  journal={Autex Research Journal},
  pages={363--374},
  volume={19},
  number={4},
  year={2019}
}

@inproceedings{jezek2021deep,
  title={Deep learning-based defect detection of metal parts: evaluating current methods in complex conditions},
  author={Jezek, Stepan and Jonak, Martin and Burget, Radim and Dvorak, Pavel and Skotak, Milos},
  booktitle={International Congress on Ultra Modern Telecommunications and Control Systems and Workshops},
  pages={66--71},
  year={2021}
}

@inproceedings{mishra2021vt,
  title={VT-ADL: A vision transformer network for image anomaly detection and localization},
  author={Mishra, Pankaj and Verk, Riccardo and Fornasier, Daniele and Piciarelli, Claudio and Foresti, Gian Luca},
  booktitle={International Symposium on Industrial Electronics},
  pages={01--06},
  year={2021}
}

@article{menze2014multimodal,
  title={The multimodal brain tumor image segmentation benchmark (BRATS)},
  author={Menze, Bjoern H and Jakab, Andras and Bauer, Stefan and Kalpathy-Cramer, Jayashree and Farahani, Keyvan and Kirby, Justin and Burren, Yuliya and Porz, Nicole and Slotboom, Johannes and Wiest, Roland and others},
  journal={IEEE Transactions on Medical Imaging},
  volume={34},
  number={10},
  pages={1993--2024},
  year={2015}
}

@article{hu2019automated,
  title={Automated segmentation of macular edema in OCT using deep neural networks},
  author={Hu, Junjie and Chen, Yuanyuan and Yi, Zhang},
  journal={Medical Image Analysis},
  volume={55},
  pages={216--227},
  year={2019}
}

@inproceedings{zhai2026foundation,
    title={Foundation Visual Encoders Are Secretly Few-Shot Anomaly Detectors},
    author={Guangyao Zhai and Yue Zhou and Xinyan Deng and Lars Heckler-Kram and Nassir Navab and Benjamin Busam},
    booktitle=ICLR,
    year={2026}
}

@inproceedings{dosovitskiy2021an,
    title={An Image is Worth 16x16 Words: Transformers for Image Recognition at Scale},
    author={Alexey Dosovitskiy and Lucas Beyer and Alexander Kolesnikov and Dirk Weissenborn and Xiaohua Zhai and Thomas Unterthiner and Mostafa Dehghani and Matthias Minderer and Georg Heigold and Sylvain Gelly and Jakob Uszkoreit and Neil Houlsby},
    booktitle=ICLR,
    year={2021}
}

@inproceedings{loshchilov2018decoupled,
    title={Decoupled Weight Decay Regularization},
    author={Ilya Loshchilov and Frank Hutter},
    booktitle=ICLR,
    year={2019}
}

@article{van2008visualizing,
  title={Visualizing data using t-SNE},
  author={Van der Maaten, Laurens and Hinton, Geoffrey},
  journal={Journal of Machine Learning Research},
  volume={9},
  number={11},
  pages={2579--2605},
  year={2008}
}
}

\clearpage
\newpage
\appendix
\renewcommand{\thefigure}{A\arabic{figure}}
\renewcommand{\thetable}{A\arabic{table}}
\renewcommand{\theequation}{A\arabic{equation}}
\makeatletter
\renewcommand{\theHfigure}{appendix.A.\arabic{figure}}
\renewcommand{\theHtable}{appendix.A.\arabic{table}}
\renewcommand{\theHequation}{appendix.A.\arabic{equation}}
\makeatother
\setcounter{figure}{0}
\setcounter{table}{0}
\setcounter{equation}{0}

\section{Algorithm Pseudo-code Flow} \label{sec:pcode}
In this section, we describe the pseudo-code of IDEAL in Algorithm~\ref{alg:ours}: 
1) \textit{Training on Auxiliary Datasets}: IDEAL is trained on auxiliary datasets in an episodic manner, where each training episode is constructed to simulate the few-shot inference scenario, consisting of a query image together with limited normal and abnormal references sampled from the source dataset. 
2) \textit{Inference}: IDEAL is evaluated on the target dataset without further training. For each target dataset, a few fixed normal and abnormal references are provided together with a test query. Based on learned intrinsic deviation vectors, IDEAL produces deviation-guided anomaly scores for each test query.

\begin{algorithm}[H]
\caption{\colorbox{myyellow}{\textbf{IDEAL}: Intrinsic Deviation Learning for Discriminative FSAD}}
\label{alg:ours}
\SetAlgoLined
\SetNoFillComment
\SetArgSty{textnormal}

{\color{LightBlue}{\tcc{\textbf{\colorbox{myblue}{Training on Auxiliary Datasets}}}}}

$\mathcal{D}_{tr}^{q} \Leftarrow \mathcal{D}_{tr}$ 
{\footnotesize{\color{DarkBlue}{\tcp{randomly select training queries from auxiliary datasets}}}}

\For {each training query $\mathbf{x}^{q}$ from $\mathcal{D}_{tr}^{q}$}{
    $\mathcal{S}^{n}, \mathcal{S}^{a} \Leftarrow \mathcal{D}_{tr}$ 
    {\footnotesize{\color{DarkBlue}{\tcp{randomly sample normal and abnormal references}}}}

    $\mathbf{E}=\{\mathbf{x}^{q}, \mathcal{S}^{n}, \mathcal{S}^{a}\}$ 
    {\footnotesize{\color{DarkBlue}{\tcp{construct training episode}}}}

    $\mathcal{F}^{q}, \mathcal{F}^{n}, \mathcal{F}^{a} \; \leftarrow \; \mathcal{E}(\mathbf{x}^{q}), \mathcal{E}(\mathcal{S}^{n}), \mathcal{E}(\mathcal{S}^{a})$ 
    {\footnotesize{\color{DarkBlue}{\tcp{extracting features}}}}
    
    {\footnotesize{\color{DarkBlue}{\tcc{Normal Variation Eraser (Sec.~\ref{sec:nve})}}}}
    
    $\mathcal{F}^{a}_{\mathtt{res}} = \{f^{a}_{i,\mathtt{res}}\}_{i=1}^{L_{2} \cdot N} \leftarrow \;$ Eq.~\eqref{eq:eq1} 
    {\footnotesize{\color{DarkBlue}{\tcp{residual deviations}}}}
    
    $\mathcal{F}^{a}_{\mathtt{den}} = \{f^{a}_{i,\mathtt{den}}\}_{i=1}^{L_{2} \cdot N} \leftarrow \;$ Eq.~\eqref{eq:eq4} 
    {\footnotesize{\color{DarkBlue}{\tcp{denoised deviations}}}}

    {\footnotesize{\color{DarkBlue}{\tcc{Intrinsic Deviation Encoder (Sec.~\ref{sec:ace})}}}}

    $\mathcal{T} \in \mathbb{R}^{M \times \mathbb{C}}$ 
    {\footnotesize{\color{DarkBlue}{\tcp{learnable vectors}}}}

    $\mathcal{T}^{*} = \{t_{m}\}_{1 \le m \le M} \; \leftarrow \;$ Eq.~\eqref{eq:eq6} 
    {\footnotesize{\color{DarkBlue}{\tcp{intrinsic deviation vectors}}}}

    {\footnotesize{\color{DarkBlue}{\tcc{Anomaly Scoring with Projection (Sec.~\ref{sec:score})}}}}

    $\mathcal{F}^{q}_{\mathtt{den}} = \{f^{q}_{i,\mathtt{den}}\}_{i=1}^{N} \leftarrow \;$ Eq.~\eqref{eq:eq4} 
    {\footnotesize{\color{DarkBlue}{\tcp{query's denoised deviations}}}}

    $\mathbf{\tilde{f}}^{q}_{i,\mathtt{den}} = \mathtt{Proj}(f^{q}_{i,\mathtt{den}}) \leftarrow \;$ Eq.~\eqref{eq:eq8} {\footnotesize{\color{DarkBlue}{\tcp{projection}}}}

    $A^q, \mathcal{A}^{q} \leftarrow \;$ Eq.~\eqref{eq:eq9} 
    {\footnotesize{\color{DarkBlue}{\tcp{anomaly scoring}}}}

    $\mathcal{L}_{Dual}(\mathcal{F}^{a}_{\mathtt{den}}, \mathcal{T}^{*}) \leftarrow \;$ Eq.~\eqref{eq:eq7} 
    {\footnotesize{\color{DarkBlue}{\tcp{dual-branch loss}}}}

    $\mathcal{L}_{Focal}(A^{q}, \mathcal{M}^{q}) \leftarrow \;$ Eq.~\eqref{eq:eq10} 
    {\footnotesize{\color{DarkBlue}{\tcp{Focal loss}}}}

    $\mathcal{L}_{Dice}(A^{q}, \mathcal{M}^{q}) \leftarrow \;$ Eq.~\eqref{eq:eq10} 
    {\footnotesize{\color{DarkBlue}{\tcp{Dice loss}}}}

    $\mathcal{L}_{BCE}(\mathcal{A}^{q}, \mathbf{y}^{q}) \leftarrow \;$ Eq.~\eqref{eq:eq10} 
    {\footnotesize{\color{DarkBlue}{\tcp{BCE loss}}}}

    $\mathcal{L}_{train}=\mathcal{L}_{Focal}+\mathcal{L}_{Dice}+\mathcal{L}_{BCE}+\mathcal{L}_{Dual} \leftarrow \;$ Eq.~\eqref{eq:eq10} 
    {\footnotesize{\color{DarkBlue}{\tcp{training objective}}}}

    $\Phi \leftarrow \Phi - \eta \nabla \mathcal{L}_{train}(\Phi; \mathbf{E})$ 
    {\footnotesize{\color{DarkBlue}{\tcp{update model}}}}
}

{\color{LightBlue}{\tcc{\textbf{\colorbox{myblue}{Inference on Target Dataset}}}}}

$\mathcal{S}^{n}_c, \mathcal{S}^{a}_c$ 
{\footnotesize{\color{DarkBlue}{\tcp{fixed normal and abnormal references for each target dataset $c$}}}}

\For {each test query $\mathbf{x}^{q}_c$ from the target dataset $c$}{
    $\mathbf{E}=\{\mathbf{x}^{q}_c, \mathcal{S}^{n}_c, \mathcal{S}^{a}_c\}$ 
    {\footnotesize{\color{DarkBlue}{\tcp{construct inference episode}}}}

    $\mathcal{F}^{q}_c, \mathcal{F}^{n}_c, \mathcal{F}^{a}_c \; \leftarrow \; \mathcal{E}(\mathbf{x}^{q}_c), \mathcal{E}(\mathcal{S}^{n}_c), \mathcal{E}(\mathcal{S}^{a}_c)$ 
    {\footnotesize{\color{DarkBlue}{\tcp{extracting features}}}}

    $\mathcal{T}^{*} = \{t_{m}\}_{1 \le m \le M} $ 
    {\footnotesize{\color{DarkBlue}{\tcp{learned intrinsic deviation vectors}}}}

    $\mathcal{F}^{q}_{c,\mathtt{den}} \leftarrow \;$ Eq.~\eqref{eq:eq4} 
    {\footnotesize{\color{DarkBlue}{\tcp{query's denoised deviations}}}}

    $\mathbf{\tilde{f}}^{q}_{c,i,\mathtt{den}} \leftarrow \;$ Eq.~\eqref{eq:eq8} 
    {\footnotesize{\color{DarkBlue}{\tcp{projection}}}}

    $A^q_c \; \leftarrow \;$ Eq.~\eqref{eq:eq9} 
    {\footnotesize{\color{DarkBlue}{\tcp{pixel-level anomaly score}}}}

    $\mathcal{A}^q_c \; \leftarrow \;$ Eq.~\eqref{eq:eq9} 
    {\footnotesize{\color{DarkBlue}{\tcp{image-level anomaly score}}}}
}
\end{algorithm}

\section{Generalization Analysis of IDEAL} \label{sec:bound}
In this section, as in standard theoretical treatments of domain adaptation, we analyze the generalization of IDEAL in two steps because target data are unavailable during training. 
We begin with a generalization bound on source datasets (Sec.~\ref{src_bound}), and subsequently extend this bound to the target dataset (Sec.~\ref{tar_bound}) based on the multi-source domain adaptation theory developed in \cite{mansour2008domain,ben2006analysis,ben2010theory,fang2014domain}.

\subsection{Preliminaries}
Here, we introduce some additional variables to better represent the process of IDEAL. 
We begin by formalizing the source training process at the episode level. 
Suppose there are $K$ labeled source domains $\{S_k\}_{k=1}^K$, and $\{\mathbb{P}_{S_1},\ldots,\mathbb{P}_{S_K}\}$ denote corresponding source episode distributions. Each input episode is of the form $\mathbf{E}=\{\mathbf{x}^{q}, \mathcal{S}^{n}, \mathcal{S}^{a}\}$, where the query $\mathbf{x}^{q}$ with a ground-truth mask $\mathcal{M}^{q}$ and a label $\mathbf{y}^{q}$, $\mathcal{S}^{n}$ and $\mathcal{S}^{a}$ are the normal and abnormal reference set. Let $\mathcal{L}(\Phi;\mathbf{E})$ represent the episode-wise loss induced by IDEAL, \textit{i.e.}, the training objective in Eq.~\eqref{eq:eq10}, and rescale it to [$0,1$]. Denote the model $\Phi$ as the hypothesis from $\mathcal{H}$ and $d$ be the VC-dimension of $\mathcal{H}$. For any source weighting vector $\beta=(\beta_1,\ldots,\beta_K) \in \Delta_K$, following \cite{sun2011two,ben2010theory}, define the weighted source risk is:
\begin{equation}
    \label{eq:a1}
    \epsilon_{\beta}(\Phi) = \sum_{k=1}^{K} \beta_k \mathbb{E}_{\mathbf{E} \sim \mathbb{P}_{S_k}} [\mathcal{L}(\Phi;\mathbf{E})].
\end{equation}
Suppose that, for the $k$-th source domain, we observe $n_k$ training episodes as $\{\mathbf{E}_{k,i}\}_{i=1}^{n_k}$, and let the total number as $n=\sum_{k=1}^{K} n_k$. Thus, the corresponding empirical weighted source risk is:
\begin{equation}
    \label{eq:a2}
    \hat{\epsilon}_{\beta}(\Phi) = \sum_{k=1}^{K} \beta_k \frac{1}{n_k} \sum_{i=1}^{n_k} \mathcal{L}(\Phi;\mathbf{E}_{k,i}).
\end{equation}
Based on learned intrinsic deviation patterns of IDEAL, the episode-loss class can be decomposed as $\mathcal{G}=\mathbf{\cup}_{m=1}^{M} \mathcal{G}_{m}$, where $M$ is the number of intrinsic deviation patterns and $\mathcal{G}_{m}$ is the episode-loss subclass associated with the $m$-th learned deviation pattern. Equivalently, $\mathcal{G}$ is the episode-loss class induced by IDEAL over all deviation patterns: $\mathcal{G}=\{\mathbf{E} \mapsto \mathcal{L}(\Phi;\mathbf{E}): \Phi \in \mathcal{H}\}$.

\subsection{Assumptions}
Let $\mathcal{L}(\Phi;\mathbf{E})$ represent the episode-wise loss induced by IDEAL, and let $\mathcal{G}$ be the induced loss class, our generalization analysis follows standard settings in general domain adaptation works \cite{zhang2019bridging,ben2006analysis,zhang2020unsupervised} and is based on the following assumptions:
\begin{assumption}[Episode Sampling within Source Domains] \label{ass1}
    For any labeled source domain $S_k$, the training episodes $\{\mathbf{E}_{k,i}\}_{i=1}^{n_k}$ are drawn i.i.d. from the underlying distribution $\mathbb{P}_{S_k}$. Moreover, the episode sets from different source domains $\{S_1,\ldots,S_K\}$ are mutually independent.
\end{assumption}
\begin{assumption}[Bounded Episode Loss] \label{ass2}
There exists a constant $B>0$ such that for all $\Phi \in \mathcal{H}$ and all episodes $\mathbf{E}$, have $0 \le \mathcal{L}(\Phi;\mathbf{E}) \le B$. Without loss of generality, we normalize $B=1$.
\end{assumption}
\begin{assumption}[Hypothesis Capacity] \label{ass3}
The hypothesis class $\mathcal{H}$ has finite VC-dimension $d$. 
Consequently, the induced episode-loss class $\mathcal{G}$ has finite pseudo-dimension.
\end{assumption}
\begin{assumption}[Lipschitz Continuity] \label{ass4}
Each real-valued loss function $\mathcal{L}(\Phi;\mathbf{E})$ is $L$-Lipschitz continuous with respect to the model output.
\end{assumption}

\subsection{Proof of Source-Side Generalization Bound} \label{src_bound}
\begin{theorem}[Source-Side Generalization Bound of IDEAL] \label{the1}
    Consider there are $K$ labeled source domains. Let $\mathbb{P}_{S_1},\ldots,\mathbb{P}_{S_K}$ be the corresponding source episode distributions. Denote the model $\Phi$ as the hypothesis from $\mathcal{H}$ and $d$ be the VC-dimension of $\mathcal{H}$. The total number of episodes over all $K$ source domains is $n$. Then, with probability at least $1-\delta$,
    \begin{equation}
        \label{eq:a3}
        \sup_{\Phi \in \mathcal{H}} | \epsilon_{\beta}(\Phi) - \hat{\epsilon}_{\beta}(\Phi) | \le 
        \sqrt{\frac{1}{2} \left( \sum_{k=1}^{K} \frac{\beta_k^2}{n_k} \right) \log \frac{M}{\delta}} + 
        2\sum_{k=1}^{K} \beta_k \sqrt{\frac{d}{n_k} \log \frac{e n_k}{d}}.
    \end{equation}
\end{theorem}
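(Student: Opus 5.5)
The plan is the standard two-part uniform convergence argument: a concentration step for the deviation from its mean, plus a capacity (Rademacher/VC) step for that mean. Write
\[
\Delta(\Phi)=\epsilon_{\beta}(\Phi)-\hat{\epsilon}_{\beta}(\Phi)=\sum_{k=1}^{K}\beta_k\Big(\mathbb{E}_{\mathbf{E}\sim\mathbb{P}_{S_k}}[\mathcal{L}(\Phi;\mathbf{E})]-\frac{1}{n_k}\sum_{i=1}^{n_k}\mathcal{L}(\Phi;\mathbf{E}_{k,i})\Big),
\]
and set $Z=\sup_{\Phi\in\mathcal{H}}|\Delta(\Phi)|$. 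By Assumption~\ref{ass1}, $Z$ is a function of the $n$ independent episodes $\{\mathbf{E}_{k,i}\}$. By Assumption~\ref{ass2} (loss in $[0,1]$), replacing a single episode $\mathbf{E}_{k,i}$ changes $Z$ by at most $c_{k,i}=\beta_k/n_k$. Hence $\sum_{k,i}c_{k,i}^2=\sum_k\beta_k^2/n_k$, and McDiarmid's inequality gives
\[
Z\le\mathbb{E}Z+\sqrt{\tfrac12\big(\textstyle\sum_k\beta_k^2/n_k\big)\log(1/\delta')}
\]
with probability at least $1-\delta'$. This matches the first term of Eq.~\eqref{eq:a3}.

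\textbf{Role of $M$.} To obtain $\log(M/\delta)$ instead of $\log(1/\delta)$, use the decomposition $\mathcal{G}=\cup_{m=1}^{M}\mathcal{G}_m$ stated in the preliminaries. Apply the McDiarmid step separately to each subclass supremum $Z_m=\sup_{\Phi\in\mathcal{H}_m}|\Delta(\Phi)|$, with confidence $\delta/M$ each. A union bound over the $M$ deviation patterns then yields the bound simultaneously for all $m$ with probability $1-\delta$. Since $Z=\max_m Z_m$, it remains to bound $\max_m\mathbb{E}Z_m$ by a quantity not depending on $m$. Each $\mathcal{G}_m\subseteq\mathcal{G}$, so $\mathbb{E}Z_m$ is at most the complexity bound for the full class.

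\textbf{Expectation term.} Symmetrize domain by domain. With ghost samples and independent Rademacher signs, this gives
\[
\mathbb{E}Z_m\le 2\sum_k\beta_k\,\mathfrak{R}_{n_k}(\mathcal{G}\,;\,\mathbb{P}_{S_k}),
\]
where the supremum is split across domains by the triangle inequality; this only loosens the bound. Then bound each Rademacher complexity via Assumptions~\ref{ass3}--\ref{ass4}. The Lipschitz loss (Talagrand contraction) reduces the problem to the hypothesis class. Sauer's lemma bounds the growth function by $(en_k/d)^d$. Massart's finite-class lemma then gives $\mathfrak{R}_{n_k}\le\sqrt{2d\log(en_k/d)/n_k}$. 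Summing yields the second term $2\sum_k\beta_k\sqrt{(d/n_k)\log(en_k/d)}$.

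\textbf{Main obstacle.} The delicate step is matching constants in this last part. Massart produces a factor $\sqrt{2}$, and contraction produces a factor $L$; both must be absorbed, either by normalizing $L\le1$ together with the $[0,1]$ rescaling of the loss, or by using the pseudo-dimension growth bound for $\mathcal{G}$ directly. If exact constants cannot be recovered, I would state the bound up to a universal constant and note that the paper's form corresponds to $L=1$ with the sharper VC growth estimate.
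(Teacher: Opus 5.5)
Your skeleton is the paper's own: McDiarmid with bounded differences $\beta_k/n_k$ applied to each subclass supremum, a union bound over the $M$ subclasses, per-domain symmetrization with the supremum split across domains, and an $m$-independent capacity bound. The paper then simply cites a ``standard complexity bound'' for Eq.~\eqref{eq:a10}. The gap is in your justification of that last step.

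After Talagrand contraction you are left with the Rademacher complexity of $\mathcal{H}$ itself. Here $\Phi$ is real-valued (it outputs continuous score maps), so its trace on $n_k$ episodes is an infinite set. Sauer's lemma does not bound it, and Massart's finite-class lemma does not apply. Conversely, if $\mathcal{H}$ were $\{0,1\}$-valued, the loss vector on the sample would be determined by the labeling pattern. Massart would then apply to $\mathcal{G}$ directly, and contraction (hence $L$) would never enter. Scalar contraction is also a poor fit for a model whose output is an $N$-patch map and whose loss contains $\mathcal{L}_{Dual}$ through $\mathcal{T}^{*}$. Finally, retreating to ``up to a universal constant'' would prove a weaker statement than the theorem.

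The repair bypasses Assumption~\ref{ass4} and also removes the $\sqrt{2}$ you were worried about. Write $\mathbb{E}_k g$ and $\hat{\mathbb{E}}_k g$ for the population and empirical means on $S_k$. In the ghost-sample step the summands are $g(\mathbf{E}'_{k,i})-g(\mathbf{E}_{k,i})$, which are unchanged by $g\mapsto g-\tfrac12$. Hence
\[
\mathbb{E}\sup_{g\in\mathcal{G}}\big|\mathbb{E}_k g-\hat{\mathbb{E}}_k g\big|\le 2\,\mathbb{E}\sup_{g\in\mathcal{G}}\Big|\frac{1}{n_k}\sum_{i=1}^{n_k}\sigma_i\big(g(\mathbf{E}_{k,i})-\tfrac12\big)\Big| .
\]
Now use $g-\tfrac12=\int_0^1\big(\mathbf{1}[g>t]-\tfrac12\big)\,dt$ and move the supremum inside the integral.

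For each $t$, the class $\{\mathbf{1}[g>t]:g\in\mathcal{G}\}$ has VC-dimension at most $\mathrm{Pdim}(\mathcal{G})$: any set it shatters is pseudo-shattered by $\mathcal{G}$ with all witnesses equal to $t$. Since the Sauer bound is monotone in the dimension for $d\le n_k$, its centered trace consists of at most $(en_k/d)^d$ vectors in $\{\pm\tfrac12\}^{n_k}$, each of norm $\sqrt{n_k}/2$. Apply Massart to this set together with its negative (to handle the absolute value). This gives $\sqrt{\log\big(2(en_k/d)^d\big)/(2n_k)}$, which is at most $\sqrt{(d/n_k)\log(en_k/d)}$ whenever $1\le d\le n_k$. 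Summing with weights $\beta_k$ yields exactly the second term of Eq.~\eqref{eq:a3}.

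What this argument needs is $\mathrm{Pdim}(\mathcal{G})\le d$ and $n_k\ge d$. The paper tacitly assumes this when it says the capacity is ``controlled by $d$''. It does not follow automatically from $d$ being the VC-dimension of $\mathcal{H}$, so state it explicitly as part of Assumption~\ref{ass3}.
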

Theorem~\ref{the1} indicates that the source-side generalization bound of IDEAL is associated with the number $M$ of intrinsic deviation patterns and the weighting vector $\beta$ across $K$ source domains. 
The bound becomes tighter by using appropriate $\Phi \in \mathcal{H}$ and the number $M$ of intrinsic deviation patterns.
\begin{proof}[Proof of Theorem~\ref{the1}] \label{proof1}
For each learned intrinsic deviation pattern of IDEAL indexed by $m$, based on the episode-loss subclass $\mathcal{G}_{m}$, define the corresponding deviation quantity as:
\begin{equation}
    \label{eq:a4}
    \Gamma_{m} = \sup_{g \in \mathcal{G}_m} \left| \sum_{k=1}^{K} \beta_{k} \mathbb{E}_{\mathbf{E} \sim \mathbb{P}_{S_k}}[g(\mathbf{E})] - 
    \sum_{k=1}^{K} \beta_{k} \frac{1}{n_k} \sum_{i=1}^{n_k} g(\mathbf{E}_{k,i}) \right|,
\end{equation}
since the loss class $\mathcal{G}=\mathbf{\cup}_{m=1}^{M} \mathcal{G}_{m}$ is the union of the subclasses, we can further get:
\begin{equation}
    \label{eq:a5}
    \sup_{\Phi \in \mathcal{H}} | \epsilon_{\beta}(\Phi) - \hat{\epsilon}_{\beta}(\Phi) | \le \max_{1 \le m \le M} \Gamma_{m}.
\end{equation}
We first bound the deviation term under Assumption~\ref{ass1} and Assumption~\ref{ass2}. Replacing a single episode $\mathbf{E}_{k,i}$ changes the weighted empirical operator by at most $\beta_k / n_k$. Therefore, by McDiarmid's inequality, for any fixed $m$ and any variable $\xi > 0$, we have:
\begin{equation}
    \label{eq:a6}
    \mathrm{Pr}(\Gamma_{m} - \mathbb{E}[\Gamma_{m}] > \xi) \le \exp \left(-\frac{2\xi^{2}}{\sum_{k=1}^{K} \beta_k^{2} / n_k}\right),
\end{equation}
which means that with probability at least $1-\exp(-\frac{2\xi^{2}}{\sum_{k=1}^{K} \beta_k^{2} / n_k})$,
\begin{equation}
    \label{eq:a62}
    \Gamma_{m} - \mathbb{E}[\Gamma_{m}] \le \xi.
\end{equation}
Next, applying the union bound over the $M$ intrinsic deviation patterns, we can get:
\begin{equation}
    \label{eq:a7}
    \mathrm{Pr}\left(\max_{1 \le m \le M}\Gamma_{m} > \max_{1 \le m \le M}\mathbb{E}[\Gamma_{m}] + \xi \right) \le 
    M \exp \left(-\frac{2\xi^{2}}{\sum_{k=1}^{K} \beta_k^{2} / n_k}\right).
\end{equation}
Setting the right-hand side of Eq.~\eqref{eq:a7} to $\delta$, \textit{i.e.}, $\delta=M\exp(-\frac{2\xi^{2}}{\sum_{k=1}^{K} \beta_k^{2} / n_k})$. 
Then, we can obtain that with probability at least $1-\delta$,
\begin{equation}
    \label{eq:a8}
    \max_{1 \le m \le M} \Gamma_{m} \le \max_{1 \le m \le M} \mathbb{E}[\Gamma_{m}] + 
    \sqrt{\frac{1}{2} \left(\sum_{k=1}^{K} \frac{\beta_k^2}{n_k}\right) \log \frac{M}{\delta}}.
\end{equation}
Considering the expectation term of Eq.~\eqref{eq:a8}, by using Rademacher complexity, for each subclass $\mathcal{G}_m$, the following holds:
\begin{equation}
    \label{eq:a9}
    \mathbb{E}[\Gamma_{m}] \le 2 \sum_{k=1}^{K} \beta_k \mathbb{E}_{\sigma} 
    \left[ \sup_{g \in \mathcal{G}_m} \frac{1}{n_k} \sum_{i=1}^{n_k} \sigma_{i} g(\mathbf{E}_{k,i}) \right],
\end{equation}
where $\sigma=(\sigma_1,\ldots,\sigma_{n_{k}})$ represent the Rademacher random variables. Eq.~\eqref{eq:a9} denotes empirical Rademacher complexity of $\mathcal{G}_m$ on the source domain $S_k$. 
By Assumption~\ref{ass3}, the induced loss class has finite capacity controlled by $d$. Hence, using the standard complexity bound, we have:
\begin{equation}
    \label{eq:a10}
    \mathbb{E}_{\sigma} \left[ \sup_{g \in \mathcal{G}_m} \frac{1}{n_k} \sum_{i=1}^{n_k} \sigma_{i} g(\mathbf{E}_{k,i}) \right] \le 
    \sqrt{\frac{d}{n_k} \log \frac{e n_k}{d}}.
\end{equation}
Based on Eq.~\eqref{eq:a9} and Eq.~\eqref{eq:a10}, we can get:
\begin{equation}
    \label{eq:a11}
    \mathbb{E}[\Gamma_{m}] \le 2 \sum_{k=1}^{K} \beta_k \sqrt{\frac{d}{n_k} \log \frac{e n_k}{d}},
\end{equation}
since this upper bound is independent of $m$, we can further get:
\begin{equation}
    \label{eq:a12}
    \max_{1 \le m \le M} \mathbb{E}[\Gamma_{m}] \le 2 \sum_{k=1}^{K} \beta_k \sqrt{\frac{d}{n_k} \log \frac{e n_k}{d}}.
\end{equation}
Combining the concentration bound Eq.~\eqref{eq:a8} and the capacity bound Eq.~\eqref{eq:a12}, with probability at least $1-\delta$, we have:
\begin{equation}
    \label{eq:a13}
    \sup_{\Phi \in \mathcal{H}} | \epsilon_{\beta}(\Phi) - \hat{\epsilon}_{\beta}(\Phi) | \le 
    \sqrt{\frac{1}{2} \left( \sum_{k=1}^{K} \frac{\beta_k^2}{n_k} \right) \log \frac{M}{\delta}} + 
    2\sum_{k=1}^{K} \beta_k \sqrt{\frac{d}{n_k} \log \frac{e n_k}{d}}.
\end{equation}
Eq.~\eqref{eq:a13} indicates that the source-side generalization bound of IDEAL is associated with the number $M$ of intrinsic deviation patterns and the weighting vector $\beta$ across $K$ source domains.
\end{proof}

\subsection{Proof of Source-Target Generalization Bound} \label{tar_bound}
Since the target data are unavailable during training, we further extend this source-side generalization bound of IDEAL in Eq.~\eqref{eq:a13} to the target domain via multi-source domain adaptation \cite{mansour2008domain}.

Let $\mathbb{P}_{T}$ denote the target episode distribution, and each target episode $\mathbf{E}=\{\mathbf{x}^{q}_{T}, \mathcal{S}^{n}_{T}, \mathcal{S}^{a}_{T}\}$ has the same form as the source episode, where $\mathbf{x}^{q}_{T}$ is the target sample, and $\mathcal{S}^{n}_{T}$ and $\mathcal{S}^{a}_{T}$ are the corresponding normal and abnormal reference sets from the target domain $T$, respectively. 
Further, define the target risk of IDEAL by:
\begin{equation}
    \label{eq:a15}
    \epsilon_{T}(\Phi) = \mathbb{E}_{\mathbf{E} \sim \mathbb{P}_{T}} [\mathcal{L}(\Phi;\mathbf{E})].
\end{equation}
Based on the source weighting vector $\beta=(\beta_1,\ldots,\beta_K) \in \Delta_K$, define the weighted source mixture distribution as:
\begin{equation}
    \label{eq:a16}
    \mathbb{P}_{\beta} = \sum_{k=1}^{K} \beta_{k} \mathbb{P}_{S_k},
\end{equation}
where $\mathbb{P}_{S_k}$ means the source episode distribution on the source domain $S_k$.

Following the assumption that the target distribution is some mixture of the source distributions \cite{mansour2008domain}, let $\ell$ be a bounded symmetric loss that measures the disagreement between two hypotheses on an episode. 
For any episode distribution $\mathbb{P}$, define the pairwise expected disagreement as:
\begin{equation}
    \label{eq:a17}
    \epsilon_{\mathbb{P}}^{\ell}(\Phi, \Phi^{\prime}) = \mathbb{E}_{\mathbf{E} \sim \mathbb{P}}[\ell(\Phi(\mathbf{E}), \Phi^{\prime}(\mathbf{E}))].
\end{equation}
Then, the discrepancy between the target distribution and the weighted source mixture is:
\begin{equation}
    \label{eq:a18}
    \mathrm{D}_{\mathcal{H}, \ell}(\mathbb{P}_{T}, \mathbb{P}_{\beta}) = \sup_{\Phi, \Phi^{\prime} \in \mathcal{H}} 
    \left|\epsilon_{\mathbb{P}_T}^{\ell}(\Phi, \Phi^{\prime})-\epsilon_{\mathbb{P}_{\beta}}^{\ell}(\Phi, \Phi^{\prime})\right|.
\end{equation}
Further, define the weighted joint optimal risk as:
\begin{equation}
    \label{eq:a19}
    \Omega_{\beta} = \inf_{\Phi \in \mathcal{H}}(\epsilon_{T}(\Phi) + \epsilon_{\beta}(\Phi)),
\end{equation}
where $\epsilon_{\beta}(\Phi)$ in Eq.~\eqref{eq:a1} means the weighted source risk.

For convenience, let the source-side generalization bound in Theorem~\ref{the1} be denoted by:
\begin{equation}
    \label{eq:a20}
    \mathfrak{B}_{src}(\beta, n, M, d, \delta) = 
    \sqrt{\frac{1}{2} \left( \sum_{k=1}^{K} \frac{\beta_k^2}{n_k} \right) \log \frac{M}{\delta}} + 
    2\sum_{k=1}^{K} \beta_k \sqrt{\frac{d}{n_k} \log \frac{e n_k}{d}}.
\end{equation}

\begin{theorem}[Source-Target Generalization Bound of IDEAL] \label{the2}
    Let the conditions of Theorem~\ref{the1} and Assumptions \ref{ass1} to \ref{ass4} hold. Then, for any $\beta \in \Delta_K$, with probability at least $1-\delta$, the following holds for any $\Phi \in \mathcal{H}$,
    \begin{equation}
        \label{eq:a21}
        \epsilon_{T}(\Phi) \le \hat{\epsilon}_{\beta}(\Phi) + \mathfrak{B}_{src}(\beta, n, M, d, \delta) + 
        \mathrm{D}_{\mathcal{H}, \ell}(\mathbb{P}_{T}, \mathbb{P}_{\beta}) + \Omega_{\beta},
    \end{equation}
    where $\hat{\epsilon}_{\beta}(\Phi)$ as the empirical source risk in Eq.~\eqref{eq:a2}, $\mathfrak{B}_{src}$ as the source-side bound in Eq.~\eqref{eq:a20}, $\mathrm{D}_{\mathcal{H}, \ell}(\mathbb{P}_{T}, \mathbb{P}_{\beta})$ as the discrepancy between the target distribution $\mathbb{P}_{T}$ and the weighted source mixture $\mathbb{P}_{\beta}$ in Eq.~\eqref{eq:a18}, and $\Omega_{\beta}$ as the weighted joint optimal risk in Eq.~\eqref{eq:a19}.

    Equivalently, by expanding $\mathfrak{B}_{src}$, we can further get:
    \begin{equation}
        \label{eq:a22}
        \epsilon_{T}(\Phi) \le \hat{\epsilon}_{\beta}(\Phi) + 
        \sqrt{\frac{1}{2} \left( \sum_{k=1}^{K} \frac{\beta_k^2}{n_k} \right) \log \frac{M}{\delta}} + 
        2\sum_{k=1}^{K} \beta_k \sqrt{\frac{d}{n_k} \log \frac{e n_k}{d}} + 
        \mathrm{D}_{\mathcal{H}, \ell}(\mathbb{P}_{T}, \mathbb{P}_{\beta}) + \Omega_{\beta}.
    \end{equation}
\end{theorem}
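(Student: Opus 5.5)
The plan is to split the target risk into a weighted-source part plus a domain-shift part, in the style of Ben-David et al. and Mansour et al., and then control the source part with Theorem~\ref{the1}.

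\emph{Step 1: source-to-target transfer.} Fix $\beta\in\Delta_K$ and any $\Phi\in\mathcal{H}$. Let $\Phi^{*}\in\mathcal{H}$ (nearly) attain the infimum defining $\Omega_\beta$; if the infimum is not attained, take an $\eta$-minimizer and let $\eta\to 0$ at the end. The key requirement is that the episode loss $\mathcal{L}$ and the disagreement loss $\ell$ be compatible: $\ell$ must be symmetric, bounded, and satisfy the triangle inequality, with $\mathcal{L}(\Phi;\mathbf{E})$ interpretable as $\ell$ between $\Phi(\mathbf{E})$ and the ground-truth labelling, so that
\[
\mathcal{L}(\Phi;\mathbf{E})\le \ell(\Phi(\mathbf{E}),\Phi^{*}(\mathbf{E}))+\mathcal{L}(\Phi^{*};\mathbf{E}).
\]
If the composite focal/Dice/BCE loss does not literally satisfy this, I would invoke Assumption~\ref{ass4} (Lipschitz in the model output) together with boundedness. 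This gives a relaxed triangle inequality, possibly with a constant $L$ absorbed into the choice of $\ell$. This is the step I expect to be the main obstacle, because the paper's definition of $\ell$ is loose. The cleanest route is to choose $\ell$ so that the inequality holds by definition.

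\emph{Step 2: assemble the inequality chain.} Taking expectations under $\mathbb{P}_T$ gives
\[
\epsilon_T(\Phi)\le \epsilon^{\ell}_{\mathbb{P}_T}(\Phi,\Phi^{*})+\epsilon_T(\Phi^{*}).
\]
By the definition of $\mathrm{D}_{\mathcal{H},\ell}$ in Eq.~\eqref{eq:a18},
\[
\epsilon^{\ell}_{\mathbb{P}_T}(\Phi,\Phi^{*})\le \epsilon^{\ell}_{\mathbb{P}_\beta}(\Phi,\Phi^{*})+\mathrm{D}_{\mathcal{H},\ell}(\mathbb{P}_T,\mathbb{P}_\beta).
\]
Since $\mathbb{P}_\beta$ is linear in the $\mathbb{P}_{S_k}$, applying the triangle inequality again under $\mathbb{P}_\beta$ gives
\[
\epsilon^{\ell}_{\mathbb{P}_\beta}(\Phi,\Phi^{*})\le \epsilon_\beta(\Phi)+\epsilon_\beta(\Phi^{*}).
\]
Collecting terms,
\[
\epsilon_T(\Phi)\le\epsilon_\beta(\Phi)+\mathrm{D}_{\mathcal{H},\ell}(\mathbb{P}_T,\mathbb{P}_\beta)+\epsilon_T(\Phi^{*})+\epsilon_\beta(\Phi^{*}),
\]
and the last two terms equal $\Omega_\beta$ by Eq.~\eqref{eq:a19}. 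This step is deterministic and holds for every $\Phi$ simultaneously.

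\emph{Step 3: replace the population source risk by the empirical one.} Apply Theorem~\ref{the1}. On an event of probability at least $1-\delta$, which depends only on the source samples and not on $\Phi$,
\[
\sup_{\Phi\in\mathcal{H}}|\epsilon_\beta(\Phi)-\hat\epsilon_\beta(\Phi)|\le\mathfrak{B}_{src}(\beta,n,M,d,\delta).
\]
Hence $\epsilon_\beta(\Phi)\le\hat\epsilon_\beta(\Phi)+\mathfrak{B}_{src}$ for all $\Phi$ at once. Substituting into Step 2 yields Eq.~\eqref{eq:a21}. Expanding $\mathfrak{B}_{src}$ via Eq.~\eqref{eq:a20} gives Eq.~\eqref{eq:a22}, which is the form in Theorem~\ref{main_bound}.

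The uniformity in $\Phi$ holds because Theorem~\ref{the1} is a uniform deviation bound and Steps 1–2 are deterministic. No additional union bound over $\Phi$ is needed. The weight vector $\beta$ is fixed before sampling, so no union over $\beta$ is needed either.
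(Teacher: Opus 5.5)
Your proposal follows the paper's proof: the paper cites Mansour et al.\ for the deterministic inequality $\epsilon_{T}(\Phi)\le\epsilon_{\beta}(\Phi)+\mathrm{D}_{\mathcal{H},\ell}(\mathbb{P}_{T},\mathbb{P}_{\beta})+\Omega_{\beta}$ and then substitutes the uniform bound of Theorem~\ref{the1}, exactly as in your Step~3, while your Steps~1--2 supply the standard triangle-inequality derivation of that cited inequality. That derivation usefully makes explicit a compatibility condition the paper's citation leaves unstated: $\ell$ must be symmetric and obey the triangle inequality, with $\mathcal{L}(\Phi;\mathbf{E})$ equal to $\ell$ between $\Phi(\mathbf{E})$ and the episode's ground truth. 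Your Lipschitz fallback via Assumption~\ref{ass4} cannot replace this condition, since it yields only $\mathcal{L}(\Phi;\mathbf{E})\le\mathcal{L}(\Phi^{*};\mathbf{E})+\ell(\Phi(\mathbf{E}),\Phi^{*}(\mathbf{E}))$ and not the reverse bound $\ell(\Phi(\mathbf{E}),\Phi^{*}(\mathbf{E}))\le\mathcal{L}(\Phi;\mathbf{E})+\mathcal{L}(\Phi^{*};\mathbf{E})$ that your Step~2 also needs.
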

Theorem~\ref{the2} indicates that the target-domain generalization bound of IDEAL is associated with the empirical weighted source risk $\hat{\epsilon}_{\beta}(\Phi)$, the source-side estimation bound $\mathfrak{B}_{src}$, the discrepancy between the target distribution and weighted source mixture, and the weighted joint optimal risk. Theorem~\ref{the2} further reveals a capacity trade-off controlled by the number $M$ of intrinsic deviation patterns. Increasing $M$ can improve the coverage of diverse intrinsic deviation vectors and potentially reduce the empirical and approximation errors, but it also enlarges the induced loss class and increases the source-side estimation term through $\log M$. Therefore, $M$ should be properly chosen to balance expressiveness and estimation complexity.

Since the target data are unavailable during training, neither the $\mathrm{D}_{\mathcal{H}, \ell}(\mathbb{P}_{T}, \mathbb{P}_{\beta})$ nor $\Omega_{\beta}$ is directly estimable from the source data. To further simplify the target-side generalization bound, we consider a stronger transfer setting under which the discrepancy term and the weighted joint optimal risk can be tightly controlled. The key is that the target episode distribution $\mathbb{P}_{T}$ is assumed to lie in the weighted convex hull of the source episode distributions, and that the hypothesis class contains $\Phi^{*} \in \mathcal{H}$ that performs well on both the target domain and the weighted source mixture.

We now impose the following two assumptions:
\begin{assumption}[Source-Mixture Target] \label{ass5}
There exists a source weighting vector $\beta \in \Delta_K$ such that the target episode distribution $\mathbb{P}_{T}$ is exactly the weighted mixture of the source episode distributions:
    \begin{equation}
        \label{eq:a23}
        \mathbb{P}_{T} = \mathbb{P}_{\beta} = \sum_{k=1}^{K} \beta_k \mathbb{P}_{S_k}.
    \end{equation}
\end{assumption}
\begin{assumption}[Optimal Hypothesis] \label{ass6}
There exists a hypothesis $\Phi^{*} \in \mathcal{H}$ such that:
    \begin{equation}
        \label{eq:a24}
        \epsilon_{T}(\Phi^{*}) + \epsilon_{\beta}(\Phi^{*}) \le \Lambda_{\mathbb{P}_{T},\mathbb{P}_{\beta}},
    \end{equation}
    where $\Lambda_{\mathbb{P}_{T},\mathbb{P}_{\beta}} \ge 0$ means the joint approximation error. 
    The smaller $\Lambda_{\mathbb{P}_{T},\mathbb{P}_{\beta}}$ is, the better aligned the target domain $\mathbb{P}_{T}$ is with the weighted source mixture $\mathbb{P}_{\beta}$ under the hypothesis class $\mathcal{H}$.
\end{assumption}
\begin{corollary}[Simplified Source-Target Generalization Bound for IDEAL] \label{cor1}
    Let the conditions of Theorem~\ref{the2} and Assumptions \ref{ass5} to \ref{ass6} hold. 
    With probability at least $1-\delta$, for any $\Phi \in \mathcal{H}$,
    \begin{equation}
        \label{eq:a25}
        \epsilon_{T}(\Phi) \le \hat{\epsilon}_{\beta}(\Phi) + 
        \sqrt{\frac{1}{2} \left( \sum_{k=1}^{K} \frac{\beta_k^2}{n_k} \right) \log \frac{M}{\delta}} + 
        2\sum_{k=1}^{K} \beta_k \sqrt{\frac{d}{n_k} \log \frac{e n_k}{d}} + \Lambda_{\mathbb{P}_{T},\mathbb{P}_{\beta}}.
    \end{equation}
\end{corollary}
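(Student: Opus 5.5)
The plan is to specialize Theorem~\ref{the2} using Assumptions~\ref{ass5} and~\ref{ass6}. Everything in Eq.~\eqref{eq:a22} except the last two terms already appears in Eq.~\eqref{eq:a25}. So it suffices to show two things: under Assumption~\ref{ass5} the discrepancy term $\mathrm{D}_{\mathcal{H},\ell}(\mathbb{P}_{T},\mathbb{P}_{\beta})$ vanishes, and under Assumption~\ref{ass6} the joint optimal risk satisfies $\Omega_{\beta}\le \Lambda_{\mathbb{P}_{T},\mathbb{P}_{\beta}}$. The high-probability event is exactly the one from Theorem~\ref{the2}, which comes from Theorem~\ref{the1}. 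No new union bound is needed, and the confidence level $1-\delta$ carries over unchanged.

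First I would fix the weighting vector $\beta\in\Delta_K$ provided by Assumption~\ref{ass5}. Because Theorem~\ref{the2} holds for every $\beta$, and $\beta$ is chosen deterministically (it does not depend on the sample), it holds in particular for this one. With $\mathbb{P}_{T}=\mathbb{P}_{\beta}$, the definition in Eq.~\eqref{eq:a17} gives $\epsilon^{\ell}_{\mathbb{P}_T}(\Phi,\Phi')=\epsilon^{\ell}_{\mathbb{P}_\beta}(\Phi,\Phi')$ for every pair $\Phi,\Phi'\in\mathcal{H}$. The supremum in Eq.~\eqref{eq:a18} is therefore a supremum of zeros, so $\mathrm{D}_{\mathcal{H},\ell}(\mathbb{P}_T,\mathbb{P}_\beta)=0$.

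Second, $\Omega_\beta$ in Eq.~\eqref{eq:a19} is an infimum over $\mathcal{H}$. Assumption~\ref{ass6} provides $\Phi^*\in\mathcal{H}$ with $\epsilon_T(\Phi^*)+\epsilon_\beta(\Phi^*)\le\Lambda_{\mathbb{P}_T,\mathbb{P}_\beta}$. Hence $\Omega_\beta\le \epsilon_T(\Phi^*)+\epsilon_\beta(\Phi^*)\le\Lambda_{\mathbb{P}_T,\mathbb{P}_\beta}$. Substituting these two facts into Eq.~\eqref{eq:a22} yields Eq.~\eqref{eq:a25} for every $\Phi\in\mathcal{H}$ on the same event.

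The only step needing care is the bookkeeping of quantifiers. The event from Theorem~\ref{the1} is uniform over $\Phi$ via the supremum in Eq.~\eqref{eq:a3}, so substituting the data-independent bounds $\mathrm{D}=0$ and $\Omega_\beta\le\Lambda$ keeps the statement valid simultaneously for all $\Phi$. I would also note that the argument uses nothing about $\ell$ beyond the definition of $\mathrm{D}_{\mathcal{H},\ell}$ as a difference of expectations, so the symmetry and boundedness of $\ell$ are not needed here. Beyond that, the corollary is a direct substitution, and I expect no genuine obstacle.
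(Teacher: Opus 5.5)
Your proposal is correct and follows the paper's argument: Assumption~\ref{ass5} makes $\mathrm{D}_{\mathcal{H},\ell}(\mathbb{P}_T,\mathbb{P}_\beta)=0$, and Assumption~\ref{ass6} combined with the infimum in Eq.~\eqref{eq:a19} gives $\Omega_\beta\le\Lambda_{\mathbb{P}_T,\mathbb{P}_\beta}$. The only difference is the order of substitution: the paper plugs these two facts into the adaptation inequality Eq.~\eqref{eq:a26} and then invokes Theorem~\ref{the1}, while you plug them directly into the already-combined bound Eq.~\eqref{eq:a22}.
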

Corollary~\ref{cor1} shows that the target-domain error of IDEAL is upper bounded by the empirical weighted source risk, the source-side estimation term, and a joint approximation error $\Lambda_{\mathbb{P}_{T},\mathbb{P}_{\beta}}$. In particular, when $\Lambda_{\mathbb{P}_{T},\mathbb{P}_{\beta}}$ is small, the dominant estimable term becomes the source-side bound derived in Theorem~\ref{the1}.

\begin{proof}[Proof of Theorem~\ref{the2} and Corollary~\ref{cor1}] \label{proof2}
    For any source-side weighting vector $\beta \in \Delta_K$, define the weighted source mixture distribution as,
    \begin{equation}
        \label{eq:pro1}
        \mathbb{P}_{\beta} = \sum_{k=1}^{K} \beta_k \mathbb{P}_{S_k}.
    \end{equation}
    Following the multi-source domain adaptation \cite{mansour2008domain}, for any $\Phi \in \mathcal{H}$, we have:
    \begin{equation}
        \label{eq:a26}
        \epsilon_{T}(\Phi) \le \epsilon_{\beta}(\Phi) + \mathrm{D}_{\mathcal{H}, \ell}(\mathbb{P}_{T}, \mathbb{P}_{\beta}) + \Omega_{\beta},
    \end{equation}
    where $\mathrm{D}_{\mathcal{H}, \ell}(\mathbb{P}_{T}, \mathbb{P}_{\beta})$ as the discrepancy between the target distribution $\mathbb{P}_{T}$ and the weighted source mixture $\mathbb{P}_{\beta}$ in Eq.~\eqref{eq:a18}, and $\Omega_{\beta}$ as the weighted joint optimal risk in Eq.~\eqref{eq:a19}.

    Eq.~\eqref{eq:a26} remains to upper bound the weighted source risk $\epsilon_{\beta}(\Phi)$. 
    By Theorem~\ref{the1}, with probability at least $1-\delta$, the following source-side generalization bound holds,
    \begin{equation}
        \label{eq:pro2}
        \sup_{\Phi \in \mathcal{H}} | \epsilon_{\beta}(\Phi) - \hat{\epsilon}_{\beta}(\Phi) | \le 
        \mathfrak{B}_{src}(\beta, n, M, d, \delta),
    \end{equation}
    where $\mathfrak{B}_{src}(\beta, n, M, d, \delta)$ is defined in Eq.~\eqref{eq:a20}.
    
    Since the above inequality holds over all $\Phi \in \mathcal{H}$, with probability at least $1-\delta$, it follows that,
    \begin{equation}
        \label{eq:pro3}
        \epsilon_{\beta}(\Phi) \le \hat{\epsilon}_{\beta}(\Phi) + \mathfrak{B}_{src}(\beta, n, M, d, \delta).
    \end{equation}

    Substituting this bound into the multi-source adaptation inequality yields,
    \begin{equation}
        \label{eq:pro4}
        \epsilon_{T}(\Phi) \le \hat{\epsilon}_{\beta}(\Phi) + \mathfrak{B}_{src}(\beta, n, M, d, \delta) + 
        \mathrm{D}_{\mathcal{H}, \ell}(\mathbb{P}_{T}, \mathbb{P}_{\beta}) + \Omega_{\beta}.
    \end{equation}
    Then, by expanding $\mathfrak{B}_{src}(\beta, n, M, d, \delta)$, we have:
    \begin{equation}
        \label{eq:pro5}
        \epsilon_{T}(\Phi) \le \hat{\epsilon}_{\beta}(\Phi) + 
        \sqrt{\frac{1}{2} \left( \sum_{k=1}^{K} \frac{\beta_k^2}{n_k} \right) \log \frac{M}{\delta}} + 
        2\sum_{k=1}^{K} \beta_k \sqrt{\frac{d}{n_k} \log \frac{e n_k}{d}} + 
        \mathrm{D}_{\mathcal{H}, \ell}(\mathbb{P}_{T}, \mathbb{P}_{\beta}) + \Omega_{\beta}.
    \end{equation}
    We have completed the proof of Theorem~\ref{the2}.
    
    Next, under the Assumption~\ref{ass5}, $\mathbb{P}_{T}$ coincides with the weighted source mixture $\mathbb{P}_{\beta}$,
    \begin{equation}
        \label{eq:a27}
        \mathbb{P}_{T}=\mathbb{P}_{\beta}=\sum_{k=1}^{K} \beta_k \mathbb{P}_{S_k}.
    \end{equation}
    Therefore, we can get:
    \begin{equation}
        \label{eq:pro6}
        \mathrm{D}_{\mathcal{H}, \ell}(\mathbb{P}_{T}, \mathbb{P}_{\beta}) = 0.
    \end{equation}
    Next, by the definition of $\Omega_{\beta} = \inf_{\Phi \in \mathcal{H}}(\epsilon_{T}(\Phi) + \epsilon_{\beta}(\Phi))$ in Eq.~\eqref{eq:a19}, 
    since the Assumption~\ref{ass6} guarantees the existence of $\Phi^{*} \in \mathcal{H}$ such that,
    \begin{equation}
        \label{eq:a28}
        \epsilon_{T}(\Phi^{*}) + \epsilon_{\beta}(\Phi^{*}) \le \Lambda_{\mathbb{P}_{T},\mathbb{P}_{\beta}},
    \end{equation}
    which follows that $\Omega_{\beta} \le \Lambda_{\mathbb{P}_{T},\mathbb{P}_{\beta}}$.

    Substituting the above two inequalities into Eq.~\eqref{eq:a26}, we can get:
    \begin{equation}
        \label{eq:a29}
        \epsilon_{T}(\Phi) \le \epsilon_{\beta}(\Phi) + \Lambda_{\mathbb{P}_{T},\mathbb{P}_{\beta}}.
    \end{equation}
    On the other hand, by Theorem~\ref{the1}, with probability at least $1-\delta$, the source-side generalization bound holds for any $\Phi \in \mathcal{H}$,
    \begin{equation}
        \label{eq:a30}
        \epsilon_{\beta}(\Phi) \le \hat{\epsilon}_{\beta}(\Phi) + \mathfrak{B}_{src}(\beta, n, M, d, \delta).
    \end{equation}
    Combining the above two inequalities, we have:
    \begin{equation}
        \label{eq:a31}
        \epsilon_{T}(\Phi)\le \hat{\epsilon}_{\beta}(\Phi)+\mathfrak{B}_{src}(\beta, n, M, d, \delta)+\Lambda_{\mathbb{P}_{T},\mathbb{P}_{\beta}}.
    \end{equation}
    Equivalently,
    \begin{equation}
        \label{eq:eq_cor1}
        \epsilon_{T}(\Phi) \le \hat{\epsilon}_{\beta}(\Phi) + 
        \sqrt{\frac{1}{2} \left( \sum_{k=1}^{K} \frac{\beta_k^2}{n_k} \right) \log \frac{M}{\delta}} + 
        2\sum_{k=1}^{K} \beta_k \sqrt{\frac{d}{n_k} \log \frac{e n_k}{d}} + \Lambda_{\mathbb{P}_{T},\mathbb{P}_{\beta}},
    \end{equation}
    where $\mathfrak{B}_{src}$ means the source-side generalization bound in Eq.~\eqref{eq:a20}, and $\Lambda_{\mathbb{P}_{T},\mathbb{P}_{\beta}}$ means the joint approximation error between the target episode distribution and the weighted source mixture.

    A particularly clean special case arises when the hypothesis class contains a predictor with zero joint error on the target domain and the weighted source mixture.
    
    In this case, we have $\Lambda_{\mathbb{P}_{T},\mathbb{P}_{\beta}} = 0$, for any $\Phi \in \mathcal{H}$, the Corollary~\ref{cor1} reduces to,
    \begin{equation}
        \label{eq:a33}
        \epsilon_{T}(\Phi) \le \hat{\epsilon}_{\beta}(\Phi) + 
        \sqrt{\frac{1}{2} \left( \sum_{k=1}^{K} \frac{\beta_k^2}{n_k} \right) \log \frac{M}{\delta}} + 
        2\sum_{k=1}^{K} \beta_k \sqrt{\frac{d}{n_k} \log \frac{e n_k}{d}},
    \end{equation}
    where the source-target error is completely reduced to the empirical weighted source risk plus the source-side estimation term. Therefore, under this ideal setting, Eq.~\eqref{eq:a33} bounds the source-side term sufficiently to control the target-domain generalization performance.
\end{proof}

\section{Experimental Details} \label{sec:edetails}
In this section, we first describe the eight anomaly detection datasets utilized in our experiments (Sec.~\ref{sec:addata}), followed by the competing baselines under cross-dataset and in-dataset evaluation protocols (Sec.~\ref{sec:admethod}). We then detail the episode construction strategy for discriminative FSAD during training and inference (Sec.~\ref{sec:ourepisode}). Finally, we provide implementation details of IDEAL (Sec.~\ref{sec:ourdetail}).

\subsection{Anomaly Detection Datasets} \label{sec:addata}
We conduct extensive experiments on eight real-world Anomaly Detection (AD) datasets, including five industrial defect inspection datasets: MVTecAD \cite{bergmann2019mvtec}, VisA \cite{zou2022spot}, AITEX \cite{silvestre2019public}, MPDD \cite{jezek2021deep}, BTAD \cite{mishra2021vt}; and three medical image datasets: BraTS \cite{menze2014multimodal} (for brain tumor segmentation), Liver \cite{bao2024bmad} (for liver tumor segmentation), RESC \cite{hu2019automated} (for retinal edema segmentation). 
We then describe each anomaly detection dataset in turn below.

\textbf{MVTecAD}~\cite{bergmann2019mvtec} is a widely used benchmark for evaluating anomaly detection methods in industrial inspection applications. This dataset includes over $5000$ high-resolution images spanning $15$ object and texture categories. For each category, the training set consists exclusively of anomaly-free images, while the test set includes both defective and defect-free samples.

\textbf{VisA}~\cite{zou2022spot} is a large-scale industrial anomaly detection dataset comprising $10821$ high-resolution color images, including $9621$ normal samples and $1200$ anomalous samples. This dataset covers $12$ object categories across $3$ domains and provides both image-level and pixel-level annotations. The anomalous samples exhibit diverse defect types, including surface defects such as scratches, dents, color spots, and cracks, as well as structural defects such as misplacement and missing parts.

\textbf{AITEX}~\cite{silvestre2019public} is a textile fabric anomaly detection dataset comprising $245$ images from $7$ fabric types, including $140$ defect-free images ($20$ images for each fabric type) and $105$ images with various defects. Following the evaluation protocol, we use its test set for evaluation.

\textbf{MPDD}~\cite{jezek2021deep} is a metal-part defect detection dataset designed to benchmark visual anomaly detection methods under complex industrial conditions, which contains over $1000$ images of metal parts with pixel-precise defect annotation masks. It contains $1346$ images across $6$ metal-part categories, with $888$ normal images for training and $458$ test images ($176$ normal and $282$ anomalous samples).

\textbf{BTAD}~\cite{mishra2021vt} is a industrial anomaly detection dataset introduced in VT-ADL. It contains $2830$ images from $3$ industrial products, covering both body and surface defects. The training set consists of $1799$ normal images, while the test set contains $451$ normal images and $290$ anomalous images.

\textbf{BraTS}~\cite{menze2014multimodal} is a large-scale medical image segmentation dataset for multimodal brain tumor analysis. It contains $2000$ glioma cases, corresponding to $8000$ mpMRI scans across four modalities: T1, T1Gd, T2, and FLAIR. Each case provides expert-annotated tumor sub-regions, enabling evaluation of abnormal tissue detection and localization in medical imaging scenarios.

\textbf{Liver}~\cite{bao2024bmad} is a medical anomaly detection dataset reorganized from BMAD for liver tumor detection and localization. The dataset contains $3201$ axial CT slices, including $1542$ training slices, $1493$ test slices, and $166$ validation slices, with a spatial resolution of $512 \times 512$. 
Pixel-level segmentation masks are provided for anomaly localization evaluation.

\textbf{RESC}~\cite{hu2019automated} is a retinal OCT dataset originally developed for retinal edema segmentation. Following BMAD, we use its reorganized anomaly detection and localization split, where retinal edema regions are treated as anomalies. 
It contains $6217$ OCT images with a spatial resolution of $512 \times 1024$, including $4297$ training images, $1805$ test images, and $115$ validation images. Pixel-level segmentation masks are provided for edema regions, enabling evaluation of anomaly detection.

\subsection{Competing Baselines} \label{sec:admethod}
We compare IDEAL with two groups of closely related methods: (1)
\textit{Generalist Methods}, \ie, detectors are training-free or trained once on auxiliary data: based on normal-only references (InCTRL \cite{zhu2024toward}, ResAD \cite{yao2024resad}, WinCLIP \cite{jeong2023winclip}, FoundAD \cite{zhai2026foundation}), and normal and anomalous references (NAGL \cite{wang2025normal}); (2) \textit{Specalist Methods}, 
 \ie, detectors require dataset-specific training: based on normal-only references (RegAD \cite{huang2022registration}, PromptAD \cite{li2024promptad}), and based on full-shot normal data and abnormal-only references: DRA \cite{ding2022catching} and AHL \cite{zhu2024anomaly}.

The crucial configurations of different AD baselines as follows. 
1) \textbf{WinCLIP} \cite{jeong2023winclip}: We follow the original setting with the LAION-400M pre-trained CLIP ViT-B/16+ backbone. The dense window features are extracted with stride $1$ over ViT patch embeddings, and multi-scale predictions from $2 \times 2$ patch windows, $3 \times 3$ patch windows, and the image-level token are aggregated by harmonic averaging. 
2) \textbf{InCTRL} \cite{zhu2024toward}: We use the pre-trained ViT-B/16+ as backbone with the initial learning rate as $1e^{-3}$ by default. The prompt ensemble contains $231$ text prompts per object category, including $147$ normal prompts and $84$ anomalous prompts. The training epochs is set to $10$ with batch size of $4$. 
3) \textbf{ResAD} \cite{yao2024resad}: The number of the NF model's layers is set to $8$. The total training epochs are set as $100$ with a batch size of $16$. The learning rate is set to $1e^{-5}$ initially and dropped by $0.1$ after $75$ epochs. 
4) \textbf{FoundAD} \cite{zhai2026foundation}: The pre-trained DINOv3 ViT-B is adopted as the frozen visual encoder. All the training and testing images are resized to $512 \times 512$. The nonlinear manifold projector is configured with a prediction depth of $6$ and an embedding dimension of $384$. The model is trained for $100$ epochs with a batch size of $8$, an initial learning rate of $1e^{-3}$, and a weight decay of $1e^{-4}$. The learning rate is kept constant by default. 
5) \textbf{NAGL} \cite{wang2025normal}: We follow the official implementation and use the pre-trained DINOv2 ViT-S/14 as the frozen visual backbone. All the training and testing images are resized to $448 \times 448$. We tune the number of learnable proxies $\{10,25,30,40\}$ and set it as $25$. The model is trained for $20$ epochs with a batch size of $16$. The initial learning rate is set to $1e^{-5}$ and optimized by AdamW. The learning rate is decayed by $0.1$ at the $10$-th and $15$-th epochs. 
6) \textbf{RegAD} \cite{huang2022registration}: We use ResNet-18 as the backbone, followed by a convolution-based encoder and predictor. The input images are resized to $224 \times 224$. The model is trained for $50$ epochs with a batch size of $16$ using momentum SGD. The initial learning rate is set to $1e^{-4}$ and decayed by a single-cycle cosine schedule. 
7) \textbf{PromptAD} \cite{li2024promptad}: We follow the official implementation and use ViT-B/16+ as the visual backbone. The input resolution is set to $240$ for the shorter edge after bicubic resizing. The loss weight $\lambda$ of PromptAD is set to $0.001$. The prompt learning optimizer uses an initial learning rate of $0.002$, momentum of $0.9$, and weight decay of $5e^{-4}$. The $3$-rd and $8$-th layer features are used for visual feature memory in PromptAD. 
8) \textbf{DRA} \cite{ding2022catching}: We follow the official implementation and use ImageNet pre-trained ResNet-18 as the backbone. The extracted $512$-dimensional feature map is fed into the abnormality and normality learning heads. The patch-wise classifier is implemented by a $1 \times 1$ convolutional layer. The model is trained for $30$ epochs with $20$ iterations per epoch and a batch size of $48$. Adam is used for optimization with an initial learning rate of $1e^{-3}$ and a weight decay of $1e^{-2}$. 
9) \textbf{AHL} \cite{zhu2024anomaly}: We follow the official setting and use DRA as the base OSAD detector. Normal samples are first partitioned into $3$ clusters by K-means, and $T=6$ anomaly distribution subsets are generated by combining randomly selected normal clusters with labeled anomaly examples. Adam is used as the optimizer, and the learning rate is set to $2e^{-4}$ for heterogeneous base models and $2e^{-3}$ for the unified anomaly detection model. The score estimator is implemented with a two-layer LSTM with hidden dimension $6$, followed by a fully connected layer with $12$ hidden nodes.

\begin{figure*}[t!]
    \centering
    \includegraphics[width=1.0\linewidth]{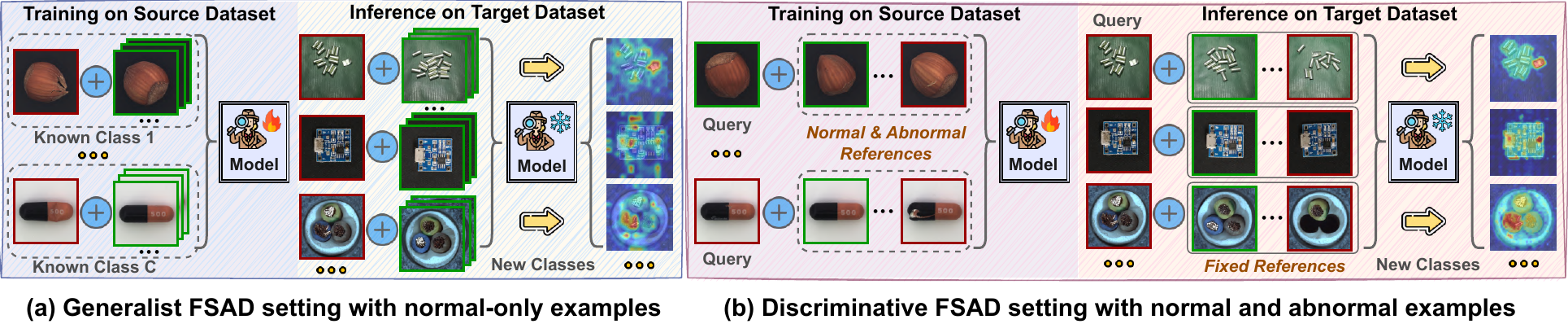}
    \caption{Illustration of \textbf{(a)} generalist FSAD setting (\textit{e.g.}, used in InCTRL \cite{zhu2024toward}); \textbf{(b)} discriminative FSAD setting with few-shot normal and abnormal references, our model effectively leverages the discriminative clues of abnormal references relative to normal references to enhance detection.}
    \label{fig:fig_ourset}
\end{figure*}

\subsection{Episode Preparation} \label{sec:ourepisode}
The proposed IDEAL is trained and evaluated in an episodic manner, as shown in Fig.~\ref{fig:fig_ourset}. 
In each episode, a limited reference set containing normal and anomalous reference samples is provided, and the model is required to detect anomalies in the corresponding query samples.

\paragraph{Training Episode.}
IDEAL is trained on the auxiliary dataset, where each training episode is constructed to simulate the few-shot inference scenario, consisting of a query image together with a few normal and anomalous reference images sampled from the auxiliary dataset. Specifically, we first randomly select $500$ samples by default from the entire test set as training queries, including both normal and abnormal samples. Based on the query, we randomly sample normal reference images from the corresponding training set and anomalous reference images from the corresponding test set, where all anomalous references are restricted to a single anomaly type. 
During training, each episode is constructed by grouping a query with its corresponding normal and anomalous reference samples, and the model is optimized episode by episode.
\paragraph{Inference Episode.}
IDEAL is evaluated on the target dataset without further retraining/tuning. For each target dataset, we pre-construct a fixed reference set containing normal and anomalous reference images, using the same sampling strategy as in training. These references are kept fixed throughout the evaluation. When an arbitrary query image from the target dataset, it is combined with the corresponding fixed normal and anomalous references to form an inference episode, which is then fed into our model for anomaly detection.

\subsection{Implementation Details} \label{sec:ourdetail}
By default, we adopt the public pre-trained ViT/S-14 as the feature extractor in our experiments. 
We freeze the parameters of the feature extractor and only update the parameters of the proposed IDE component. 
The IDE component is implemented as a cross-attention block followed by a feed-forward MLP block. The cross-attention block has an embedding dimension of $384$, $8$ attention heads, an attention dropout rate of $0.1$, and sinusoidal positional encoding. The MLP consists of two linear layers with input/output dimension $384$ and hidden dimension $1536$, with GELU used as the activation function. For the learnable vectors $\mathcal{T}$ in Eq.~\eqref{eq:eq5}, we instantiate them as a learnable embedding table with $M$ (default by $45$) vectors, each with a dimension of $384$. 
During training, all the training and testing images are resized to $448 \times 448$ resolution for injection into the feature extractor. AdamW is used as the optimizer with $\beta_1=0.9$ and $\beta_2=0.999$. The weight decay is set to $1e^{-4}$, and AMSGrad is enabled for more stable optimization. 
The initial learning rate is set to $0.001$ with a warm-up cosine scheduler, where the learning rate is first linearly increased from $1e^{-5}$ to $0.001$ during the first $2$ epochs and then cosine-decayed to $1\%$ of the base learning rate. The training epochs of IDEAL is set to $20$ with a batch size of $16$ on a single NVIDIA GeForce RTX 3090 GPU. By default, the number $k$ of normal neighbors in Eq.~\eqref{eq:eq2} is set to $12$, the number $r$ of retained principal components is set to $4$, $\alpha$ in Eq.~\eqref{eq:eq4} is set to $0.8$, and the number of intrinsic deviation vectors $M$ in Eq.~\eqref{eq:eq6} is set to $45$. The loss weights $\lambda_1$ and $\lambda_2$ in Eq.~\eqref{eq:eq7} are set to $1.0$ and $0.8$, respectively. More hyper-parameter ablations, please see Appendix~\ref{sec:hyper}.

\section{NVE as a Plug-and-Play Component} \label{sec:nveplug}
\begin{wraptable}[18]{r}{0.5\linewidth}
\vspace{-6.6mm}
\caption{\textbf{Enabling the existing residual-based FSAD methods}, in which their residual features are replaced with NVE-denoised residual features.}
\label{tab:nve_plug}
\vspace{1.5mm}
\centering
\begingroup
\setlength{\tabcolsep}{1.6pt} 
\renewcommand{\arraystretch}{1.2} 
\setlength{\arrayrulewidth}{0.1mm} 
\resizebox{\linewidth}{!}{%
    \begin{tabular}{cl II c I c}
    \hline\thickhline
    \rowcolor{mygray} 
    \multicolumn{2}{cII}{\textsc{\textbf{Methods}}} 
    & \textsc{\textbf{MVTecAD}} & \textsc{\textbf{VisA}} \\
    \hline\hline
    & InCTRL 
        & (93.3 \myred{0.8} / 94.7 \myred{1.3}) 
        & (83.4 \myred{1.2} / 93.5 \myred{0.4}) \\
    & ResAD 
        & (83.5 \myred{2.8} / 86.0 \myred{1.7}) 
        & (79.2 \myred{2.0} / 90.0 \myred{1.7}) \\
    \multirow{-3}{*}{\makecell[c]{\texttt{N1}}} & RegAD 
        & (93.1 \myred{1.5} / 93.8 \myred{0.7}) 
        & (85.2 \myred{1.9} / 92.0 \myred{0.4}) \\
    \hline
    \texttt{N1A1} & NAGL 
        & (95.3 \myred{0.2} / 96.6 \myred{0.5}) 
        & (89.4 \myred{0.9} / 97.6 \myred{0.1}) \\
    \hline\hline
    & InCTRL 
        & (94.7 \myred{0.4} / 94.5 \myred{0.5}) 
        & (90.2 \myred{0.7} / 94.5 \myred{0.3}) \\
    & ResAD 
        & (89.1 \myred{2.0} / 88.1 \myred{1.2}) 
        & (84.7 \myred{1.6} / 91.0 \myred{0.9}) \\
    \multirow{-3}{*}{\makecell[c]{\texttt{N4}}} & RegAD 
        & (94.9 \myred{0.9} / 95.5 \myred{0.4}) 
        & (89.9 \myred{0.4} / 94.6 \myred{1.3}) \\
    \hline
    \texttt{N4A1} & NAGL 
        & (97.0 \myred{0.1} / 97.3 \myred{0.3}) 
        & (91.8 \myred{0.6} / 97.8 \myred{0.1}) \\
    \hline\hline
    & InCTRL 
        & (96.1 \myred{0.3} / 95.0 \myred{0.6}) 
        & (90.7 \myred{0.9} / 95.5 \myred{0.2}) \\
    & ResAD 
        & (90.5 \myred{1.1} / 88.5 \myred{0.7}) 
        & (85.0 \myred{1.3} / 91.2 \myred{0.3}) \\
    \multirow{-3}{*}{\makecell[c]{\texttt{N8}}} & RegAD 
        & (95.3 \myred{0.3} / 96.1 \myred{0.3}) 
        & (92.5 \myred{0.6} / 95.1 \myred{0.4}) \\
    \hline
    \texttt{N8A4} & NAGL 
        & (97.4 \myred{0.2} / 97.5 \myred{0.4}) 
        & (92.0 \myred{0.5} / 98.1 \myred{0.2}) \\
    \hline\thickhline
    \end{tabular}
}
\endgroup
\vspace{8mm}
\end{wraptable}
The purpose of NVE (Sec.~\ref{sec:nve}) is to isolate anomaly-relevant deviation representations by eliminating nuisance normal variations that may lead to noisy deviations from normality. 
Since our NVE component operates at the feature level, it can be readily incorporated with existing residual-based FSAD methods without modifying their core architectures. 
Tab.~\ref{tab:nve_plug} presents image-level and pixel-level AUROCs of four residual-based FSAD methods (ResAD \cite{yao2024resad}, InCTRL \cite{zhu2024toward}, RegAD \cite{huang2022registration}, NAGL \cite{wang2025normal}) after replacing their original residual features with NVE-denoised residual features. 
We can observe that our NVE component consistently improves the performance of these methods in MVTecAD and VisA datasets. 
This confirms that nuisance normal variations are indeed present in the residual features used by these methods, and that NVE suppresses such variations to produce cleaner residual features.

\section{Hyper-parameter Ablation} \label{sec:hyper}
\begin{figure*}[h]
    \centering
    \includegraphics[width=0.52\textwidth]{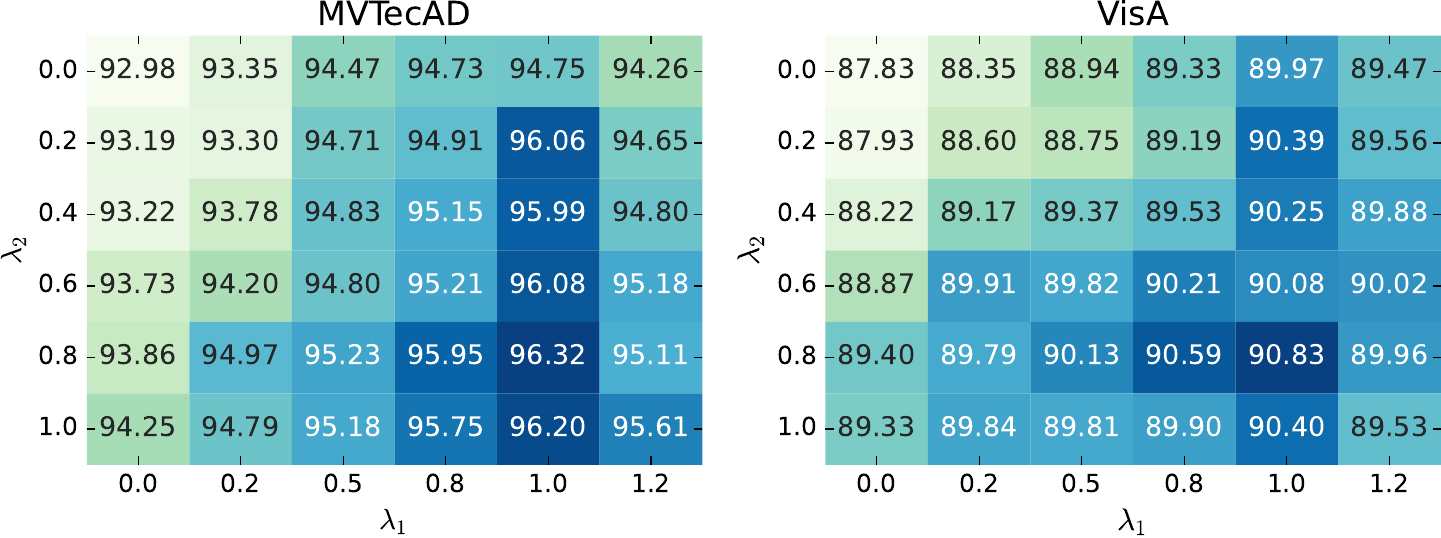}
    \hfill
    \includegraphics[width=0.45\textwidth]{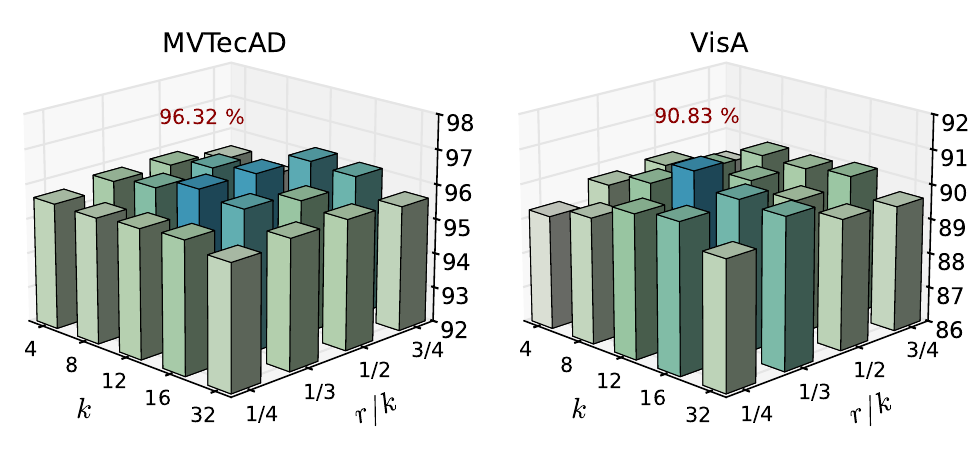}
    \caption{Under the \texttt{N1A1} setting. \textbf{Left}: hyper-parameter study with different $\lambda_1$ and $\lambda_2$ in Eq.~\eqref{eq:eq7} on MVTecAD and VisA. 
    \textbf{Right}: hyper-parameter study with the number $k$ of normal neighbors in Eq.~\eqref{eq:eq2} and the number $r$ of retained principal components in the process of NVE (Sec.~\ref{sec:nve}).}
    \label{fig:fig_hyper}
\end{figure*}

\begin{wrapfigure}{r}{0.5\linewidth}
    \centering
    \includegraphics[width=\linewidth]{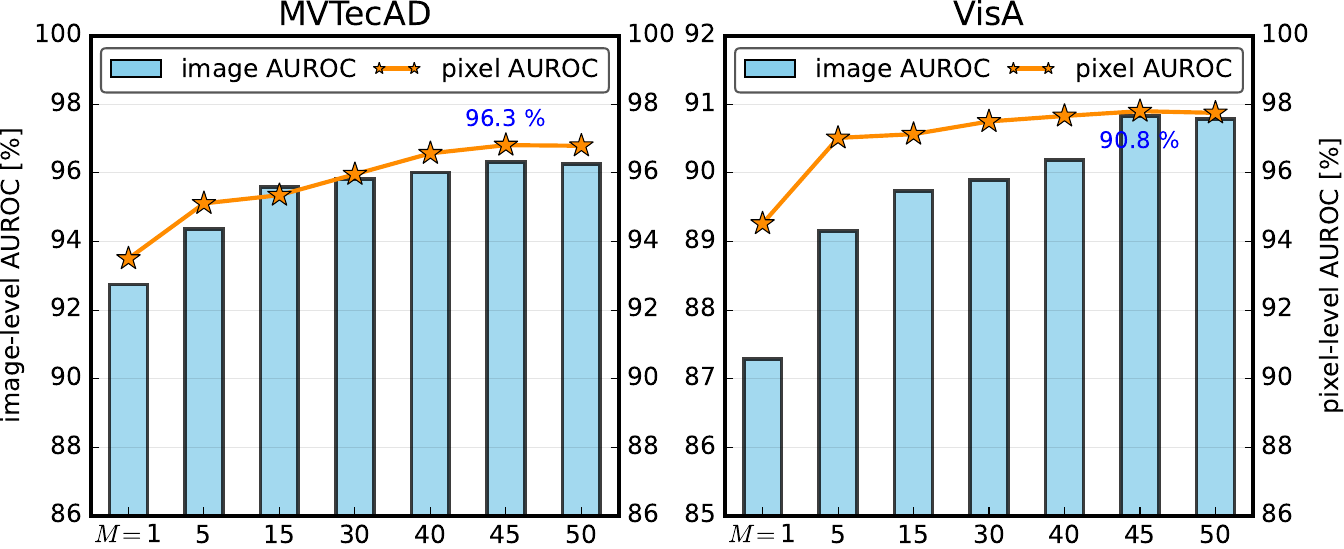}
    \caption{Under the \texttt{N1A1} setting. Ablation study with the number $M$ of intrinsic deviation vectors in Eq.~\eqref{eq:eq6}.}
\label{fig:fig_hyperM}
\vspace{-3mm}
\end{wrapfigure}
We first tune three key arguments in IDEAL: loss weights $\lambda_1$ and $\lambda_2$ in Eq.~\eqref{eq:eq7}, the number $M$ of intrinsic deviation vectors in Eq.~\eqref{eq:eq6}. Fig.~\ref{fig:fig_hyper} (\textbf{Left}) reports the image-level AUROC with different $\lambda_1$ and $\lambda_2$ on the MVTecAD and VisA datasets. 
We can observe that the performance improves gradually as $\lambda_1$ and $\lambda_2$ increase, indicating that both the discriminability and orthogonality constraints are beneficial for learning the intrinsic deviation vectors. However, an excessively large $\lambda_2$ becomes detrimental to training, because the orthogonality term may force the intrinsic deviation vectors apart even when they share correlated patterns. 
As shown in Fig.~\ref{fig:fig_hyperM}, we investigate the number $M$ of intrinsic deviation vectors in Eq.~\eqref{eq:eq6} for a wide range, and present image-level and pixel-level AUROCs on the MVTecAD and VisA datasets. 
We observe that increasing $M$ consistently improves the performance of IDEAL, with the improvement becoming marginal at $M=45$. 
This suggests that more intrinsic deviation vectors enable better capture of discriminative orthogonal deviation directions, while the benefit tends to saturate once sufficient capacity is reached. 
Moreover, we also study the effect of the number $k$ of normal neighbors in Eq.~\eqref{eq:eq2} and the number $r$ of retained principal components in the process of NVE (Sec.~\ref{sec:nve}). 
Fig.~\ref{fig:fig_hyper} (\textbf{Right}) shows that IDEAL is generally robust to both $k$ and $r$. 
A moderate number $k$ of normal neighbors yields better performance, as it provides a more reliable estimate of local normal variations, whereas an overly large $k$ may introduce less relevant neighbors. Similarly, retaining an appropriate proportion of principal components $r$ helps preserve dominant normal patterns while filtering out noisy directions. 
Tab.~\ref{tab:tab_loss} (\textbf{Left}) shows that IDEAL is robust to the choice of $\alpha \in [0.5,0.8]$ in Eq.~\eqref{eq:eq4}, and $\alpha=0.8$ yields the best overall performance, suggesting that a moderate elimination strength provides a favorable balance. 
Empirically, $\lambda_{1}=1.0$, $\lambda_{2}=0.8$, $M=45$, $\alpha=0.8$, $k=12$, $r=4$ is a recommended configuration.

\begin{table*}[t!]
\caption{Under the \texttt{N1A1} setting. \textbf{Left}: hyper-parameter study with $\alpha$ in Eq.~\eqref{eq:eq4} on MVTecAD and VisA. 
\textbf{Right}: the sensitivity among the loss terms in Eq.~\eqref{eq:eq10}.}
\label{tab:tab_loss}
\vspace{1.5mm}
\centering
\begin{minipage}{0.4\linewidth}
\centering
\begingroup
\renewcommand{\arraystretch}{1.15} 
\setlength{\arrayrulewidth}{0.1mm} 
\resizebox{\linewidth}{!}{%
    \begin{tabular}{c I I c I c}
    \hline\thickhline
    \rowcolor{mygray} 
    $\alpha$ in Eq.~\eqref{eq:eq4} 
    & \textsc{\textbf{MVTecAD}} & \textsc{\textbf{VisA}} \\
    \hline\hline
    $\alpha=0.2$ & (95.5 / 96.0) & (89.8 / 97.1) \\
    $\alpha=0.5$ 
    & (\textcolor{blue}{\textbf{96.2}} / \textcolor{blue}{\textbf{96.5}}) 
    & (\textcolor{blue}{\textbf{90.5}} / \textcolor{blue}{\textbf{97.6}}) \\
    $\alpha=0.8$ & (\textcolor{DeepRed}{\textbf{96.3}} / \textcolor{DeepRed}{\textbf{96.8}}) 
    & (\textcolor{DeepRed}{\textbf{90.8}} / \textcolor{DeepRed}{\textbf{97.8}}) \\
    $\alpha=1.0$ & (96.0 / 96.2) & (90.3 / 97.3) \\
    \hline\thickhline
    \end{tabular}
}
\endgroup
\end{minipage}
\hfill
\begin{minipage}{0.56\linewidth}
\centering
\begingroup
\renewcommand{\arraystretch}{1.22} 
\setlength{\arrayrulewidth}{0.1mm} 
\resizebox{\linewidth}{!}{%
    \begin{tabular}{cccc II c I c}
    \hline\thickhline
    \rowcolor{mygray} 
    $\mathcal{L}_{Focal}$ & $\mathcal{L}_{Dice}$ & $\mathcal{L}_{BCE}$ & $\mathcal{L}_{Dual}$ 
    & \textsc{\textbf{MVTecAD}} & \textsc{\textbf{VisA}} \\
    \hline\hline
    \ding{51} & \ding{51} & & & (93.8 / 95.9) & (86.7 / 95.6) \\
    \ding{51} & \ding{51} & \ding{51} & & (94.3 / 96.3) & (88.2 / \textcolor{blue}{\textbf{97.2}}) \\
    \ding{51} & \ding{51} & & \ding{51} & (\textcolor{blue}{\textbf{94.8}} / \textcolor{blue}{\textbf{96.6}}) 
    & (\textcolor{blue}{\textbf{88.7}} / \textcolor{blue}{\textbf{97.2}}) \\
    \ding{51} & \ding{51} & \ding{51} & \ding{51} 
    & (\textcolor{DeepRed}{\textbf{96.3}} / \textcolor{DeepRed}{\textbf{96.8}}) 
    & (\textcolor{DeepRed}{\textbf{90.8}} / \textcolor{DeepRed}{\textbf{97.8}}) \\
    \hline\thickhline
    \end{tabular}
}
\endgroup
\end{minipage}
\end{table*}

\section{Ablation Study on Loss Terms} \label{sec:baloss}
We conduct an ablation study among the loss terms of Eq.~\eqref{eq:eq10} in Tab.~\ref{tab:tab_loss} (\textbf{Right}). We observe that $\mathcal{L}_{Focal}$ and $\mathcal{L}_{Dice}$ provide the basic pixel-level supervision for anomaly localization, while $\mathcal{L}_{BCE}$ improves image-level anomaly discrimination. Moreover, adding the $\mathcal{L}_{Dual}$ consistently improves performance, suggesting that encouraging the learning of intrinsic deviation vectors helps extract more discriminative deviations from normality, thereby enhancing detection. Combining all loss terms yields the best results on both MVTecAD and VisA, confirming their complementary roles.

\section{Limitations and Social Impacts} \label{sec:lim}
The proposed IDEAL is currently evaluated only on image anomaly detection datasets, including industrial and medical images. It would be highly valuable to further apply our framework to other data modalities, such as tabular and time-series data, to more comprehensively assess the generalizability of IDEAL. This work does not introduce specific ethical concerns or social impacts beyond those commonly associated with anomaly detection. All datasets used in our experiments are publicly available, and no private user data or personally identifiable information is collected.

\section{Additional Results} \label{sec:allres}
We provide complete results on different datasets, including MVTecAD (Tab.~\ref{tab:tab_add_mvtec}), VisA (Tab.~\ref{tab:tab_add_visa}), AITEX (Tab.~\ref{tab:tab_add_aitex}), MPDD (Tab.~\ref{tab:tab_add_mpdd}), BTAD (Tab.~\ref{tab:tab_add_btad}), BraTS (Tab.~\ref{tab:tab_add_brats}), Liver (Tab.~\ref{tab:tab_add_liver}), and RESC (Tab.~\ref{tab:tab_add_resc}). The details of each dataset, please see Appendix~\ref{sec:addata}.

\begin{table*}[t!]
\caption{\textbf{Comparison results on MVTecAD} under various few-shot AD settings. Best results are in \textcolor{DeepRed}{\textbf{red bold}}, with second best results in \textcolor{blue}{\textbf{blue bold}}.}
\label{tab:tab_add_mvtec}
\vspace{1.5mm}
\centering
\begingroup
\setlength{\tabcolsep}{10pt} 
\renewcommand{\arraystretch}{1.1} 
\setlength{\arrayrulewidth}{0.1mm} 
\resizebox{\linewidth}{!}{%
\begin{tabular}{cl I I c I c I c I c I c I c}
    \hline\thickhline
    \rowcolor{mygray} 
    & & \multicolumn{3}{c I}{\textsc{\textbf{Image-level}}} 
    & \multicolumn{3}{c}{\textsc{\textbf{Pixel-level}}} \\
    \cline{3-5} \cline{6-8} 
    \rowcolor{mygray} 
    \multirow{-2}{*}{\centering \textsc{\textbf{Setups}}} 
    & \multirow{-2}{*}{\centering \textsc{\textbf{Methods}}} 
    & AUROC & AP & F1-Max & AUROC & PRO & F1-Max \\
    \hline\hline
    & WinCLIP~\cite{jeong2023winclip} 
        & 93.1 & 96.5 & 93.6 & 93.4 & 83.4 & 42.2\\
    & InCTRL~\cite{zhu2024toward} 
        & 92.5 & 96.2 & 93.7 & 93.4 & 83.3 & 45.9\\
    & ResAD~\cite{yao2024resad} 
        & 80.7 & 85.8 & 86.8 & 84.3 & 70.6 & 32.0\\
    & FoundAD~\cite{zhai2026foundation} 
        & 89.9 & 91.0 & 83.6 & 90.7 & 88.7 & 56.8\\
    & RegAD~\cite{huang2022registration} 
        & 91.6 & 90.8 & 90.9 & 93.1 & 87.3 & 54.4\\
    \multirow{-6}{*}{\makecell[c]{\texttt{N1}}} & PromptAD~\cite{li2024promptad} 
        & 92.8 & 92.7 & 91.3 & 93.4 & 90.9 & 56.6\\
    \hline
    & DRA~\cite{ding2022catching} 
        & 92.2 & 96.1 & 93.1 & 92.9 & 91.2 & 51.7\\
    \multirow{-2}{*}{\makecell[c]{\texttt{A1}}} & AHL~\cite{zhu2024anomaly} 
        & 91.7 & 94.3 & 93.3 & 93.6 & 90.8 & 54.2\\
    \hline
    & NAGL~\cite{wang2025normal} 
        & \textcolor{blue}{\textbf{95.1}} & \textcolor{blue}{\textbf{96.9}} 
        & \textcolor{blue}{\textbf{95.4}} & \textcolor{blue}{\textbf{96.1}} 
        & \textcolor{blue}{\textbf{92.9}} & \textcolor{blue}{\textbf{58.7}}\\
    \multirow{-2}{*}{\makecell[c]{\texttt{N1A1}}} & \cellcolor{lightyellow}\textbf{IDEAL} (ours) 
        & \textcolor{DeepRed}{\textbf{96.3}} & \textcolor{DeepRed}{\textbf{97.3}} 
        & \textcolor{DeepRed}{\textbf{95.7}} & \textcolor{DeepRed}{\textbf{96.8}}
        & \textcolor{DeepRed}{\textbf{93.4}} & \textcolor{DeepRed}{\textbf{59.1}}\\
    \hline\hline
    & WinCLIP~\cite{jeong2023winclip} 
        & 93.4 & 96.5 & 94.1 & 93.6 & 84.2 & 42.1\\
    & InCTRL~\cite{zhu2024toward} 
    & 92.7 & 96.5&93.8 &93.7&83.7&46.6\\
    & ResAD~\cite{yao2024resad} 
    &85.7 &89.0 &88.2&86.3&72.6&34.2 \\
    & FoundAD~\cite{zhai2026foundation} 
    & 90.3& 91.8& 94.1& 92.8& 89.9& 58.4 \\
    & RegAD~\cite{huang2022registration} 
    & 92.6& 93.2& 91.8& 93.5& 91.8& 56.5 \\
    \multirow{-6}{*}{\makecell[c]{\texttt{N2}}} & PromptAD~\cite{li2024promptad} 
    & 93.5& 92.9& 91.5& 93.8& 91.3& 57.2 \\
    \hline
    & NAGL~\cite{wang2025normal} 
    & \textcolor{blue}{\textbf{96.0}}& \textcolor{blue}{\textbf{97.8}}
    & \textcolor{blue}{\textbf{96.4}}& \textcolor{blue}{\textbf{96.6}}
    & \textcolor{blue}{\textbf{93.3}}& \textcolor{blue}{\textbf{59.8}} \\
    \multirow{-2}{*}{\makecell[c]{\texttt{N2A1}}} & \cellcolor{lightyellow}\textbf{IDEAL} (ours) 
    & \textcolor{DeepRed}{\textbf{97.4}}& \textcolor{DeepRed}{\textbf{98.8}}
    & \textcolor{DeepRed}{\textbf{96.9}} & \textcolor{DeepRed}{\textbf{97.5}}
    & \textcolor{DeepRed}{\textbf{93.5}}& \textcolor{DeepRed}{\textbf{60.0}} \\
    \hline\hline
    & WinCLIP~\cite{jeong2023winclip} 
    & 93.9& 97.3& 95.1& 93.8& 84.5& 43.1 \\
    & InCTRL~\cite{zhu2024toward} 
    & 94.3& 96.8& 95.0& 94.0& 83.8& 46.9 \\
    & ResAD~\cite{yao2024resad} 
    & 87.1& 89.4& 88.7& 86.9& 73.1& 36.2 \\
    & FoundAD~\cite{zhai2026foundation} 
    & 92.9& 92.9& 95.6& 93.2& 90.7& \textcolor{blue}{\textbf{60.5}} \\
    & RegAD~\cite{huang2022registration} 
    & 94.0& 94.9& 93.1& 95.1& 92.3& 60.1 \\
    \multirow{-6}{*}{\makecell[c]{\texttt{N4}}} & PromptAD~\cite{li2024promptad} 
    & 94.9& 93.0& 91.7& 94.6& 92.4& 57.5 \\
    \hline
    & NAGL~\cite{wang2025normal} 
    & \textcolor{blue}{\textbf{96.9}}& \textcolor{blue}{\textbf{97.5}}& \textcolor{blue}{\textbf{96.6}}& \textcolor{blue}{\textbf{97.0}}& \textcolor{blue}{\textbf{93.4}}& 59.9 \\
    \multirow{-2}{*}{\makecell[c]{\texttt{N4A1}}} & \cellcolor{lightyellow}\textbf{IDEAL} (ours) 
    & \textcolor{DeepRed}{\textbf{98.2}}& \textcolor{DeepRed}{\textbf{99.0}}& \textcolor{DeepRed}{\textbf{97.6}}& \textcolor{DeepRed}{\textbf{97.5}}& \textcolor{DeepRed}{\textbf{93.9}}& \textcolor{DeepRed}{\textbf{62.1}} \\
    \hline\hline
    & WinCLIP~\cite{jeong2023winclip} 
    & 95.5 & 97.8& 95.1& 94.2& 85.9& 45.6 \\
    & InCTRL~\cite{zhu2024toward} 
    & 95.8& 97.8& 95.5& 94.4& 84.7& 48.1 \\
    & ResAD~\cite{yao2024resad} 
    & 89.4& 90.2& 89.5& 87.8& 74.2& 37.8 \\
    & FoundAD~\cite{zhai2026foundation} 
    & 93.2& 93.9& 96.7& 93.5& 91.4& 60.3 \\
    & RegAD~\cite{huang2022registration} 
    & 95.0& 96.2& 94.2& 95.8& \textcolor{blue}{\textbf{93.8}}& \textcolor{blue}{\textbf{61.0}} \\
    \multirow{-6}{*}{\makecell[c]{\texttt{N8}}} & PromptAD~\cite{li2024promptad} 
    & 96.0& 93.3& 91.8& 94.8& 92.9& 57.6 \\
    \hline
    & DRA~\cite{ding2022catching} 
    & 93.7& 96.0& 93.3& 94.0& 93.3& 55.5 \\
    \multirow{-2}{*}{\makecell[c]{\texttt{A4}}} & AHL~\cite{zhu2024anomaly} 
    & 92.9& 94.7& 94.0& 94.5& 93.5& 57.4 \\
    \hline
    & NAGL~\cite{wang2025normal} 
    & \textcolor{blue}{\textbf{97.2}}& \textcolor{blue}{\textbf{98.7}}& \textcolor{blue}{\textbf{97.3}}& \textcolor{blue}{\textbf{97.1}}& \textcolor{blue}{\textbf{93.8}}& 60.7 \\
    \multirow{-2}{*}{\makecell[c]{\texttt{N8A4}}} & \cellcolor{lightyellow}\textbf{IDEAL} (ours) 
    & \textcolor{DeepRed}{\textbf{98.6}}& \textcolor{DeepRed}{\textbf{99.2}}& \textcolor{DeepRed}{\textbf{98.0}}& \textcolor{DeepRed}{\textbf{97.8}}& \textcolor{DeepRed}{\textbf{94.2}}& \textcolor{DeepRed}{\textbf{62.4}} \\
    \hline\thickhline
\end{tabular}
}
\endgroup
\end{table*}

\begin{table*}[t!]
\caption{\textbf{Comparison results on VisA} under various few-shot AD settings. Best results are in \textcolor{DeepRed}{\textbf{red bold}}, with second best results in \textcolor{blue}{\textbf{blue bold}}.}
\label{tab:tab_add_visa}
\vspace{1.5mm}
\centering
\begingroup
\setlength{\tabcolsep}{10pt} 
\renewcommand{\arraystretch}{1.1} 
\setlength{\arrayrulewidth}{0.1mm} 
\resizebox{\linewidth}{!}{%
\begin{tabular}{cl I I c I c I c I c I c I c}
    \hline\thickhline
    \rowcolor{mygray} 
    & & \multicolumn{3}{c I}{\textsc{\textbf{Image-level}}} 
    & \multicolumn{3}{c}{\textsc{\textbf{Pixel-level}}} \\
    \cline{3-5} \cline{6-8} 
    \rowcolor{mygray} 
    \multirow{-2}{*}{\centering \textsc{\textbf{Setups}}} 
    & \multirow{-2}{*}{\centering \textsc{\textbf{Methods}}} 
    & AUROC & AP & F1-Max & AUROC & PRO & F1-Max \\
    \hline\hline
    & WinCLIP~\cite{jeong2023winclip} 
    & 78.7& 80.3& 80.2& 94.7& 79.2& 32.9 \\
    & InCTRL~\cite{zhu2024toward} 
    & 82.2& 84.1& 81.6& 93.1& 65.2& 35.2 \\
    & ResAD~\cite{yao2024resad} 
    & 77.2& 72.1& 76.2& 88.3& 70.9& 32.1 \\
    & FoundAD~\cite{zhai2026foundation} 
    & 85.3& \textcolor{blue}{\textbf{89.9}}& 82.8& 96.2& 90.5& 40.6 \\
    & RegAD~\cite{huang2022registration} 
    & 83.3& 82.5& 85.6& 91.6& 89.8& 32.7 \\
    \multirow{-6}{*}{\makecell[c]{\texttt{N1}}} & PromptAD~\cite{li2024promptad} 
    & 83.8& 87.8& \textcolor{blue}{\textbf{86.1}}& 89.5& 87.8& \textcolor{blue}{\textbf{41.7}} \\
    \hline
    & DRA~\cite{ding2022catching} 
    & 82.3& 84.6& 80.7& 90.3& 77.3& 40.5 \\
    \multirow{-2}{*}{\makecell[c]{\texttt{A1}}} & AHL~\cite{zhu2024anomaly} 
    & 82.0& 85.5& 81.4& 91.9& 81.6& 40.8 \\
    \hline
    & NAGL~\cite{wang2025normal} 
    & \textcolor{blue}{\textbf{88.5}}& 89.2& 85.2& \textcolor{blue}{\textbf{97.5}}& \textcolor{blue}{\textbf{91.0}}& 41.5 \\
    \multirow{-2}{*}{\makecell[c]{\texttt{N1A1}}} & \cellcolor{lightyellow}\textbf{IDEAL} (ours) 
    & \textcolor{DeepRed}{\textbf{90.8}}& \textcolor{DeepRed}{\textbf{91.3}}& \textcolor{DeepRed}{\textbf{87.2}}& \textcolor{DeepRed}{\textbf{97.8}}& \textcolor{DeepRed}{\textbf{92.1}}& \textcolor{DeepRed}{\textbf{42.1}} \\
    \hline\hline
    & WinCLIP~\cite{jeong2023winclip} 
    & 84.3& 85.7& 82.7& 94.8& 80.1& 33.1 \\
    & InCTRL~\cite{zhu2024toward} 
    & 85.6& 89.0& 85.8& 93.7& 74.6& 36.3 \\
    & ResAD~\cite{yao2024resad} 
    &82.4&73.5&77.5&89.8&71.9&32.7 \\
    & FoundAD~\cite{zhai2026foundation} 
    & 88.5& 90.6& 83.2& 96.5& 90.9& 40.9 \\
    & RegAD~\cite{huang2022registration} 
    & 87.4& 88.4& 86.0& 92.7& 90.4& 38.1 \\
    \multirow{-6}{*}{\makecell[c]{\texttt{N2}}} & PromptAD~\cite{li2024promptad} 
    & 86.6& 89.7& \textcolor{blue}{\textbf{87.5}}& 89.8& 89.1& 42.1 \\
    \hline
    & NAGL~\cite{wang2025normal} 
    & \textcolor{blue}{\textbf{89.1}}& \textcolor{blue}{\textbf{90.7}}& 86.0& \textcolor{blue}{\textbf{97.5}}& \textcolor{blue}{\textbf{91.5}}& \textcolor{blue}{\textbf{43.2}} \\
    \multirow{-2}{*}{\makecell[c]{\texttt{N2A1}}} & \cellcolor{lightyellow}\textbf{IDEAL} (ours) 
    & \textcolor{DeepRed}{\textbf{91.3}}& \textcolor{DeepRed}{\textbf{91.9}}& \textcolor{DeepRed}{\textbf{87.9}}& \textcolor{DeepRed}{\textbf{97.9}}& \textcolor{DeepRed}{\textbf{92.9}}& \textcolor{DeepRed}{\textbf{44.5}} \\
    \hline\hline
    & WinCLIP~\cite{jeong2023winclip} 
    & 84.5& 85.8& 83.2& 95.1& 80.0& 34.9 \\
    & InCTRL~\cite{zhu2024toward} 
    & 89.5& 89.7& 86.5& 94.2& 74.3& 36.9 \\
    & ResAD~\cite{yao2024resad} 
    & 83.1& 75.4& 78.9& 90.1& 72.2& 36.4 \\
    & FoundAD~\cite{zhai2026foundation} 
    & 91.0& 90.9& 84.6& 96.7& 90.4& 42.3 \\
    & RegAD~\cite{huang2022registration} 
    & 89.5& 90.8& 87.1& 93.3& \textcolor{blue}{\textbf{91.5}}& 42.5 \\
    \multirow{-6}{*}{\makecell[c]{\texttt{N4}}} & PromptAD~\cite{li2024promptad} 
    & 87.1& 90.1& \textcolor{blue}{\textbf{87.9}}& 90.6& 89.5& 42.8 \\
    \hline
    & NAGL~\cite{wang2025normal} 
    & \textcolor{blue}{\textbf{91.2}}& \textcolor{blue}{\textbf{91.7}}& 87.1& \textcolor{blue}{\textbf{97.7}}& 91.3& \textcolor{blue}{\textbf{44.0}} \\
    \multirow{-2}{*}{\makecell[c]{\texttt{N4A1}}} & \cellcolor{lightyellow}\textbf{IDEAL} (ours) 
    & \textcolor{DeepRed}{\textbf{92.7}}& \textcolor{DeepRed}{\textbf{92.6}}& \textcolor{DeepRed}{\textbf{89.1}}& \textcolor{DeepRed}{\textbf{98.0}}& \textcolor{DeepRed}{\textbf{93.5}}& \textcolor{DeepRed}{\textbf{44.8}} \\
    \hline\hline
    & WinCLIP~\cite{jeong2023winclip} 
    & 86.1& 87.5& 83.7& 95.1& 81.5& 35.3 \\
    & InCTRL~\cite{zhu2024toward} 
    & 89.8& 90.9& 86.6& 95.3& 74.9& 37.8 \\
    & ResAD~\cite{yao2024resad} 
    & 83.7& 77.4& 79.5& 90.9& 73.9& 38.3 \\
    & FoundAD~\cite{zhai2026foundation} 
    & 91.3& 91.1& 84.8& 96.8& 90.4& 43.7 \\
    & RegAD~\cite{huang2022registration} 
    & \textcolor{blue}{\textbf{91.9}}& 92.0& 88.3& 94.7& 92.0& 43.4 \\
    \multirow{-6}{*}{\makecell[c]{\texttt{N8}}} & PromptAD~\cite{li2024promptad} 
    & 87.9& 90.7& 88.2& 92.2& 90.6& 43.5 \\
    \hline
    & DRA~\cite{ding2022catching} 
    & 86.8& 91.8& 87.8& 90.7& 87.9& 42.4 \\
    \multirow{-2}{*}{\makecell[c]{\texttt{A4}}} & AHL~\cite{zhu2024anomaly} 
    & 85.9& 90.6& 88.3& 92.5& 87.0& 41.9 \\
    \hline
    & NAGL~\cite{wang2025normal} 
    & 91.5& \textcolor{blue}{\textbf{92.8}}& \textcolor{blue}{\textbf{88.5}}& \textcolor{blue}{\textbf{97.9}}& \textcolor{blue}{\textbf{92.1}}& \textcolor{blue}{\textbf{45.3}} \\
    \multirow{-2}{*}{\makecell[c]{\texttt{N8A4}}} & \cellcolor{lightyellow}\textbf{IDEAL} (ours) 
    & \textcolor{DeepRed}{\textbf{93.1}}& \textcolor{DeepRed}{\textbf{92.9}}& \textcolor{DeepRed}{\textbf{89.4}}& \textcolor{DeepRed}{\textbf{98.3}}& \textcolor{DeepRed}{\textbf{93.6}}& \textcolor{DeepRed}{\textbf{45.9}} \\
    \hline\thickhline
\end{tabular}
}
\endgroup
\end{table*}

\begin{table*}[t!]
\caption{\textbf{Comparison results on AITEX} under various few-shot AD settings. Best results are in \textcolor{DeepRed}{\textbf{red bold}}, with second best results in \textcolor{blue}{\textbf{blue bold}}.}
\label{tab:tab_add_aitex}
\vspace{1.5mm}
\centering
\begingroup
\setlength{\tabcolsep}{10pt} 
\renewcommand{\arraystretch}{1.1} 
\setlength{\arrayrulewidth}{0.1mm} 
\resizebox{\linewidth}{!}{%
\begin{tabular}{cl I I c I c I c I c I c I c}
    \hline\thickhline
    \rowcolor{mygray} 
    & & \multicolumn{3}{c I}{\textsc{\textbf{Image-level}}} 
    & \multicolumn{3}{c}{\textsc{\textbf{Pixel-level}}} \\
    \cline{3-5} \cline{6-8} 
    \rowcolor{mygray} 
    \multirow{-2}{*}{\centering \textsc{\textbf{Setups}}} 
    & \multirow{-2}{*}{\centering \textsc{\textbf{Methods}}} 
    & AUROC & AP & F1-Max & AUROC & PRO & F1-Max \\
    \hline\hline
    & WinCLIP~\cite{jeong2023winclip} 
    & 73.1& \textcolor{blue}{\textbf{48.4}}& 51.1& 78.9& 58.3& \textcolor{blue}{\textbf{22.1}} \\
    & InCTRL~\cite{zhu2024toward} 
    & \textcolor{blue}{\textbf{74.9}}& 41.4& 60.3& 78.3& \textcolor{blue}{\textbf{60.5}}& 11.7 \\
    & ResAD~\cite{yao2024resad} 
    & 70.5& 37.4& 51.6& 72.7& 51.9& 10.3 \\
    & FoundAD~\cite{zhai2026foundation} 
    & 71.4& 42.0& 54.6& 80.4& 57.1& 10.5 \\
    & RegAD~\cite{huang2022registration} 
    & 72.0& 47.1& 58.2& 80.2& 56.6& 11.4 \\
    \multirow{-6}{*}{\makecell[c]{\texttt{N1}}} & PromptAD~\cite{li2024promptad} 
    & 71.9& 47.8& \textcolor{blue}{\textbf{60.5}}& \textcolor{blue}{\textbf{81.4}}& 58.2& 15.8 \\
    \hline
    & DRA~\cite{ding2022catching} 
    & 72.3& 45.1& 55.0& 75.3& 54.9& 15.7 \\
    \multirow{-2}{*}{\makecell[c]{\texttt{A1}}} & AHL~\cite{zhu2024anomaly} 
    & 70.7& 39.8& 53.3& 76.3& 53.6& 21.7 \\
    \hline
    & NAGL~\cite{wang2025normal} 
    & 71.6& 47.3& 55.3& 75.5& 58.4& 14.3 \\
    \multirow{-2}{*}{\makecell[c]{\texttt{N1A1}}} & \cellcolor{lightyellow}\textbf{IDEAL} (ours) 
    & \textcolor{DeepRed}{\textbf{75.8}}& \textcolor{DeepRed}{\textbf{50.2}}& \textcolor{DeepRed}{\textbf{61.2}}& \textcolor{DeepRed}{\textbf{82.7}}& \textcolor{DeepRed}{\textbf{62.0}}& \textcolor{DeepRed}{\textbf{25.5}} \\
    \hline\hline
    & WinCLIP~\cite{jeong2023winclip} 
    & 73.5& \textcolor{blue}{\textbf{52.2}}& 52.6& 79.3& 58.6& \textcolor{blue}{\textbf{22.9}} \\
    & InCTRL~\cite{zhu2024toward} 
    & \textcolor{blue}{\textbf{75.2}}& 41.8& 60.4 & 78.7& \textcolor{blue}{\textbf{60.4}}& 12.9 \\
    & ResAD~\cite{yao2024resad} 
    & 75.0& 47.2& 59.3& 75.1& 55.7& 13.4 \\
    & FoundAD~\cite{zhai2026foundation} 
    & 72.3& 43.1& 56.5& 81.4& 59.3& 12.1 \\
    & RegAD~\cite{huang2022registration} 
    & 73.5& 51.1& 59.9& \textcolor{blue}{\textbf{82.0}}& 59.1& 14.7 \\
    \multirow{-6}{*}{\makecell[c]{\texttt{N2}}} & PromptAD~\cite{li2024promptad} 
    & 73.8& 48.1& \textcolor{blue}{\textbf{60.6}}& 81.8& 58.8& 16.9 \\
    \hline
    & NAGL~\cite{wang2025normal} 
    & 73.5& 48.3& 58.2& 79.8& 58.7& 15.9 \\
    \multirow{-2}{*}{\makecell[c]{\texttt{N2A1}}} & \cellcolor{lightyellow}\textbf{IDEAL} (ours) 
    & \textcolor{DeepRed}{\textbf{77.6}}& \textcolor{DeepRed}{\textbf{53.1}}& \textcolor{DeepRed}{\textbf{61.7}}& \textcolor{DeepRed}{\textbf{85.3}}& \textcolor{DeepRed}{\textbf{62.6}}& \textcolor{DeepRed}{\textbf{26.9}} \\
    \hline\hline
    & WinCLIP~\cite{jeong2023winclip} 
    & 73.8& 52.5& 52.9& 79.5& 60.7& \textcolor{blue}{\textbf{23.2}} \\
    & InCTRL~\cite{zhu2024toward} 
    & \textcolor{blue}{\textbf{75.6}}& 43.7& 60.2& 80.4& \textcolor{blue}{\textbf{64.5}}& 13.7 \\
    & ResAD~\cite{yao2024resad} 
    & 75.2& 50.6& 59.4& 76.7& 57.1& 14.1 \\
    & FoundAD~\cite{zhai2026foundation} 
    & 73.2& 47.0& 58.2& 82.8& 63.3& 14.7 \\
    & RegAD~\cite{huang2022registration} 
    & 74.6& \textcolor{blue}{\textbf{54.0}}& 60.1& \textcolor{blue}{\textbf{83.7}}& 63.3& 15.5 \\
    \multirow{-6}{*}{\makecell[c]{\texttt{N4}}} & PromptAD~\cite{li2024promptad} 
    & 74.1& 50.8& \textcolor{blue}{\textbf{61.4}}& 83.4& 62.9& 18.6 \\
    \hline
    & NAGL~\cite{wang2025normal} 
    & \textcolor{blue}{\textbf{75.6}}& 50.2& 59.5& 80.9& 60.4& 19.0 \\
    \multirow{-2}{*}{\makecell[c]{\texttt{N4A1}}} & \cellcolor{lightyellow}\textbf{IDEAL} (ours) 
    & \textcolor{DeepRed}{\textbf{79.5}}& \textcolor{DeepRed}{\textbf{58.8}}& \textcolor{DeepRed}{\textbf{63.4}}& \textcolor{DeepRed}{\textbf{86.1}}& \textcolor{DeepRed}{\textbf{66.1}}& \textcolor{DeepRed}{\textbf{29.5}} \\
    \hline\hline
    & WinCLIP~\cite{jeong2023winclip} 
    & 74.8& 55.0& 53.4& 80.5& 62.6& 28.3 \\
    & InCTRL~\cite{zhu2024toward} 
    & 75.9& 44.1& 60.6& 80.6& 65.2& 14.6 \\
    & ResAD~\cite{yao2024resad} 
    & 75.4& 53.8& 59.4& 79.3& 57.4& 15.8 \\
    & FoundAD~\cite{zhai2026foundation} 
    & 74.3& 52.1& \textcolor{blue}{\textbf{64.6}}& 83.0& 64.1& 15.7 \\
    & RegAD~\cite{huang2022registration} 
    & 75.8& 56.5& 64.2& 83.9& \textcolor{blue}{\textbf{66.4}}& 17.5 \\
    \multirow{-6}{*}{\makecell[c]{\texttt{N8}}} & PromptAD~\cite{li2024promptad} 
    & 74.4& 56.0& 61.9& \textcolor{blue}{\textbf{84.7}}& 64.3& 20.2 \\
    \hline
    & DRA~\cite{ding2022catching} 
    & 75.7& \textcolor{blue}{\textbf{58.3}}& 56.5& 77.8& 56.5& 23.2 \\
    \multirow{-2}{*}{\makecell[c]{\texttt{A4}}} & AHL~\cite{zhu2024anomaly} 
    & 75.1& 56.3& 52.6& 78.8& 52.6& \textcolor{blue}{\textbf{29.8}} \\
    \hline
    & NAGL~\cite{wang2025normal} 
    & \textcolor{blue}{\textbf{80.5}}& 54.8& 60.5& 82.3& 65.2& 20.5 \\
    \multirow{-2}{*}{\makecell[c]{\texttt{N8A4}}} & \cellcolor{lightyellow}\textbf{IDEAL} (ours) 
    & \textcolor{DeepRed}{\textbf{83.9}}& \textcolor{DeepRed}{\textbf{61.1}}& \textcolor{DeepRed}{\textbf{68.2}}& \textcolor{DeepRed}{\textbf{86.8}}& \textcolor{DeepRed}{\textbf{73.0}}& \textcolor{DeepRed}{\textbf{30.3}} \\
    \hline\thickhline
\end{tabular}
}
\endgroup
\end{table*}

\begin{table*}[t!]
\caption{\textbf{Comparison results on MPDD} under various few-shot AD settings. Best results are in \textcolor{DeepRed}{\textbf{red bold}}, with second best results in \textcolor{blue}{\textbf{blue bold}}.}
\label{tab:tab_add_mpdd}
\vspace{1.5mm}
\centering
\begingroup
\setlength{\tabcolsep}{10pt} 
\renewcommand{\arraystretch}{1.1} 
\setlength{\arrayrulewidth}{0.1mm} 
\resizebox{\linewidth}{!}{%
\begin{tabular}{cl I I c I c I c I c I c I c}
    \hline\thickhline
    \rowcolor{mygray} 
    & & \multicolumn{3}{c I}{\textsc{\textbf{Image-level}}} 
    & \multicolumn{3}{c}{\textsc{\textbf{Pixel-level}}} \\
    \cline{3-5} \cline{6-8} 
    \rowcolor{mygray} 
    \multirow{-2}{*}{\centering \textsc{\textbf{Setups}}} 
    & \multirow{-2}{*}{\centering \textsc{\textbf{Methods}}} 
    & AUROC & AP & F1-Max & AUROC & PRO & F1-Max \\
    \hline\hline
    & WinCLIP~\cite{jeong2023winclip} 
    & 70.2& 70.6& 80.7& 94.3& 88.1& 31.3 \\
    & InCTRL~\cite{zhu2024toward} 
    & 74.8& 76.3& 80.4& 93.1& 84.5& 32.3 \\
    & ResAD~\cite{yao2024resad} 
    & 67.7& 67.5& 75.0& 92.7& 84.1& 29.8 \\
    & FoundAD~\cite{zhai2026foundation} 
    & 70.6& 70.1& 84.5& 95.4& 92.5& 37.9 \\
    & RegAD~\cite{huang2022registration} 
    & 69.7& 70.6& 83.6& 92.5& 85.7& 35.5 \\
    \multirow{-6}{*}{\makecell[c]{\texttt{N1}}} & PromptAD~\cite{li2024promptad} 
    & 75.8& 76.2& 83.1& 95.1& 91.4& \textcolor{blue}{\textbf{39.5}} \\
    \hline
    & DRA~\cite{ding2022catching} 
    & 73.4& 75.3& 82.3& 93.9& 92.1& 36.7 \\
    \multirow{-2}{*}{\makecell[c]{\texttt{A1}}} & AHL~\cite{zhu2024anomaly} 
    & 73.3& 76.8& 82.7& 94.3& 92.7& 39.1 \\
    \hline
    & NAGL~\cite{wang2025normal} 
    & \textcolor{blue}{\textbf{77.1}}& \textcolor{blue}{\textbf{79.3}}& \textcolor{blue}{\textbf{84.6}}& \textcolor{blue}{\textbf{96.5}}& \textcolor{blue}{\textbf{92.8}}& 39.0 \\
    \multirow{-2}{*}{\makecell[c]{\texttt{N1A1}}} & \cellcolor{lightyellow}\textbf{IDEAL} (ours) 
    & \textcolor{DeepRed}{\textbf{78.5}}& \textcolor{DeepRed}{\textbf{80.5}}& \textcolor{DeepRed}{\textbf{86.8}}& \textcolor{DeepRed}{\textbf{97.9}}& \textcolor{DeepRed}{\textbf{94.1}}& \textcolor{DeepRed}{\textbf{40.6}} \\
    \hline\hline
    & WinCLIP~\cite{jeong2023winclip} 
    & 70.8& 74.1& 82.5& 94.7& 88.8& 32.4 \\
    & InCTRL~\cite{zhu2024toward} 
    & 75.9& 78.6& 83.4& 93.5& 86.3& 34.4 \\
    & ResAD~\cite{yao2024resad} 
    & 68.4& 69.6& 76.1& 94.4& 86.8& 31.7 \\
    & FoundAD~\cite{zhai2026foundation} 
    & 73.9& 80.9& 84.6& 95.7& 92.6& 42.7 \\
    & RegAD~\cite{huang2022registration} 
    & 71.9& 78.1& \textcolor{blue}{\textbf{86.3}}& 94.5& 86.1& 39.2 \\
    \multirow{-6}{*}{\makecell[c]{\texttt{N2}}} & PromptAD~\cite{li2024promptad} 
    & 76.8& 76.9& 83.8& 96.1& 91.5& 41.6 \\
    \hline
    & NAGL~\cite{wang2025normal} 
    & \textcolor{blue}{\textbf{80.2}}& \textcolor{blue}{\textbf{82.8}}& 86.1& \textcolor{blue}{\textbf{97.3}}& \textcolor{blue}{\textbf{93.5}}& \textcolor{blue}{\textbf{43.3}} \\
    \multirow{-2}{*}{\makecell[c]{\texttt{N2A1}}} & \cellcolor{lightyellow}\textbf{IDEAL} (ours) 
    & \textcolor{DeepRed}{\textbf{81.6}}& \textcolor{DeepRed}{\textbf{84.5}}& \textcolor{DeepRed}{\textbf{88.5}}& \textcolor{DeepRed}{\textbf{98.0}}& \textcolor{DeepRed}{\textbf{94.7}}& \textcolor{DeepRed}{\textbf{44.5}} \\
    \hline\hline
    & WinCLIP~\cite{jeong2023winclip} 
    & 71.0 & 75.1& 83& 95.4& 89.3& 33.4 \\
    & InCTRL~\cite{zhu2024toward} 
    & 79.4& 79.1& 83.7& 93.7& 89.0& 35.2 \\
    & ResAD~\cite{yao2024resad} 
    & 70.3& 72.2& 77.9& 95.7& 86.9& 33.3 \\
    & FoundAD~\cite{zhai2026foundation} 
    & 77.6& 81.2& 85.4& 95.8& 92.8& 43.9 \\
    & RegAD~\cite{huang2022registration} 
    & 76.3& 78.6& \textcolor{blue}{\textbf{87.2}}& 95.2& 86.3& 42.3 \\
    \multirow{-6}{*}{\makecell[c]{\texttt{N4}}} & PromptAD~\cite{li2024promptad} 
    & 78.3& 80.9& 85.1& 96.5& 91.6& \textcolor{blue}{\textbf{45.6}} \\
    \hline
    & NAGL~\cite{wang2025normal} 
    & \textcolor{blue}{\textbf{81.4}}& \textcolor{blue}{\textbf{84.3}}& 86.7& \textcolor{blue}{\textbf{97.7}}& \textcolor{blue}{\textbf{93.9}}& 44.6 \\
    \multirow{-2}{*}{\makecell[c]{\texttt{N4A1}}} & \cellcolor{lightyellow}\textbf{IDEAL} (ours) 
    & \textcolor{DeepRed}{\textbf{87.6}}& \textcolor{DeepRed}{\textbf{89.4}}& \textcolor{DeepRed}{\textbf{89.1}}& \textcolor{DeepRed}{\textbf{98.5}}& \textcolor{DeepRed}{\textbf{95.7}}& \textcolor{DeepRed}{\textbf{46.7}} \\
    \hline\hline
    & WinCLIP~\cite{jeong2023winclip} 
    & 74.7& 75.5& 83.3& 95.6& 89.2& 34.3 \\
    & InCTRL~\cite{zhu2024toward} 
    & \textcolor{blue}{\textbf{83.2}}& 84.5& 85.5& 94.1& 89.7& 35.6 \\
    & ResAD~\cite{yao2024resad} 
    & 75.1& 75.7& 81.5& 96.1& 88.1& 34.7 \\
    & FoundAD~\cite{zhai2026foundation} 
    & 80.3& 83.3& 86.5& 96.2& 92.9& \textcolor{blue}{\textbf{47.3}} \\
    & RegAD~\cite{huang2022registration} 
    & 80.8& 79.1& 87.7& 95.7& 88.4& 43.9 \\
    \multirow{-6}{*}{\makecell[c]{\texttt{N8}}} & PromptAD~\cite{li2024promptad} 
    & 80.6& 86.2& 86.4& 96.5& 92.3& 45.9 \\
    \hline
    & DRA~\cite{ding2022catching} 
    & 81.8& 83.9& 82.0& 94.9& 92.5& 44.7 \\
    \multirow{-2}{*}{\makecell[c]{\texttt{A4}}} & AHL~\cite{zhu2024anomaly} 
    & 81.0& 84.5& 83.8& 95.2& 93.7& 46.3 \\
    \hline
    & NAGL~\cite{wang2025normal} 
    & 83.1& \textcolor{blue}{\textbf{86.5}}& \textcolor{blue}{\textbf{88.6}}& \textcolor{blue}{\textbf{97.9}}& \textcolor{blue}{\textbf{94.1}}& 45.7 \\
    \multirow{-2}{*}{\makecell[c]{\texttt{N8A4}}} & \cellcolor{lightyellow}\textbf{IDEAL} (ours) 
    & \textcolor{DeepRed}{\textbf{88.0}}& \textcolor{DeepRed}{\textbf{90.4}}& \textcolor{DeepRed}{\textbf{91.5}}& \textcolor{DeepRed}{\textbf{98.7}}& \textcolor{DeepRed}{\textbf{95.9}}& \textcolor{DeepRed}{\textbf{48.7}} \\
    \hline\thickhline
\end{tabular}
}
\endgroup
\end{table*}

\begin{table*}[t!]
\caption{\textbf{Comparison results on BTAD} under various few-shot AD settings. Best results are in \textcolor{DeepRed}{\textbf{red bold}}, with second best results in \textcolor{blue}{\textbf{blue bold}}.}
\label{tab:tab_add_btad}
\vspace{1.5mm}
\centering
\begingroup
\setlength{\tabcolsep}{10pt} 
\renewcommand{\arraystretch}{1.1} 
\setlength{\arrayrulewidth}{0.1mm} 
\resizebox{\linewidth}{!}{%
\begin{tabular}{cl I I c I c I c I c I c I c}
    \hline\thickhline
    \rowcolor{mygray} 
    & & \multicolumn{3}{c I}{\textsc{\textbf{Image-level}}} 
    & \multicolumn{3}{c}{\textsc{\textbf{Pixel-level}}} \\
    \cline{3-5} \cline{6-8} 
    \rowcolor{mygray} 
    \multirow{-2}{*}{\centering \textsc{\textbf{Setups}}} 
    & \multirow{-2}{*}{\centering \textsc{\textbf{Methods}}} 
    & AUROC & AP & F1-Max & AUROC & PRO & F1-Max \\
    \hline\hline
    & WinCLIP~\cite{jeong2023winclip} 
    & 84.5& 88.1& 75.7& 94.3& 66.5& 46.9 \\
    & InCTRL~\cite{zhu2024toward} 
    & 90.5& 91.1& 86.5& 92.9& 67.8& 54.7 \\
    & ResAD~\cite{yao2024resad} 
    & 78.4& 80.3& 71.7& 92.3& 72.9& 38.7 \\
    & FoundAD~\cite{zhai2026foundation} 
    & 91.4& 91.9& 87.9& 91.9& 73.9& 51.8 \\
    & RegAD~\cite{huang2022registration} 
    & 86.6& 87.8& 85.5& 94.5& 72.8& 45.7 \\
    \multirow{-6}{*}{\makecell[c]{\texttt{N1}}} & PromptAD~\cite{li2024promptad} 
    & 89.6& 90.8& 87.7& 92.4& 72.5& 59.5 \\
    \hline
    & DRA~\cite{ding2022catching} 
    & 90.5& 91.7& 91.3& 93.5& 72.3& 53.4 \\
    \multirow{-2}{*}{\makecell[c]{\texttt{A1}}} & AHL~\cite{zhu2024anomaly} 
    & 90.9& 92.3& 91.9& 93.8& 73.2& 57.8 \\
    \hline
    & NAGL~\cite{wang2025normal} 
    & \textcolor{blue}{\textbf{92.0}}& \textcolor{blue}{\textbf{95.6}}& \textcolor{blue}{\textbf{92.2}}& \textcolor{blue}{\textbf{95.8}}& \textcolor{blue}{\textbf{75.5}}& \textcolor{blue}{\textbf{60.6}} \\
    \multirow{-2}{*}{\makecell[c]{\texttt{N1A1}}} & \cellcolor{lightyellow}\textbf{IDEAL} (ours) 
    & \textcolor{DeepRed}{\textbf{93.4}}& \textcolor{DeepRed}{\textbf{96.0}}& \textcolor{DeepRed}{\textbf{93.1}}& \textcolor{DeepRed}{\textbf{97.0}}& \textcolor{DeepRed}{\textbf{76.9}}& \textcolor{DeepRed}{\textbf{62.3}} \\
    \hline\hline
    & WinCLIP~\cite{jeong2023winclip} 
    & 85.0& 88.9& 76.4& 95.5& 67.1& 48.4 \\
    & InCTRL~\cite{zhu2024toward} 
    & 93.4& 92.9& 88.1& 93.0& 69.2& 56.1 \\
    & ResAD~\cite{yao2024resad} 
    & 80.1& 81.4& 72.6& 92.9& 73.0& 41.4 \\
    & FoundAD~\cite{zhai2026foundation} 
    & \textcolor{blue}{\textbf{94.0}}& 92.9& 88.5& 93.2& 75.7& 53.6 \\
    & RegAD~\cite{huang2022registration} 
    & 90.8& 89.5& 86.9& 95.6& 73.4& 46.4 \\
    \multirow{-6}{*}{\makecell[c]{\texttt{N2}}} & PromptAD~\cite{li2024promptad} 
    & 90.3& 91.5& 88.5& 92.6& 72.8& 60.5 \\
    \hline
    & NAGL~\cite{wang2025normal} 
    & 93.5& \textcolor{blue}{\textbf{96.4}}& \textcolor{blue}{\textbf{92.5}}& \textcolor{blue}{\textbf{96.3}}& \textcolor{blue}{\textbf{76.7}}& \textcolor{blue}{\textbf{62.2}} \\
    \multirow{-2}{*}{\makecell[c]{\texttt{N2A1}}} & \cellcolor{lightyellow}\textbf{IDEAL} (ours) 
    & \textcolor{DeepRed}{\textbf{95.6}}& \textcolor{DeepRed}{\textbf{97.2}}& \textcolor{DeepRed}{\textbf{93.2}}& \textcolor{DeepRed}{\textbf{97.2}}& \textcolor{DeepRed}{\textbf{77.0}}& \textcolor{DeepRed}{\textbf{63.7}} \\
    \hline\hline
    & WinCLIP~\cite{jeong2023winclip} 
    & 87.4& 90.4& 77.4& 95.9& 69.5& 49.0 \\
    & InCTRL~\cite{zhu2024toward} 
    & 93.9& 93.5& 89.1& 93.6& 72.6& 57.8 \\
    & ResAD~\cite{yao2024resad} 
    & 81.5& 82.1& 74.9& 95.1& 76.5& 44.5 \\
    & FoundAD~\cite{zhai2026foundation} 
    & \textcolor{blue}{\textbf{94.3}}& 93.4& 89.7& 94.0& \textcolor{blue}{\textbf{77.4}}& 60.4 \\
    & RegAD~\cite{huang2022registration} 
    & 91.7& 91.2& 90.6& 96.3& 75.8& 51.7 \\
    \multirow{-6}{*}{\makecell[c]{\texttt{N4}}} & PromptAD~\cite{li2024promptad} 
    & 90.8& 92.3& 89.7& 93.2& 74.5& 61.3 \\
    \hline
    & NAGL~\cite{wang2025normal} 
    & 93.7& \textcolor{blue}{\textbf{96.7}}& \textcolor{blue}{\textbf{93.3}}& \textcolor{blue}{\textbf{96.5}}& 76.5& \textcolor{blue}{\textbf{62.3}} \\
    \multirow{-2}{*}{\makecell[c]{\texttt{N4A1}}} & \cellcolor{lightyellow}\textbf{IDEAL} (ours) 
    & \textcolor{DeepRed}{\textbf{95.9}}& \textcolor{DeepRed}{\textbf{97.8}}& \textcolor{DeepRed}{\textbf{93.5}}& \textcolor{DeepRed}{\textbf{97.5}}& \textcolor{DeepRed}{\textbf{77.8}}& \textcolor{DeepRed}{\textbf{65.1}} \\
    \hline\hline
    & WinCLIP~\cite{jeong2023winclip} 
    & 89.6& 91.4& 78.2& 96.0& 70.2& 49.4 \\
    & InCTRL~\cite{zhu2024toward} 
    & 94.3& 96.5& 91.4& 94.9& 77.9& 59.2 \\
    & ResAD~\cite{yao2024resad} 
    & 83.6& 83.0& 75.1& 95.5& 76.2& 47.3 \\
    & FoundAD~\cite{zhai2026foundation} 
    & \textcolor{blue}{\textbf{94.6}}& 93.9& 90.2& 95.1& \textcolor{blue}{\textbf{78.5}}& 60.6 \\
    & RegAD~\cite{huang2022registration} 
    & 92.2& 93.0& 91.9& \textcolor{blue}{\textbf{96.5}}& 76.6& 53.3 \\
    \multirow{-6}{*}{\makecell[c]{\texttt{N8}}} & PromptAD~\cite{li2024promptad} 
    & 91.2& 92.6& 90.9& 94.1& 76.1& 61.8 \\
    \hline
    & DRA~\cite{ding2022catching} 
    & 93.0& 92.2& 91.8& 93.9& 74.1& 61.3 \\
    \multirow{-2}{*}{\makecell[c]{\texttt{A4}}} & AHL~\cite{zhu2024anomaly} 
    & 93.3& 92.0& 92.4& 94.5& 75.8& 61.9 \\
    \hline
    & NAGL~\cite{wang2025normal} 
    & 94.0& \textcolor{blue}{\textbf{97.6}}& \textcolor{blue}{\textbf{93.8}}& \textcolor{blue}{\textbf{96.5}}& 77.7& \textcolor{blue}{\textbf{62.8}} \\
    \multirow{-2}{*}{\makecell[c]{\texttt{N8A4}}} & \cellcolor{lightyellow}\textbf{IDEAL} (ours) 
    & \textcolor{DeepRed}{\textbf{96.2}}& \textcolor{DeepRed}{\textbf{98.1}}& \textcolor{DeepRed}{\textbf{93.9}}& \textcolor{DeepRed}{\textbf{97.7}}& \textcolor{DeepRed}{\textbf{78.9}}& \textcolor{DeepRed}{\textbf{66.3}} \\
    \hline\thickhline
\end{tabular}
}
\endgroup
\end{table*}

\begin{table*}[t!]
\caption{\textbf{Comparison results on BraTS} under various few-shot AD settings. Best results are in \textcolor{DeepRed}{\textbf{red bold}}, with second best results in \textcolor{blue}{\textbf{blue bold}}.}
\label{tab:tab_add_brats}
\vspace{1.5mm}
\centering
\begingroup
\setlength{\tabcolsep}{10pt} 
\renewcommand{\arraystretch}{1.1} 
\setlength{\arrayrulewidth}{0.1mm} 
\resizebox{\linewidth}{!}{%
\begin{tabular}{cl I I c I c I c I c I c I c}
    \hline\thickhline
    \rowcolor{mygray} 
    & & \multicolumn{3}{c I}{\textsc{\textbf{Image-level}}} 
    & \multicolumn{3}{c}{\textsc{\textbf{Pixel-level}}} \\
    \cline{3-5} \cline{6-8} 
    \rowcolor{mygray} 
    \multirow{-2}{*}{\centering \textsc{\textbf{Setups}}} 
    & \multirow{-2}{*}{\centering \textsc{\textbf{Methods}}} 
    & AUROC & AP & F1-Max & AUROC & PRO & F1-Max \\
    \hline\hline
    & WinCLIP~\cite{jeong2023winclip} 
    & 68.8& 89.4& 90.9& 91.0& 67.6& 25.9 \\
    & InCTRL~\cite{zhu2024toward} 
    & 69.5& 89.2& 90.3& 93.1& 70.0& \textcolor{blue}{\textbf{36.7}} \\
    & ResAD~\cite{yao2024resad} 
    & 59.7& 83.1& 90.6& 89.5& 63.9& 23.8 \\
    & FoundAD~\cite{zhai2026foundation} 
    & 62.7& \textcolor{DeepRed}{\textbf{90.3}}& 90.7& 87.8& 65.3& 29.4 \\
    & RegAD~\cite{huang2022registration} 
    & 64.2& 88.3& \textcolor{DeepRed}{\textbf{91.6}}& 92.2& 72.8& 35.1 \\
    \multirow{-6}{*}{\makecell[c]{\texttt{N1}}} & PromptAD~\cite{li2024promptad} 
    & 68.7& 89.1& 90.7& 91.9& 70.9& 34.7 \\
    \hline
    & DRA~\cite{ding2022catching} 
    & 67.2& 83.9& 90.6& 87.5& 71.6& 28.7 \\
    \multirow{-2}{*}{\makecell[c]{\texttt{A1}}} & AHL~\cite{zhu2024anomaly} 
    & \textcolor{blue}{\textbf{70.4}}& 88.3& 91.1& 90.4& 71.1& 32.7 \\
    \hline
    & NAGL~\cite{wang2025normal} 
    & 67.7& 88.5& 91.2& \textcolor{blue}{\textbf{93.7}}& \textcolor{blue}{\textbf{75.2}}& 33.8 \\
    \multirow{-2}{*}{\makecell[c]{\texttt{N1A1}}} & \cellcolor{lightyellow}\textbf{IDEAL} (ours) 
    & \textcolor{DeepRed}{\textbf{74.9}}& \textcolor{blue}{\textbf{90.2}}& \textcolor{blue}{\textbf{91.4}}& \textcolor{DeepRed}{\textbf{94.8}}& \textcolor{DeepRed}{\textbf{76.3}}& \textcolor{DeepRed}{\textbf{40.1}} \\
    \hline\hline
    & WinCLIP~\cite{jeong2023winclip} 
    & 74.8& \textcolor{blue}{\textbf{92.4}}& 91.1& 92.7& 72.0& 30.8 \\
    & InCTRL~\cite{zhu2024toward} 
    & \textcolor{blue}{\textbf{75.0}}& 91.0& 90.3& 94.1& 74.9& \textcolor{blue}{\textbf{41.3}} \\
    & ResAD~\cite{yao2024resad} 
    & 65.4& 86.7& 90.8& 92.2& 69.6& 35.4 \\
    & FoundAD~\cite{zhai2026foundation} 
    & 72.5& 90.8& 91.1& 91.9& 69.4& 33.1 \\
    & RegAD~\cite{huang2022registration} 
    & 70.7& 89.1& \textcolor{blue}{\textbf{91.8}}& 94.1& 73.4& 39.5 \\
    \multirow{-6}{*}{\makecell[c]{\texttt{N2}}} & PromptAD~\cite{li2024promptad} 
    & 74.4& 90.5& 91.0& 93.7& 74.2& 41.1 \\
    \hline
    & NAGL~\cite{wang2025normal} 
    & 74.6& 91.1& 91.7& \textcolor{blue}{\textbf{94.3}}& \textcolor{blue}{\textbf{77.2}}& 39.1 \\
    \multirow{-2}{*}{\makecell[c]{\texttt{N2A1}}} & \cellcolor{lightyellow}\textbf{IDEAL} (ours) 
    & \textcolor{DeepRed}{\textbf{78.5}}& \textcolor{DeepRed}{\textbf{93.4}}& \textcolor{DeepRed}{\textbf{91.9}}& \textcolor{DeepRed}{\textbf{95.6}}& \textcolor{DeepRed}{\textbf{77.4}}& \textcolor{DeepRed}{\textbf{43.9}} \\
    \hline\hline
    & WinCLIP~\cite{jeong2023winclip} 
    & 75.7& 92.4& 91.5& 93.2& 72.6& 32.8 \\
    & InCTRL~\cite{zhu2024toward} 
    & \textcolor{blue}{\textbf{77.8}}& \textcolor{blue}{\textbf{93.2}}& 91.2& 93.8& 75.8& 41.6 \\
    & ResAD~\cite{yao2024resad} 
    & 69.6& 87.8& 91.7& 93.0& 72.2& 40.3 \\
    & FoundAD~\cite{zhai2026foundation} 
    & 73.5& 91.2& 91.7& 92.8& 72.6& 39.3 \\
    & RegAD~\cite{huang2022registration} 
    & 75.8& 89.6& \textcolor{blue}{\textbf{92.0}}& \textcolor{blue}{\textbf{94.5}}& 76.5& 40.1 \\
    \multirow{-6}{*}{\makecell[c]{\texttt{N4}}} & PromptAD~\cite{li2024promptad} 
    & 74.5& 91.1& 90.7& 94.0& 74.9& \textcolor{blue}{\textbf{42.3}} \\
    \hline
    & NAGL~\cite{wang2025normal} 
    & 74.9& 91.5& 91.8& 94.4& \textcolor{blue}{\textbf{78.6}}& 41.8 \\
    \multirow{-2}{*}{\makecell[c]{\texttt{N4A1}}} & \cellcolor{lightyellow}\textbf{IDEAL} (ours) 
    & \textcolor{DeepRed}{\textbf{80.5}}& \textcolor{DeepRed}{\textbf{94.1}}& \textcolor{DeepRed}{\textbf{92.1}}& \textcolor{DeepRed}{\textbf{95.8}}& \textcolor{DeepRed}{\textbf{79.2}}& \textcolor{DeepRed}{\textbf{46.6}} \\
    \hline\hline
    & WinCLIP~\cite{jeong2023winclip} 
    & 76.9& 92.6& 92.0& 93.8& 73.1& 33.1 \\
    & InCTRL~\cite{zhu2024toward} 
    & \textcolor{blue}{\textbf{79.5}}& \textcolor{blue}{\textbf{93.9}}& 91.6& 94.9& 76.6& \textcolor{blue}{\textbf{47.1}} \\
    & ResAD~\cite{yao2024resad} 
    & 71.7& 89.2& 91.9& 94.3& 73.2& 42.4 \\
    & FoundAD~\cite{zhai2026foundation} 
    & 76.7& 91.8& \textcolor{blue}{\textbf{92.1}}& 93.5& 73.4& 40.8  \\
    & RegAD~\cite{huang2022registration} 
    & 76.3& 89.8& 92.0& 95.0& \textcolor{blue}{\textbf{79.2}}& 45.2 \\
    \multirow{-6}{*}{\makecell[c]{\texttt{N8}}} & PromptAD~\cite{li2024promptad} 
    & 75.6& 92.8& 91.0& 94.6& 76.4& 43.9 \\
    \hline
    & DRA~\cite{ding2022catching} 
    & 73.5& 87.6& 91.7& 89.9& 72.7& 38.1 \\
    \multirow{-2}{*}{\makecell[c]{\texttt{A4}}} & AHL~\cite{zhu2024anomaly} 
    & 72.9& 91.1& 91.3& 90.3& 75.2& 35.4 \\
    \hline
    & NAGL~\cite{wang2025normal} 
    & 75.1& 92.4& 92.0 & \textcolor{blue}{\textbf{95.2}}& 78.9& 41.3 \\
    \multirow{-2}{*}{\makecell[c]{\texttt{N8A4}}} & \cellcolor{lightyellow}\textbf{IDEAL} (ours) 
    & \textcolor{DeepRed}{\textbf{82.8}}& \textcolor{DeepRed}{\textbf{94.2}}& \textcolor{DeepRed}{\textbf{92.4}}& \textcolor{DeepRed}{\textbf{96.9}}& \textcolor{DeepRed}{\textbf{79.5}}& \textcolor{DeepRed}{\textbf{53.1}} \\
    \hline\thickhline
\end{tabular}
}
\endgroup
\end{table*}

\begin{table*}[t!]
\caption{\textbf{Comparison results on Liver} under various few-shot AD settings. Best results are in \textcolor{DeepRed}{\textbf{red bold}}, with second best results in \textcolor{blue}{\textbf{blue bold}}.}
\label{tab:tab_add_liver}
\vspace{1.5mm}
\centering
\begingroup
\setlength{\tabcolsep}{10pt} 
\renewcommand{\arraystretch}{1.1} 
\setlength{\arrayrulewidth}{0.1mm} 
\resizebox{\linewidth}{!}{%
\begin{tabular}{cl I I c I c I c I c I c I c}
    \hline\thickhline
    \rowcolor{mygray} 
    & & \multicolumn{3}{c I}{\textsc{\textbf{Image-level}}} 
    & \multicolumn{3}{c}{\textsc{\textbf{Pixel-level}}} \\
    \cline{3-5} \cline{6-8} 
    \rowcolor{mygray} 
    \multirow{-2}{*}{\centering \textsc{\textbf{Setups}}} 
    & \multirow{-2}{*}{\centering \textsc{\textbf{Methods}}} 
    & AUROC & AP & F1-Max & AUROC & PRO & F1-Max \\
    \hline\hline
    & WinCLIP~\cite{jeong2023winclip} 
    & 59.5& 57.2& 63.2& \textcolor{blue}{\textbf{96.2}}& \textcolor{blue}{\textbf{92.0}}& 21.9 \\
    & InCTRL~\cite{zhu2024toward} 
    & 59.7& 55.1& 60.2& 96.0& 90.8& 22.1 \\
    & ResAD~\cite{yao2024resad} 
    & 53.3& 55.8& 57.8& 92.7& 88.3& 11.4 \\
    & FoundAD~\cite{zhai2026foundation} 
    & 58.7& 57.1& 63.7& 93.8& 89.1& 19.2 \\
    & RegAD~\cite{huang2022registration} 
    & 59.0& \textcolor{DeepRed}{\textbf{57.9}}& 61.9& 94.0& 85.1& 17.2 \\
    \multirow{-6}{*}{\makecell[c]{\texttt{N1}}} & PromptAD~\cite{li2024promptad} 
    & 58.3& 54.6& \textcolor{blue}{\textbf{65.6}}& 94.3& 83.4& 18.5 \\
    \hline
    & DRA~\cite{ding2022catching} 
    & 56.4& 54.4& 63.7& 93.5& 83.7& \textcolor{blue}{\textbf{24.5}} \\
    \multirow{-2}{*}{\makecell[c]{\texttt{A1}}} & AHL~\cite{zhu2024anomaly} 
    & \textcolor{blue}{\textbf{60.2}}& 55.8& 64.1& 93.7& 80.1& 21.6 \\
    \hline
    & NAGL~\cite{wang2025normal} 
    & 59.9& 57.6& 64.0& 94.8& 87.9& 17.5 \\
    \multirow{-2}{*}{\makecell[c]{\texttt{N1A1}}} & \cellcolor{lightyellow}\textbf{IDEAL} (ours) 
    & \textcolor{DeepRed}{\textbf{62.7}}& \textcolor{blue}{\textbf{57.8}}& \textcolor{DeepRed}{\textbf{65.9}}& \textcolor{DeepRed}{\textbf{97.2}}& \textcolor{DeepRed}{\textbf{93.5}}& \textcolor{DeepRed}{\textbf{25.6}} \\
    \hline\hline
    & WinCLIP~\cite{jeong2023winclip} 
    & 61.6& 57.6& 66.6& \textcolor{blue}{\textbf{96.5}}& \textcolor{blue}{\textbf{92.1}}& 22.4 \\
    & InCTRL~\cite{zhu2024toward} 
    & 63.7& 58.7& \textcolor{blue}{\textbf{66.8}}& 96.1& 90.8& \textcolor{blue}{\textbf{26.5}} \\
    & ResAD~\cite{yao2024resad} 
    & 58.6& 60.3& 64.6& 92.9& 89.5& 15.8 \\
    & FoundAD~\cite{zhai2026foundation} 
    & 63.9& 59.7& 65.9& 94.0& 90.3& 21.6 \\
    & RegAD~\cite{huang2022registration} 
    & 62.1& 58.4& 62.5& 94.2& 85.7& 17.7 \\
    \multirow{-6}{*}{\makecell[c]{\texttt{N2}}} & PromptAD~\cite{li2024promptad} 
    & 62.5& 58.6& 66.1& 94.4& 84.2& 19.7 \\
    \hline
    & NAGL~\cite{wang2025normal} 
    & \textcolor{blue}{\textbf{67.2}}& \textcolor{blue}{\textbf{66.3}}& 64.5& 95.0& 90.3& 18.6 \\
    \multirow{-2}{*}{\makecell[c]{\texttt{N2A1}}} & \cellcolor{lightyellow}\textbf{IDEAL} (ours) 
    & \textcolor{DeepRed}{\textbf{72.4}}& \textcolor{DeepRed}{\textbf{70.3}}& \textcolor{DeepRed}{\textbf{68.1}}& \textcolor{DeepRed}{\textbf{97.7}}& \textcolor{DeepRed}{\textbf{94.9}}& \textcolor{DeepRed}{\textbf{26.9}} \\
    \hline\hline
    & WinCLIP~\cite{jeong2023winclip} 
    & 63.7& 58.0& 67.4& \textcolor{blue}{\textbf{96.8}}& \textcolor{blue}{\textbf{92.5}}& 24.1 \\
    & InCTRL~\cite{zhu2024toward} 
    & 67.9& 63.2& \textcolor{blue}{\textbf{69.5}}& 96.3& 92.1& \textcolor{blue}{\textbf{27.2}} \\
    & ResAD~\cite{yao2024resad} 
    & 65.4& 64.5& 67.3& 94.7& 91.2& 18.1 \\
    & FoundAD~\cite{zhai2026foundation} 
    & 68.7& 64.1& 66.0& 94.2& 90.6& 21.9 \\
    & RegAD~\cite{huang2022registration} 
    & 69.6& 59.3& 63.2& 95.0& 89.3& 25.2 \\
    \multirow{-6}{*}{\makecell[c]{\texttt{N4}}} & PromptAD~\cite{li2024promptad} 
    & 68.9& 59.1& 68.1& 94.2& 84.9& 20.1 \\
    \hline
    & NAGL~\cite{wang2025normal} 
    & \textcolor{blue}{\textbf{70.4}}& \textcolor{blue}{\textbf{67.7}}& 68.0& 95.2& 91.4& 19.1 \\
    \multirow{-2}{*}{\makecell[c]{\texttt{N4A1}}} & \cellcolor{lightyellow}\textbf{IDEAL} (ours) 
    & \textcolor{DeepRed}{\textbf{73.9}}& \textcolor{DeepRed}{\textbf{70.8}}& \textcolor{DeepRed}{\textbf{71.5}}& \textcolor{DeepRed}{\textbf{97.7}}& \textcolor{DeepRed}{\textbf{95.3}}& \textcolor{DeepRed}{\textbf{28.4}} \\
    \hline\hline
    & WinCLIP~\cite{jeong2023winclip} 
    & 65.1& 58.4& 68.6& \textcolor{blue}{\textbf{96.8}}& \textcolor{blue}{\textbf{93.1}}& 26.2 \\
    & InCTRL~\cite{zhu2024toward} 
    & 68.1& 64.8& 70.0& 96.5& 92.4& \textcolor{blue}{\textbf{27.4}} \\
    & ResAD~\cite{yao2024resad} 
    & 69.7& 65.1& 67.6& 96.5& 91.8& 19.6 \\
    & FoundAD~\cite{zhai2026foundation} 
    & 70.9& 67.0& 69.2& 94.3& 91.1& 23.4 \\
    & RegAD~\cite{huang2022registration} 
    & 73.7& 61.1& 63.5& 95.3& 90.4& 27.3 \\
    \multirow{-6}{*}{\makecell[c]{\texttt{N8}}} & PromptAD~\cite{li2024promptad} 
    & 73.5& 70.2& 70.7& 94.9& 84.7& 21.7 \\
    \hline
    & DRA~\cite{ding2022catching} 
    & 70.1& 69.9& 65.9& 93.8& 85.9& 25.2 \\
    \multirow{-2}{*}{\makecell[c]{\texttt{A4}}} & AHL~\cite{zhu2024anomaly} 
    & 72.2& 68.5& 69.7& 93.1& 89.7& 23.9 \\
    \hline
    & NAGL~\cite{wang2025normal} 
    & \textcolor{blue}{\textbf{74.9}}& \textcolor{blue}{\textbf{73.9}}& \textcolor{blue}{\textbf{70.8}}& 95.5& 92.5& 19.7 \\
    \multirow{-2}{*}{\makecell[c]{\texttt{N8A4}}} & \cellcolor{lightyellow}\textbf{IDEAL} (ours) 
    & \textcolor{DeepRed}{\textbf{77.4}}& \textcolor{DeepRed}{\textbf{74.0}}& \textcolor{DeepRed}{\textbf{71.7}}& \textcolor{DeepRed}{\textbf{97.9}}& \textcolor{DeepRed}{\textbf{95.5}}& \textcolor{DeepRed}{\textbf{30.1}} \\
    \hline\thickhline
\end{tabular}
}
\endgroup
\end{table*}

\begin{table*}[t!]
\caption{\textbf{Comparison results on RESC} under various few-shot AD settings. Best results are in \textcolor{DeepRed}{\textbf{red bold}}, with second best results in \textcolor{blue}{\textbf{blue bold}}.}
\label{tab:tab_add_resc}
\vspace{1.5mm}
\centering
\begingroup
\setlength{\tabcolsep}{10pt} 
\renewcommand{\arraystretch}{1.1} 
\setlength{\arrayrulewidth}{0.1mm} 
\resizebox{\linewidth}{!}{%
\begin{tabular}{cl I I c I c I c I c I c I c}
    \hline\thickhline
    \rowcolor{mygray} 
    & & \multicolumn{3}{c I}{\textsc{\textbf{Image-level}}} 
    & \multicolumn{3}{c}{\textsc{\textbf{Pixel-level}}} \\
    \cline{3-5} \cline{6-8} 
    \rowcolor{mygray} 
    \multirow{-2}{*}{\centering \textsc{\textbf{Setups}}} 
    & \multirow{-2}{*}{\centering \textsc{\textbf{Methods}}} 
    & AUROC & AP & F1-Max & AUROC & PRO & F1-Max \\
    \hline\hline
    & WinCLIP~\cite{jeong2023winclip} 
    & 69.2& 65.9& 64.0& 90.8& 72.8& 42.1 \\
    & InCTRL~\cite{zhu2024toward} 
    & \textcolor{blue}{\textbf{80.0}}& 72.8& \textcolor{blue}{\textbf{74.5}}& 91.5& \textcolor{blue}{\textbf{75.8}}& 44.2 \\
    & ResAD~\cite{yao2024resad} 
    & 64.1& 57.7& 61.7& 80.1& 59.2& 28.3 \\
    & FoundAD~\cite{zhai2026foundation} 
    & 78.5& 73.2& 72.9& 87.1& 61.1& 45.9 \\
    & RegAD~\cite{huang2022registration} 
    & 76.1& 72.6& 71.2& 88.3& 69.8& 41.9 \\
    \multirow{-6}{*}{\makecell[c]{\texttt{N1}}} & PromptAD~\cite{li2024promptad} 
    & 77.4& 75.3& 68.6& 91.2& 69.3& \textcolor{blue}{\textbf{46.8}} \\
    \hline
    & DRA~\cite{ding2022catching} 
    & 76.4& 74.7& 69.6& 86.4& 69.8& 43.9 \\
    \multirow{-2}{*}{\makecell[c]{\texttt{A1}}} & AHL~\cite{zhu2024anomaly} 
    & 75.5& 74.6& 70.4& 85.8& 68.9& 44.9 \\
    \hline
    & NAGL~\cite{wang2025normal} 
    & 78.9& \textcolor{blue}{\textbf{75.9}}& 71.6& \textcolor{blue}{\textbf{91.7}}& 73.4& 46.6 \\
    \multirow{-2}{*}{\makecell[c]{\texttt{N1A1}}} & \cellcolor{lightyellow}\textbf{IDEAL} (ours) 
    & \textcolor{DeepRed}{\textbf{82.8}}& \textcolor{DeepRed}{\textbf{79.2}}& \textcolor{DeepRed}{\textbf{75.9}}& \textcolor{DeepRed}{\textbf{92.5}}& \textcolor{DeepRed}{\textbf{77.0}}& \textcolor{DeepRed}{\textbf{48.1}} \\
    \hline\hline
    & WinCLIP~\cite{jeong2023winclip} 
    & 73.7& 66.0& 64.5& 90.9& 73.6& 42.9 \\
    & InCTRL~\cite{zhu2024toward} 
    & 80.3& 74.8& \textcolor{blue}{\textbf{75.0}}& 92.1& \textcolor{blue}{\textbf{76.8}}& 46.8 \\
    & ResAD~\cite{yao2024resad} 
    & 69.3& 60.5& 62.4& 81.3& 63.1& 31.4 \\
    & FoundAD~\cite{zhai2026foundation} 
    & 79.0& 77.1& 73.8& 90.2& 62.7& 48.3 \\
    & RegAD~\cite{huang2022registration} 
    & 77.4& 73.4& 73.6& 89.7& 71.2& 46.4 \\
    \multirow{-6}{*}{\makecell[c]{\texttt{N2}}} & PromptAD~\cite{li2024promptad} 
    & 78.2& 75.6& 68.7& 91.6& 70.5& 47.2 \\
    \hline
    & NAGL~\cite{wang2025normal} 
    & \textcolor{blue}{\textbf{80.4}}& \textcolor{blue}{\textbf{81.7}}& 73.7& \textcolor{blue}{\textbf{92.2}}& 75.6& \textcolor{blue}{\textbf{49.1}} \\
    \multirow{-2}{*}{\makecell[c]{\texttt{N2A1}}} & \cellcolor{lightyellow}\textbf{IDEAL} (ours) 
    & \textcolor{DeepRed}{\textbf{84.4}}& \textcolor{DeepRed}{\textbf{83.2}}& \textcolor{DeepRed}{\textbf{76.4}}& \textcolor{DeepRed}{\textbf{93.5}}& \textcolor{DeepRed}{\textbf{77.8}}& \textcolor{DeepRed}{\textbf{50.7}} \\
    \hline\hline
    & WinCLIP~\cite{jeong2023winclip} 
    & 75.2& 70.5& 68.4& 91.3& 74.1& 45.8 \\
    & InCTRL~\cite{zhu2024toward} 
    & \textcolor{blue}{\textbf{86.5}}& 79.7& \textcolor{blue}{\textbf{80.2}}& \textcolor{blue}{\textbf{92.8}}& \textcolor{blue}{\textbf{79.2}}& 49.1 \\
    & ResAD~\cite{yao2024resad} 
    & 73.7& 67.1& 62.7& 85.3& 65.2& 35.3 \\
    & FoundAD~\cite{zhai2026foundation} 
    & 81.5& \textcolor{blue}{\textbf{83.1}}& 74.4& 90.8& 68.3& 49.2 \\
    & RegAD~\cite{huang2022registration} 
    & 83.3& 75.9& 74.2& 91.4& 74.6& 47.3 \\
    \multirow{-6}{*}{\makecell[c]{\texttt{N4}}} & PromptAD~\cite{li2024promptad} 
    & 81.3& 75.7& 68.9& 92.2& 71.5& 48.1 \\
    \hline
    & NAGL~\cite{wang2025normal} 
    & 84.8& 82.0& 75.6& 92.6& 79.0& \textcolor{blue}{\textbf{50.7}} \\
    \multirow{-2}{*}{\makecell[c]{\texttt{N4A1}}} & \cellcolor{lightyellow}\textbf{IDEAL} (ours) 
    & \textcolor{DeepRed}{\textbf{89.4}}& \textcolor{DeepRed}{\textbf{85.6}}& \textcolor{DeepRed}{\textbf{81.3}}& \textcolor{DeepRed}{\textbf{94.9}}& \textcolor{DeepRed}{\textbf{79.4}}& \textcolor{DeepRed}{\textbf{51.2}} \\
    \hline\hline
    & WinCLIP~\cite{jeong2023winclip} 
    & 76.7& 75.6& 69.4& 92.4& 74.6& 46.3 \\
    & InCTRL~\cite{zhu2024toward} 
    & \textcolor{blue}{\textbf{86.8}}& 80.7& \textcolor{blue}{\textbf{81.0}}& 93.6& \textcolor{blue}{\textbf{79.6}}& 50.1 \\
    & ResAD~\cite{yao2024resad} 
    & 74.8& 71.1& 63.3& 87.6& 69.3& 38.6 \\
    & FoundAD~\cite{zhai2026foundation} 
    & 84.5& 83.6& 78.3& 91.3& 69.5& \textcolor{blue}{\textbf{52.1}} \\
    & RegAD~\cite{huang2022registration} 
    & 83.9& 78.5& 75.9& 92.6& 72.8& 49.8 \\
    \multirow{-6}{*}{\makecell[c]{\texttt{N8}}} & PromptAD~\cite{li2024promptad} 
    & 83.3& 76.1& 72.8& 93.2& 73.9& 49.3 \\
    \hline
    & DRA~\cite{ding2022catching} 
    & 82.1& 78.4& 70.1& 89.3& 71.2& 50.4 \\
    \multirow{-2}{*}{\makecell[c]{\texttt{A4}}} & AHL~\cite{zhu2024anomaly} 
    & 85.7& 78.9& 70.4& 89.8& 73.4& 49.9 \\
    \hline
    & NAGL~\cite{wang2025normal} 
    & 85.2& \textcolor{blue}{\textbf{85.1}}& 77.6& \textcolor{blue}{\textbf{94.0}}& 79.2& 51.4 \\
    \multirow{-2}{*}{\makecell[c]{\texttt{N8A4}}} & \cellcolor{lightyellow}\textbf{IDEAL} (ours) 
    & \textcolor{DeepRed}{\textbf{90.6}}& \textcolor{DeepRed}{\textbf{86.8}}& \textcolor{DeepRed}{\textbf{83.7}}& \textcolor{DeepRed}{\textbf{95.6}}& \textcolor{DeepRed}{\textbf{79.9}}& \textcolor{DeepRed}{\textbf{56.8}} \\
    \hline\thickhline
\end{tabular}
}
\endgroup
\end{table*}
\end{document}